\theoremstyle{plain}
\newtheorem{theorem}{Theorem}[section]
\newtheorem{proposition}[theorem]{Proposition}
\newtheorem{lemma}[theorem]{Lemma}
\theoremstyle{definition}
\newtheorem{definition}[theorem]{Definition}
\theoremstyle{remark}
\def\c{cr}
\def\hc{\hat{cr}}
\def\ind{\mathbbm{1}}
\def\E{\mathbb{E}}
\def\R{\mathbb{R}}
\def\dc{Data\_Copy\_Detect}
\def\a{\kappa}
\def\hq{\widehat{q(B(x, r))}}
\def\alpaca{\epsilon}
\icmltitlerunning{Data-copying}
\begin{document}

\twocolumn[
\icmltitle{Data-Copying in Generative Models: A Formal Framework}



\icmlsetsymbol{equal}{*}

\begin{icmlauthorlist}
\icmlauthor{Robi Bhattacharjee}{yyy}
\icmlauthor{Sanjoy Dasgupta}{yyy}
\icmlauthor{Kamalika Chaudhuri}{yyy}
\end{icmlauthorlist}

\icmlaffiliation{yyy}{University of California, San Diego}

\icmlcorrespondingauthor{Robi Bhattacharjee}{rcbhatta@eng.ucsd.edu}

\icmlkeywords{Machine Learning, ICML}

\vskip 0.3in
]



\printAffiliationsAndNotice{\icmlEqualContribution} 

\begin{abstract}
There has been some recent interest in detecting and addressing memorization of training data by deep neural networks. A formal framework for memorization in generative models, called ``data-copying'' was proposed by Meehan et. al (2020). We build upon their work to show that their framework may fail to detect certain kinds of blatant memorization. Motivated by this and the theory of non-parametric methods, we provide an alternative definition of data-copying that applies more locally. We provide a method to detect data-copying, and provably show that it works with high probability when enough data is available. We also provide lower bounds that characterize the sample requirement for reliable detection.
\end{abstract}

\section{Introduction}

Deep generative models have shown impressive performance. However, given how large, diverse, and uncurated their training sets are, a big question is whether, how often, and how closely they are memorizing their training data. This question has been of considerable interest in generative modeling~\citep{lopez2016revisiting,XHYGSWK18} as well as supervised learning~\citep{BBFST21, Feldman20}. However, a clean and formal definition of memorization that captures the numerous complex aspects of the problem, particularly in the context of continuous data such as images, has largely been elusive.

For generative models,~\cite{MCD2020} proposed a formal definition of memorization called ``data-copying'', and showed that it was orthogonal to various prior notions of overfitting such as mode collapse~\citep{TT20}, mode dropping~\citep{YFWYC20}, and precision-recall~\citep{SBLBG18}. Specifically, their definition looks at three datasets -- a training set, a set of generated example, and an independent test set. Data-copying happens when the training points are considerably closer on average to the generated data points than to an independently drawn test sample. Otherwise, if the training points are further on average to the generated points than test, then there is underfitting. They proposed a three sample test to detect this kind of data-copying, and empirically showed that their test had good performance.

\begin{figure}[ht]
	\includegraphics[width=.45\textwidth]{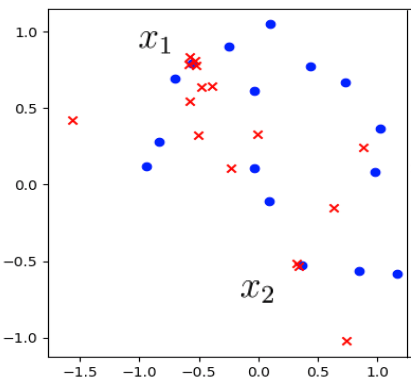}
	\caption{In this figure, the blue points are sampled from the halfmoons dataset (with Gaussian noise). The red points are sampled from a generated distribution that is a mixture of (40 \%) blatant data copier (that outputs a random subset of the training set), and (60 \%) a noisy underfit version of halfmoons. Although the generated distribution is clearly doing some form of copying at points $x_1$ and $x_2$, detecting this is challenging because of the canceling effect of the underfit points.}
	
	\label{fig:page_2_figure}
\end{figure}


However, despite its practical success, this method may not capture even blatant cases of memorization. To see this, consider the example illustrated in Figure \ref{fig:page_2_figure}, in which a generated model for the halfmoons dataset outputs one of its training points with probability $0.4$, and otherwise outputs a random point from an underfit distribution. When the test of~\cite{MCD2020} is applied to this distribution, it is unable to detect any form of data copying; the generated samples drawn from the underfit distribution are sufficient to cancel out the effect of the memorized examples. Nevertheless, this generative model is clearly an egregious memorizer as shown in points $x_1$ and $x_2$ of Figure \ref{fig:page_2_figure}.

This example suggests a notion of \textit{point-wise} data copying, where a model $q$ can be thought of as copying a given training point $x$. Such a notion would be able to detect $q$'s behavior nearby $x_1$ and $x_2$ regardless of the confounding samples that appear at a global level. This stands in contrast to the more global distance based approach taken in Meehan et. al. which is unable to detect such instances. Motivated by this, we propose an alternative point-by-point approach to defining data-copying.

We say that a generative model $q$  data-copies an individual training point, $x$, if it has an unusually high concentration in a small area centered at $x$. Intuitively, this implies $q$ is highly likely to output examples that are very similar to $x$. In the example above, this definition would flag $q$ as copying $x_1$ and $x_2$. 

To parlay this definition into a global measure of data-copying, we define the overall \textit{data-copying rate} as the total fraction of examples from $q$ that are copied from some training example. In the example above, this rate is $40\%$, as this is the fraction of examples that are blatant copies of the training data.

\begin{figure*}[ht]
    \subfloat{\includegraphics[width=.3\textwidth]{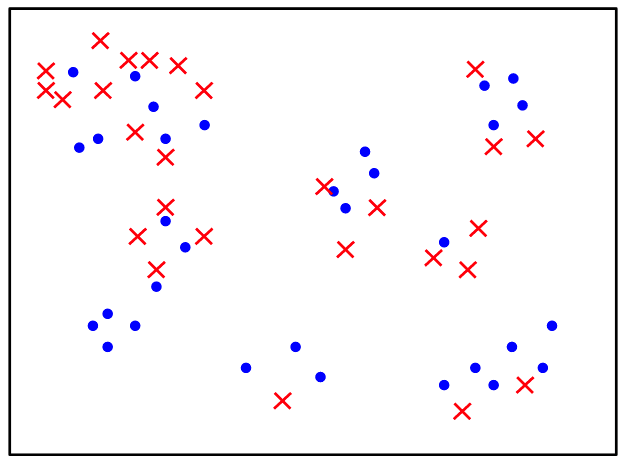}}\hfill
	\subfloat{\includegraphics[width=.3\textwidth]{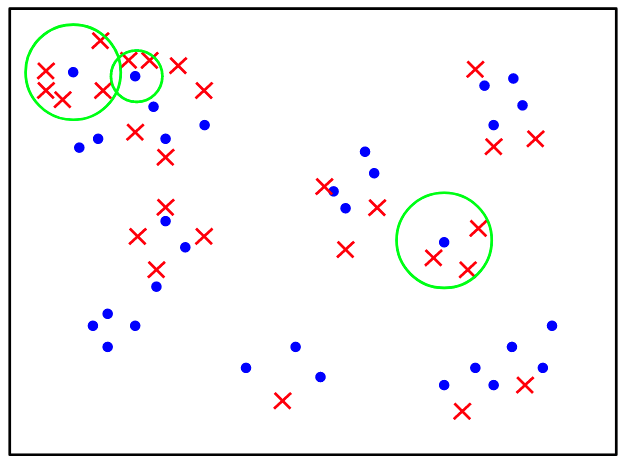}}\hfill
	\subfloat{\includegraphics[width=.3\textwidth]{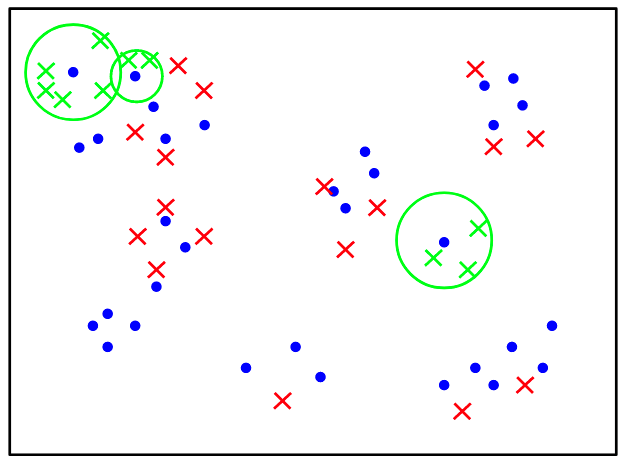}}
	\caption{In the three panels above, the blue points are a training sample from $p$, and the red points are generated examples from $q$. In the middle panel, we highlight in green regions that are defined to be \textit{data-copying regions}, as $q$ overrepresents them with comparison to $p$. In the third panel, we then color all points from $q$ that are considered to be copied green.}
	
	\label{fig:triptic}
\end{figure*}


Next, we consider how to detect data-copying according to this definition. To this end, we provide an algorithm, \dc{}, that outputs an estimate for the overall data-copying rate. We then show that under a natural smoothness assumption on the data distribution, which we call \textit{regularity}, \dc{} is able to guarantee an accurate estimate of the total data-copying rate. We then give an upper bound on the amount of data needed for doing so. 

We complement our algorithm with a lower bound on the minimum amount of a data needed for data-copying detection. Our lower bound also implies that some sort of smoothness condition (such as regularity) is necessary for guaranteed data-copying detection; otherwise, the required amount of data can be driven arbitrarily high.

\subsection{Related Work}

Recently, understanding failure modes for generative models has been an important growing body of work e.g. \citep{SGZCRC16, RW18, SBLBG18}. However, much of this work has been focused on other forms of overfitting, such as mode dropping or mode collapse.

A more related notion of overfitting is \textit{memorization} \citep{lopez2016revisiting,XHYGSWK18, C18}, in which a model outputs exact copies of its training data. This has been studied in both supervised \citep{BBFST21, Feldman20} and unsupervised \citep{BGWC21, CHCW21} contexts. Memorization has also been considered in language generation models \cite{Carlini22}. 

The first work to explicitly consider the more general notion of \textit{data-copying} is \citep{MCD2020}, which gives a three sample test for data-copy detection. We include an empirical comparison between our methods in Section \ref{sec:experiments}, where we demonstrate that ours is able to capture certain forms of data-copying that theirs is not. 

Finally, we note that this work focuses on detecting natural forms of memorization or data-copying, that likely arises out of poor generalization, and is not concerned with detecting \textit{adversarial} memorization or prompting, such as in \cite{Carlini19}, that are designed to obtain sensitive information about the training set. This is reflected in our definition and detection algorithm which look at the specific generative model, and not the algorithm that trains it.  Perhaps the best approach to prevent adversarial  memorization is training the model with differential privacy~\cite{Dwork06}, which ensures that the model does not change much when one training sample changes. However such solutions come at an utility cost. 

\section{A Formal Definition of Data-Copying}

We begin with the following question: what does it mean for a generated distribution $q$ to copy a single training example $x$? Intuitively, this means that $q$ is guilty of overfitting $x$ in some way, and consequently produces examples that are very similar to it. 

However, determining what constitutes a `very similar'  generated example must be done contextually. Otherwise the original data distribution, $p$, may itself be considered a copier, as it will output points nearby $x$ with some frequency depending on its density at $x$. Thus, we posit that $q$ data copies training point $x$ if it has a significantly higher concentration nearby $x$ than $p$ does. We express this in the following definition. 

\begin{definition}\label{defn:data_copy}
Let $p$ be a data distribution, $S \sim p^n$ a training sample, and $q$ be a generated distribution trained on $S$. Let $x \in S$ be a training  point, and let $\lambda > 1$ and $0 < \gamma < 1$ be constants. A generated example $x' \sim q$ is said to be a \textbf{$(\lambda, \gamma)$-copy} of $x$ if there exists a ball $B$ centered at $x$ (i.e. $\{x': ||x' - x|| \leq r\}$) such that following hold:
\begin{itemize}
	\item $x' \in B$.
	\item $q(B) \geq \lambda p(B)$
	\item $p(B) \leq \gamma$
\end{itemize}
\end{definition}

Here $q(B)$ and $p(B)$ denote the probability mass assigned to $B$ by $p$ and $q$ respectively.

The parameters $\lambda$ and $\gamma$ are user chosen parameters that characterize data-copying. $\lambda$ represents the rate at which $q$ must overrepresent points close to $x$, with higher values of $\lambda$ corresponding to more egregious examples of data-copying. $\gamma$ represents the maximum size (by probability mass) of a region that is considered to be data-copying -- the ball $B$ represents all points that are ``copies" of $x$. Together, $\lambda$ and $\gamma$ serve as practitioner controlled knobs that characterize data-copying about $x$.

Our definition is illustrated in Figure \ref{fig:triptic} -- the training data is shown in blue, and generated samples are shown in red. For each training point, we highlight a region (in green) about that point in which the red density is much higher than the blue density, thus constituting data-copying. The intuition for this is that the red points within any ball can be thought of as ``copies" of the blue point centered in the ball.

Having defined data-copying with respect to a single training example, we can naturally extend this notion to the entire training dataset. We say that $x' \sim q$ is copied from training set $S$ if $x'$ is a $(\lambda,\gamma)$-copy of some training example $x \in S$. We then define the \textit{data-copy rate} of $q$ as the fraction of examples it generates that are copied from $S$. Formally, we have the following: 

\begin{definition}
Let $p, S, q, \lambda,$ and $\gamma$ be as defined in Definition \ref{defn:data_copy}. Then the \textbf{data-copy rate}, $\c\left(q, \lambda, \gamma\right)$ of $q$ (with respect to $p, S$) is the fraction of examples from $q$ that are $(\lambda, \gamma)$-copied. That is, $$\c\left(q, \lambda, \gamma\right) = \Pr_{x' \sim q}[q\text{ }(\lambda,\gamma)\text{-copies }x'].$$ In cases where $\lambda, \gamma$ are fixed, we use $\c_q = \c(q, \lambda, \gamma)$ to denote the data-copy rate.
\end{definition}

Despite its seeming global nature, $\c_q$ is simply an aggregation of the point by point data-copying done by $q$ over its entire training set. As we will later see, estimating $\c_q$ is often reduced to determining which subset of the training data $q$ copies. 

\subsection{Examples of data-copying}

We now give several examples illustrating our definitions. In all cases, we let $p$ be a data distribution, $S$, a training sample from $p$, and $q$, a generated distribution that is trained over $S$. 

\paragraph{The uniform distribution over $S$:} In this example, $q$ is an egregious data copier that memorizes its training set and randomly outputs a training point. This can be considered as the canonical \textit{worst} data copier. This is reflected in the value of $\c_q$ -- if $p$ is a continuous distribution with finite probability density, then for any $x \in S$, there exists a ball $B$ centered at $x$ for which $q(B) >> p(B)$. It follows that $q$ $(\lambda,\gamma)$- copies $x$ for all $x \in S$ which implies that $\c_q = 1$.

\paragraph{The perfect generative model: $q = p$:} In this case, $q(B) = p(B)$ for all balls, $B$, which implies that $q$ does not perform any data-copying (Definition \ref{defn:data_copy}). It follows that $\c_q = 0$, matching the intuition that $q$ does not data-copy at all.

\paragraph{Kernel Density Estimators:} Finally, we consider a more general situation, where $q$ is trained by a \textit{kernel density estimator} (KDE) over $S \sim p^n$. Recall that a kernel density estimator outputs a generated distribution, $q$, with pdf defined by $$q(x) = \frac{1}{n\sigma_n}\sum_{x_i \in S} K\left(\frac{x - x_i}{\sigma_n}\right).$$ Here, $K$ is a kernel similarity function, and $\sigma_n$ is the bandwidth parameter. It is known that for $\sigma_n = O(n^{-1/5})$, $q$ converges towards $p$ for sufficiently well behaved probability distributions. 

Despite this guarantee, KDEs intuitively appear to perform some form of data-copying -- after all they implicitly include each training point in memory as it forms a portion of their outputted pdf. However, recall that our main focus is in understanding \textit{overfitting} due to data-copying. That is, we view data-copying as a function of the outputted pdf, $q$, and not of the training algorithm used. 

To this end, for KDEs the question of data-copying reduces to the question of whether $q$ overrepresents areas around its training points. As one would expect, this occurs \textit{before} we reach the large sample limit. This is expressed in the following theorem.

\begin{theorem}\label{thm:KDE}
Let $1 < \lambda$ and $\gamma > 0$. Let $\sigma_n$ be a sequence of bandwidths and $K$ be any regular kernel function. For any $n > 0$ there exists a probability distribution $\pi$ with full support over $\R^d$ such that with probability at least $\frac{1}{3}$ over $S \sim \pi^n$, a KDE trained with bandwidth $\sigma_n$ and kernel function $K$ has data-copy rate $\c_q \geq \frac{1}{10}$.
\end{theorem}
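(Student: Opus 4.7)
The plan is to construct $\pi$ adversarially, so diffuse on the scale of the kernel bandwidth $\sigma_n$ that the KDE's unavoidable mass of $\Theta(1/n)$ around each training atom dwarfs the true mass there. Because a KDE is a uniform mixture of kernels centered at $x_1,\dots,x_n$, each $x_i$ contributes roughly $1/n$ of probability to a ball of radius $O(\sigma_n)$ around itself \emph{regardless} of how the bandwidth is chosen; if $\pi$ is flat enough that the same ball has mass much less than $1/(\lambda n)$ under $\pi$, the ratio condition in Definition~\ref{defn:data_copy} is forced and every sample drawn from the kernel at $x_i$ is a $(\lambda,\gamma)$-copy of $x_i$. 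In fact the whole argument will turn out to be deterministic in $S$, so the $1/3$ probability threshold is easily vacuous.

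Concretely, I would first pick a radius $r^*$ with $\int_{B(0,r^*)} K(u)\,du \geq 1/2$, which exists for any regular kernel, and set $r_n := r^* \sigma_n$. I would then choose $\pi$ to be a full-support distribution on $\R^d$ whose sup-density $M$ satisfies $M \cdot V_d\, r_n^d \leq \min\!\bigl(\gamma,\, 1/(2\lambda n)\bigr)$, where $V_d$ is the unit-ball volume; for example a Gaussian $\mathcal{N}(0, \tau_n^2 I)$ with $\tau_n$ taken sufficiently large works, and note that $\pi$ is permitted to depend on $n$. For any realization $S = \{x_1,\dots,x_n\}$, the balls $B_i := B(x_i, r_n)$ then satisfy $p(B_i) \leq M V_d r_n^d \leq \gamma$ and, via the substitution $u = (x - x_i)/\sigma_n$, $q(B_i) \geq \tfrac{1}{n}\int_{B(0,r^*)} K \geq 1/(2n) \geq \lambda\, p(B_i)$, so each $B_i$ witnesses every one of its points as a $(\lambda,\gamma)$-copy of $x_i$.

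To finish, I would decompose a draw $x' \sim q$ as: first pick an index $j$ uniformly in $\{1,\dots,n\}$, then sample $x' = x_j + \sigma_n u$ with $u$ drawn from density $K$. Conditional on $j$, the event $x' \in B_j$ has probability $\int_{B(0,r^*)} K \geq 1/2$, and on that event $x'$ is a copy of $x_j$; unconditioning yields $\c_q \geq 1/2 \geq 1/10$ for \emph{every} sample $S$, immediately implying the probability-$1/3$ conclusion. There is no real obstacle: the key recognition is that the KDE's local over-concentration of mass $\Theta(1/n)$ at each training point is unavoidable irrespective of the bandwidth sequence, and by spreading $\pi$ arbitrarily thin we can always force the ratio blow-up demanded by Definition~\ref{defn:data_copy}. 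The only minor wrinkle is keeping $\pi$ of full support, which rules out a compactly-supported uniform but is trivially handled by any distribution with tunable scale such as a wide Gaussian.
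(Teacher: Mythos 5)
Your proposal is correct and follows essentially the same route as the paper's proof: fix a radius capturing half the kernel mass, spread $\pi$ so thin that each ball of radius $O(\sigma_n)$ around a training point has mass at most $\min(\gamma, 1/(2\lambda n))$, verify the two conditions of Definition \ref{defn:data_copy}, and use the mixture decomposition of the KDE to conclude $\c_q \geq 1/2$ deterministically in $S$. The only difference is cosmetic: the paper takes $\pi$ uniform over a large cube $[0,D]^d$ (which, strictly speaking, does not have full support over $\R^d$ as the statement demands), whereas your wide Gaussian handles the full-support requirement cleanly.
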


This theorem completes the picture for KDEs with regards to data-copying -- when $n$ is too low, it is possible for the KDE to have a significant amount of data-copying, but as $n$ continues to grow, this is eventually smoothed out.

\paragraph{The Halfmoons dataset}

Returning to the example given in Figure \ref{fig:page_2_figure}, observe that our definition exactly captures the notion of data-copying that occurs at points $x_1$ and $x_2$. For even strict choices of $\lambda$ and $\gamma$, Definition \ref{defn:data_copy} indicates that the red distribution copies both $x_1$ and $x_2$. Furthermore, the data-copy rate, $\c_q$, is $40\%$ by construction, as this is the proportion of points that are outputted nearby $x_1$ and $x_2$.

\subsection{Limitations of our definition}\label{sec:limitations}

Definition \ref{defn:data_copy} implicitly assumes that the goal of the generator is to output a distribution $q$ that approaches $p$ in a mathematical sense; a perfect generator would output $q$ so that $q(M) = p(M)$ for all measurable sets. In particular, instances where $q$ outputs examples that are far away from the training data are considered completely irrelevant in our definition.

This restriction prevents our definition from capturing instances in which $q$ memorizes its training data and then applies some sort of transformation to it. For example, consider an image generator that applies a color filter to its training data. This would not be considered a data-copier as its output would be quite far from the training data in pixel space. Nevertheless, such a generated distribution can be very reasonably considered as an egregious data copier, and a cursory investigation between its training data and its outputs would reveal as much. 

The key difference in this example is that the generative algorithm is no longer trying to closely approximate $p$ with $q$ -- it is rather trying to do so in some kind of transformed space. Capturing such interactions is beyond the scope of our paper, and we firmly restrict ourselves to the case where a generator is evaluated based on how close $q$ is to $p$ with respect to their measures over the input space. 

\section{Detecting data-copying}

Having defined $\c_q$, we now turn our attention towards \textit{estimating it.} To formalize this problem, we will require a few definitions. We begin by defining a generative algorithm.

\begin{definition}
A \textbf{generative algorithm}, $A$, is a potentially randomized algorithm that outputs a distribution $q$ over $\R^d$ given an input of training points, $S \subset \R^d$. We denote this relationship by $q \sim A(S)$.
\end{definition}

This paradigm captures most typical generative algorithms including both non-parametric methods such as KDEs and parametric methods such as variational autoencoders.

As an important distinction, in this work we define data-copying as a property of the generated distribution, $q$, rather than the generative algorithm, $A$. This is reflected in our definition which is given solely with respect to $q, S,$ and $p$. For the purposes of this paper, $A$ can be considered an arbitrary process that takes $S$ and outputs a distribution $q$. We include it in our definitions to emphasize that while $S$ is an i.i.d sample from $p$, it is \textit{not} independent from $q$. 

Next, we define a \textit{data-copying detector} as an algorithm that estimates $\c_q$ based on access to the training sample, $S$, along with the ability to draw any number of samples from $q$. The latter assumption is quite typical as sampling from $q$ is a purely computational operation. We do not assume any access to $p$ beyond the training sample $S$. Formally, we have the following definition.

\begin{definition}\label{def:data_copy_detector}
A \textbf{data-copying detector} is an algorithm $D$ that takes as input a training sample, $S \sim p^n$, and access to a sampling oracle for $q \sim A(S)$ (where $A$ is an arbitrary generative algorithm). $D$ then outputs an estimate, $D(S, q) = \hc_q$, for the data-copy rate of $q$. 
\end{definition}

Naturally, we assume $D$ has access to $\lambda, \gamma >0$ (as these are practitioner chosen values), and by convention don't include $\lambda, \gamma$ as formal inputs into $D$. 

The goal of a data-copying detector is to provide accurate estimates for $\c_q$. However, the precise definition of $\c_q$ poses an issue: data-copy rates for varying values of $\lambda$ and $ \gamma$ can vastly differ. This is because $\lambda, \gamma$ act as thresholds with everything above the threshold being counted, and everything below it being discarded. Since $\lambda, \gamma$ cannot be perfectly accounted for, we will require some tolerance in dealing with them. This motivates the following.

\begin{definition}\label{defn:approx_data_copy_rate}
Let $0 < \alpaca$ be a tolerance parameter. Then the \textbf{approximate data-copy rates}, $\c_q^{-\alpaca}$ and $\c_q^\alpaca$, are defined as the values of $\c_q$ when the parameters $(\lambda, \gamma)$ are shifted by a factor of $(1+\alpaca)$ to respectively decrease and increase the copy rate. That is, $$\c_q^{-\alpaca} = \c\left(q, \lambda (1+\alpaca), \gamma (1+\alpaca)^{-1}\right),$$ $$\c_q^{\alpaca} = \c\left(q, \lambda (1+\alpaca)^{-1}, \gamma (1+\alpaca)\right).$$
\end{definition}

The shifts in $\lambda$ and $\gamma$ are chosen as above because increasing $\lambda$ and decreasing $\gamma$ both reduce $\c_q$ seeing as both result in more restrictive conditions for what qualifies as data-copying. Conversely, decreasing $\lambda$ and increasing $\gamma$ has the opposite effect. It follows that $$\c_q^{-\alpaca} \leq \c_q \leq \c_q^{\alpaca},$$ meaning that $\c_q^{-\alpaca}$ and $\c_q^{\alpaca}$ are lower and upper bounds on $\c_q$. 

In the context of data-copying detection, the goal is now to estimate $\c_q$ in comparison to $\c_q^{\pm \alpaca}$. We formalize this by defining \textit{sample complexity} of a data-copying detector as the amount of data needed for accurate estimation of $\c_q$. 

\begin{definition}\label{def:sample_complexity}
Let $D$ be a data-copying detector and $p$ be a data distribution. Let $\epsilon, \delta > 0$ be standard tolerance parameters. Then $D$ has \textbf{sample complexity}, $m_p(\epsilon, \delta)$, with respect to $p$ if for all $n \geq m_p(\epsilon, \delta)$, $\lambda >1$, $0 < \gamma < 1$, and generative algorithms $A$, with probability at least $1 - \delta$ over $S \sim p^n$ and $q \sim A(S)$, $$\c_q^{-\alpaca} - \epsilon \leq D(S, q) \leq \c_q^{\alpaca} + \epsilon.$$
\end{definition}

Here the parameter $\epsilon$ takes on a somewhat expanded as it is both used to additively bound our estimation of $\c_q$ and to multiplicatively bound $\lambda$ and $\gamma$.

Observe that there is no mention of the number of calls that $D$ makes to its sampling oracle for $q$. This is because samples from $q$ are viewed as \textit{purely computational}, as they don't require any natural data source. In most cases, $q$ is simply some type of generative model (such as a VAE or a GAN), and thus sampling from $q$ is a matter of running the corresponding neural network.

\section{Regular Distributions}\label{sec:regular_dist}

Our definition of data-copying (Definition \ref{defn:data_copy}) motivates a straightforward point by point method for data-copying detection, in which for every training point, $x_i$, we compute the largest ball $B_i$ centered at $x_i$ for which $q(B_i) \geq \lambda p(B_i)$ and $p(B_i) \leq \gamma$. Assuming we compute these balls accurately, we can then query samples from $q$ to estimate the total rate at which $q$ outputs within those balls, giving us our estimate of $\c_q$.

The key ingredient necessary for this idea to work is to be able to reliably estimate the masses, $q(B)$ and $p(B)$ for any ball in $\R^d$. The standard approach to doing this is through \textit{uniform convergence}, in which large samples of points are drawn from $p$ and $q$ (in $p$'s case we use $S$), and then the mass of a ball is estimated by counting the proportion of sampled points within it. For balls with a sufficient number of points (typically $O( d\log n)$), standard uniform convergence arguments show that these estimates are reliable.

However, this method has a major pitfall for our purpose -- in most cases the balls $B_i$ will be very small because data-copying intrinsically deals with points that are very close to a given training point. While one might hope that we can simply ignore all balls below a certain threshold, this does not work either, as the sheer number of balls being considered means that their union could be highly non-trivial. 

To circumvent this issue, we will introduce an interpolation technique that estimates the probability mass of a small ball by scaling down the mass of a sufficiently large ball with the same center. While obtaining a general guarantee is impossible -- there exist pathological  distributions that drastically change their behavior at small scales -- it turns out there is a relatively natural condition under which such interpolation will work. We refer to this condition as \textit{regularity,} which is defined as follows.

\begin{definition}\label{def:regular}
Let $k> 0$ be an integer. A probability distribution $p$ is \textbf{$k$-regular} the following holds. For all $\alpaca > 0$, there exists a constant $0 < p_\alpaca \leq 1$ such that for all $x$ in the support of $p$, if $0 < s < r$ satisfies that $p(B(x, r)) \leq p_\alpaca$, then $$\left(1+\frac{\alpaca}{3}\right)^{-1}\frac{r^k}{s^{k}} \leq \frac{p(B(x, r))}{p(B(x, s))} \leq \left(1+\frac{\alpaca}{3}\right)\frac{r^k}{s^{k}}.$$ Finally, a distribution is \textbf{regular} if it is $k$-regular for some integer $k > 0$. 
\end{definition}

Here we let $B(x, r) = \{x': ||x - x'|| \leq r\}$ denote the closed $\ell_2$ ball centered at $x$ with radius $r$. 

The main intuition for a $k$-regular distribution is that at a sufficiently small scale, its probability mass scales with distance according to a power law, determined by $k$. The parameter $k$ dictates how the probability density behaves with respect to the distance scale. In most common examples, $k$ will equal the \textit{intrinsic dimension}  of $p$.

As a technical note, we use an error factor of $\frac{\alpaca}{3}$ instead of $\alpaca$ for technical details that enable cleaner statements and proofs in our results (presented later). 

\subsection{Distributions with Manifold Support}

We now give an important class of $k$-regular distributions.

\begin{proposition}\label{prop:manifold_works}
Let $p$ be a probability distribution with support precisely equal to a compact $k$ dimensional sub-manifold (with or without boundary) of $\R^d$, $M$. Additionally, suppose that $p$ has a continuous density function over $M$. Then it follows that $p$ is $k$-regular.
\end{proposition}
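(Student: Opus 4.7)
The plan is to exploit the two facts underlying the regularity definition: at sufficiently small scales, a smoothly embedded $k$-dimensional submanifold looks like a piece of a $k$-plane, and a continuous density on a compact set is nearly constant. Together these should yield a uniform expansion $p(B(x,r)) = f(x)\,\omega_k\,r^k(1 + \psi(r))$, where $f$ is the density, $\omega_k$ is the volume of the Euclidean unit $k$-ball, and $\psi(r) \to 0$ as $r \to 0$ uniformly in $x \in M$. Dividing two such expressions then immediately delivers the required bound $(1+\alpaca/3)^{\pm 1}(r/s)^k$.

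First I would carry out the geometric estimate on the $k$-dimensional Hausdorff measure $|B(x,r) \cap M|_k$. Compactness of the smoothly embedded $M$ gives a uniform positive lower bound on its reach together with uniform bounds on the second fundamental form. For $r$ below a universal scale $r_0$, the orthogonal projection $\pi_x : B(x,r) \cap M \to T_x M$ onto the tangent space is a diffeomorphism onto its image, and the image is a graph over a Euclidean $k$-ball of radius $r$ whose deviation from that ball is controlled uniformly by $O(r^2)$ via the curvature bounds. A standard change-of-variables computation then produces $|B(x,r) \cap M|_k = \omega_k r^k(1 + O(r^2))$ uniformly in $x$ (with $\omega_k/2$ replacing $\omega_k$ at boundary points of $M$).

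Next I would feed this into the density. Continuity of $f$ on compact $M$ yields both a uniform modulus of continuity $\omega_f(r) \to 0$ as $r \to 0$ and a uniform lower bound $f \geq c_0 > 0$, the latter because the support of $p$ is exactly $M$. Writing $p(B(x,r)) = \int_{B(x,r) \cap M} f\, dV_M$ and combining the volume and density estimates gives the uniform expansion $p(B(x,r)) = f(x)\omega_k r^k(1 + \psi(r))$. Taking the ratio,
\[
\frac{p(B(x,r))}{p(B(x,s))} \;=\; \frac{r^k}{s^k}\cdot\frac{1+\psi(r)}{1+\psi(s)},
\]
and for any prescribed $\alpaca > 0$ I would pick $r_\alpaca \leq r_0$ small enough that $(1+\psi(r))/(1+\psi(s)) \in [(1+\alpaca/3)^{-1},\,1+\alpaca/3]$ whenever $s < r \leq r_\alpaca$. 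The uniform lower bound $p(B(x,r)) \geq \tfrac{1}{2} c_0 \omega_k r^k$ valid for $r \leq r_0$ then lets me set $p_\alpaca := \tfrac{1}{2} c_0 \omega_k r_\alpaca^k$, so that the hypothesis $p(B(x,r)) \leq p_\alpaca$ automatically forces $r \leq r_\alpaca$ and the ratio bound applies, completing the verification of $k$-regularity.

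The main obstacle is securing all of the above \emph{uniformly} in $x \in M$: the uniform positivity of $f$, the uniform modulus of continuity, and the uniform geometric bounds (reach, curvature, scale at which $\pi_x$ is a diffeomorphism) are exactly what compactness buys us. The most delicate situation in Proposition~\ref{prop:manifold_works} is the boundary case, where a point $x$ at distance $d$ from $\partial M$ with $d$ comparable to $r$ has a local model that interpolates between a half-disk and a full disk, so the leading constant in the expansion of $|B(x,r) \cap M|_k$ depends on the dimensionless ratio $d/r$. Handling this cleanly will require a refined parameterization near $\partial M$ (e.g.\ via normal coordinates adapted to the boundary) to ensure that the ratio $p(B(x,r))/p(B(x,s))$ still collapses to $(r/s)^k$ up to the allotted $(1+\alpaca/3)$ tolerance, and this is where I expect the bulk of the technical work to live.
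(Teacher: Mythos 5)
Your interior argument is sound and is essentially the paper's own proof in more quantitative clothing: the paper also reduces to the two facts you isolate --- small balls on $M$ have mass $f(x)\,\omega_k r^k(1+o(1))$, and compactness upgrades this to a uniform statement --- but it does so softly, by showing that $(x,r)\mapsto p(B(x,r))/r^k$ extends continuously to $r=0$ with a strictly positive limit and then taking a finite subcover of the compact support, rather than via reach/curvature bounds and an explicit modulus. One caveat you share with the paper: ``support $=M$'' does not by itself give $f\ge c_0>0$; a continuous density can vanish at a point of $M$ while the support is still all of $M$, and then $k$-regularity genuinely fails (the mass near that point scales at a different power), so strict positivity of the density must be read as part of the hypothesis, as the paper's proof explicitly does.

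The genuine gap is the boundary case, which you defer and hope to settle by a refined parameterization near $\partial M$. That plan cannot succeed, because the claim is false when $M$ has boundary: the deviation of $p(B(x,r))/p(B(x,s))$ from $(r/s)^k$ at points with $d(x,\partial M)$ comparable to $r$ is a constant-size effect that persists at every scale, so no threshold $p_\epsilon$ can exclude it. Concretely, take $p$ uniform on $M=[0,1]\subset\mathbb{R}$ (so $k=1$), $x=\delta$, $r=2\delta$, $s=\delta$: then $p(B(x,r))=3\delta$ and $p(B(x,s))=2\delta$, so the ratio is $3/2$ while $r/s=2$, violating the $(1+\epsilon/3)$ tolerance for any $\epsilon<1$; letting $\delta\to 0$ drives $p(B(x,r))$ below any prospective $p_\epsilon$. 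Consistently with this, the paper's own proof only covers closed manifolds: its appendix restatement drops the ``with or without boundary'' clause, and its key steps (the limit $\lim_{s\to 0} vol_M(B_M(x,s))/s^k=\omega_k$ and the continuity of $x\mapsto\lim_{r\to 0}p(B(x,r))/r^k$) break at $\partial M$. So the correct move is to restrict to manifolds without boundary (or modify the statement near $\partial M$), not to sharpen the boundary analysis.
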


Proposition \ref{prop:manifold_works} implies that most data distributions that adhere to some sort of manifold-hypothesis will also exhibit regularity, with the regularity constant, $k$, being the intrinsic dimension of the manifold.

\subsection{Estimation over regular distributions}

We now turn our attention towards designing estimation algorithms over regular distributions, with our main goal being to estimate the probability mass of arbitrarily small balls. We begin by first addressing a slight technical detail -- although the data distribution $p$ may be regular, this does not necessarily mean that the regularity constant, $k$, is known. Knowledge of $k$ is crucial because it determines how to properly interpolate probability masses from large radius balls to smaller ones. 

Luckily, estimating $k$ turns out to be an extremely well studied task, as for most probability distributions, $k$ is a measure of the \textit{intrinsic dimension}. Because there is a wide body of literature in this topic, we will assume from this point that $k$ has been correctly estimated from $S$ using any known algorithm for doing so (for example \cite{BJPR22}). Nevertheless, for completeness, we provide an algorithm with provable guarantees for estimating $k$ (along with a corresponding bound on the amount of needed data) in Appendix \ref{sec:estimating_alpha}.

We now return to the problem of $p(B(x, r))$ for a small value of $r$, and present an algorithm, $Est(x, r, S)$ (Algorithm \ref{alg:estimate}), that estimates $p(B(x, r))$ from an i.i.d sample $S \sim p^n$.

\begin{algorithm}
   \caption{$Est(x, r, S)$}
   \label{alg:estimate}

   \DontPrintSemicolon
   
	$n \leftarrow |S|$\;
	
   $b \leftarrow O\left(\frac{d \ln \frac{n}{\delta}}{\epsilon^2} \right)$\;
   
   $r_* = \min \{s > 0, |S \cap B(x, s)| = b\}$.\;
   
   \uIf{$r_* > r$}{
   Return $\frac{br^k}{nr_*^k}$\;
   }
   \uElse {
	Return $\frac{|T \cap B(x, r)|}{n}$\;
	}

\end{algorithm}

$Est$ uses two ideas: first, it leverages standard uniform convergence results to estimate the probability mass of all balls that contain a sufficient number of training examples $(k)$. Second, it estimates the mass of smaller balls by interpolating from its estimates from larger balls. The $k$-regularity assumption is crucial for this second step as it is the basis on which such interpolation is done. 

$Est$ has the following performance guarantee, which follows from standard uniform convergence bounds and the definition of $k$-regularity. 
\begin{proposition}\label{prop:est_works}
Let $p$ be a regular distribution, and let $\alpaca >0$ be arbitrary. Then if $n = O\left(\frac{d\ln\frac{d}{\delta \alpaca p_\alpaca}}{\alpaca^2 p_\alpaca}\right)$ with probability at least $1 - \delta$ over $S \sim p^n$, for all $x \in \R^d$ and $r > 0$, $$\left(1+\frac{\alpaca}{2}\right)^{-1}\leq \frac{Est(x, r, S)}{p(B(x, r))} \leq \left(1+\frac{\alpaca}{2}\right).$$
\end{proposition}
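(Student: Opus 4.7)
The plan combines a uniform convergence (UC) guarantee for Euclidean balls with the $k$-regularity scaling law, via a clean two-case analysis that mirrors the two branches of $Est$. Since closed balls in $\R^d$ form a VC class of dimension $d+1$, a Haussler-style multiplicative VC inequality gives, for the stated $n$ and $b = \Theta(d\ln(n/\delta)/\alpaca^2)$, an event $E$ of probability at least $1-\delta$ on which every ball $B\subset\R^d$ with $|S\cap B|\ge b$ satisfies
\[
(1+\alpaca/8)^{-1}\ \le\ \frac{|S\cap B|/n}{p(B)}\ \le\ 1+\alpaca/8.
\]
The sample size $n = O(d\ln(d/(\delta\alpaca p_\alpaca))/(\alpaca^2 p_\alpaca))$ is also chosen large enough that $b/n\le p_\alpaca/(1+\alpaca/8)$, which forces every such ball to have true mass at most $p_\alpaca$, so that $k$-regularity is applicable to it.

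Conditioning on $E$, I split on the algorithm's two branches. If $r_* > r$, then (for $p$ continuous on its support) $|S\cap B(x,r_*)|=b$ almost surely, so $p(B(x,r_*))=(1\pm\alpaca/8)(b/n)\le p_\alpaca$, and $k$-regularity applied to the pair $r<r_*$ yields
\[
(1+\alpaca/3)^{-1}(r/r_*)^k\ \le\ \frac{p(B(x,r))}{p(B(x,r_*))}\ \le\ (1+\alpaca/3)(r/r_*)^k.
\]
Since $Est = (b/n)(r/r_*)^k$ in this branch, multiplying the two multiplicative approximations bounds $Est/p(B(x,r))$ inside $[((1+\alpaca/8)(1+\alpaca/3))^{-1},\ (1+\alpaca/8)(1+\alpaca/3)]$, and an arithmetic check (for $\alpaca\le 1$) dominates this by $1+\alpaca/2$; this is precisely what the deliberate $\alpaca/3$ slack in Definition \ref{def:regular} buys. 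If $r_*\le r$, then $B(x,r_*)\subseteq B(x,r)$ gives $|S\cap B(x,r)|\ge b$, so $E$ applies directly to $B(x,r)$ and $Est=|S\cap B(x,r)|/n$ is within a $(1+\alpaca/8)$ multiplicative factor of $p(B(x,r))$, well inside tolerance.

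The main obstacle is bookkeeping rather than a new idea: stating the VC bound in exactly the multiplicative form needed simultaneously over all balls with empirical count at least $b$, composing the $(1+\alpaca/8)$ and $(1+\alpaca/3)$ factors into the promised $1+\alpaca/2$, and, most annoyingly, verifying the implicit inequality $b/n\le p_\alpaca$ — whose left side contains a $\ln n$, so a standard $\log\log$ absorption step is needed to confirm that the stated sample size suffices. A minor subtlety is that $k$-regularity is stated only for $x$ in the support of $p$, so the guarantee as proved applies to such $x$; for arbitrary $x\in\R^d$ with $p(B(x,r))=0$ the estimator can still return a nonzero value in the $r_*>r$ branch and would need separate handling.
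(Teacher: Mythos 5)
Your proposal is correct and follows essentially the same route as the paper's proof: uniform convergence over $\ell_2$ balls combined with the $k$-regularity interpolation, the same two-case split along the branches of $Est$, the same composition of the uniform-convergence factor with the $(1+\epsilon/3)$ regularity factor into $(1+\epsilon/2)$, and the same $\log\log$ absorption to guarantee $p(B(x,r_*))\leq p_\epsilon$. The only cosmetic differences are that you invoke a relative (multiplicative) VC bound directly where the paper derives the multiplicative form from the additive Vapnik--Chervonenkis inequality via the $c=\sqrt{b'/(n\beta_n^2)}$ computation, and that for $\epsilon>1$ you would need to cap the uniform-convergence tolerance at $\min(\epsilon,1)$, exactly as the paper does.
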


\section{A Data-copy detecting algorithm}

\begin{algorithm}    

\caption{$DataCopyDetect(S, q, m)$}
\label{alg:main}   

   \DontPrintSemicolon
   
   $m \leftarrow O\left(\frac{dn^2\ln \frac{nd}{\delta\epsilon}}{\epsilon^4}\right)$\;
   
   Sample $T \sim q^m$\;
   
   $\{x_1, x_2, \dots, x_n\} \leftarrow S$\;
   
   $\{z_1, z_2, \dots, z_m\} \leftarrow T$\;

	\For{$i = 1, \dots, n$}{
	
	Let $p_i(r)$ denote $Est(x_i, r, S)$\;
	
	Let $q_i(r)$ denote $\frac{|B(x_i, r) \cap T|}{m}$\;
	
	$radii \leftarrow \{||z - x_i||: z \in T\} \cup \{0\}$\;
	
	$radii \leftarrow \{r: p_i(r) \leq \gamma, r \in radii\}$\;

	$r_i^* \leftarrow \max \{r: q_i(r) \geq \lambda p_i(r), r \in radii\}$\;
		
	}
	Sample $U \sim q^{20/\epsilon^2}$\;
	$V \leftarrow U \cap \left(\bigcup_{i=1}^n B(x_i, r_i^*)\right)$\;
	Return $\frac{|V|}{|U|}$.\;

\end{algorithm}

We now now leverage our subroutine, $Est$, to construct a data-copying detector, $Data\_Copy\_Detect$ (Algorithm \ref{alg:main}), that has bounded sample complexity when $p$ is a regular distribution. Like all data-copying detectors (Definition \ref{def:data_copy_detector}), $Data\_Copy\_Detect$ takes as input the training sample $S$, along with the ability to sample from a generated distribution $q$ that is trained from $S$. It then performs the following steps:
\begin{enumerate}
	\item (line 1) Draw an i.i.d sample of $m = O\left(\frac{dn^2\ln \frac{nd}{\delta\epsilon}}{\epsilon^4}\right)$ points from $q$. 
	\item (lines 6 - 10) For each training point, $x_i$, determine the largest radius $r_i$ for which 
	\begin{equation*}
	\begin{split}
	&\frac{|B(x_i, r_i) \cap T|}{m} \geq \lambda Est(x_i, r_i ,S), \\ 
	&Est(x_i, r_i , S) \leq \gamma.
	\end{split}
	\end{equation*}
	\item (lines 12 - 13) Draw a fresh sample of points from $U \sim q^{O(1/\epsilon^2)}$, and use it to estimate the probability mass under $q$ of $\cup_{i=1}^n B(x_i, r_i)$.
\end{enumerate}

In the first step, we draw a \textit{large} sample from $q$. While this is considerably larger than the amount of training data we have, we note that samples from $q$ are considered free, and thus do not affect the sample complexity. The reason we need this many samples is simple -- unlike $p$, $q$ is not necessarily regular, and consequently we need enough points to properly estimate $q$ around every training point in $S$.

The core technical details of $\dc{}$ are contained within step 2, in which data-copying regions surrounding each training point, $x_i$, are found. We use $Est(x, r, S)$ and $\frac{|B(x, r) \cap T|}{m}$ as proxies for $p$ and $q$ in Definition \ref{defn:data_copy}, and then search for the maximal radius $r_i$ over which the desired criteria of data-copying are met for these proxies.  

The only difficulty in doing this is that this could potentially require checking an infinite number of radii, $r_i$. Fortunately, this turns out not to be needed because of the following observation -- we only need to check radii at which a new point from $T$ is included in the estimation $q_i(r)$. This is because these our estimation for $q_i(r)$ does not change between them meaning that our estimate of the ratio between $q$ and $p$ is maximal nearby these points. 

Once we have computed $r_i$, all that is left is to estimate the data-copy rate by sampling $q$ once more to find the total mass of data-copying region, $\cup_{i=1}^n B(x_i, r_i)$. 

\subsection{Performance of Algorithm \ref{alg:main}}

We now show that given enough data, $\dc{}$ provides a close approximation of $\c_q$. 

\begin{theorem}\label{thm:upper_bound}
$\dc{}$ is a data-copying detector (Definition \ref{def:data_copy_detector}) with sample complexity at most $$m_p(\epsilon, \delta) = O\left(\frac{d\ln\frac{d}{\delta\alpaca p_\alpaca}}{\alpaca^2 p_\alpaca}\right),$$ for all regular distributions, $p$. 
\end{theorem}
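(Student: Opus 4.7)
My plan is to decompose the analysis into three high-probability events: accurate estimates of $p$-masses from $S$, accurate estimates of $q$-masses from $T$, and accurate estimation of the final copy rate from $U$. The sample-complexity bound $m_p(\epsilon,\delta)$ will come entirely from the first event, since the sizes of $T$ and $U$ are purely computational and can be made as large as we please without affecting sample complexity.

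First, I would invoke Proposition \ref{prop:est_works} with tolerance $\epsilon$ and failure probability $\delta/3$, which gives that $n = O(d\ln(d/(\delta\epsilon p_\epsilon))/(\epsilon^2 p_\epsilon))$ samples suffice to ensure that, simultaneously for all $x$ and $r$, the estimate $p_i(r) = Est(x_i,r,S)$ lies within a $(1\pm\epsilon/2)$ multiplicative factor of the true mass $p(B(x_i,r))$. This immediately matches the claimed sample complexity. Second, using the fact that Euclidean balls in $\R^d$ form a VC class of dimension $O(d)$, standard multiplicative Chernoff/VC uniform convergence for $T\sim q^m$ with $m = O(dn^2\ln(nd/(\delta\epsilon))/\epsilon^4)$ yields, with probability $\geq 1-\delta/3$: for every training point $x_i$ and every radius $r$, if $q(B(x_i,r))\geq \epsilon/(3n)$ then $q_i(r)$ is within a $(1\pm\epsilon/3)$ multiplicative factor of $q(B(x_i,r))$, and otherwise $q_i(r)\leq 2\epsilon/(3n)$. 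The $n^2/\epsilon^4$ factor is what lets us simultaneously absorb the $n$ training points and achieve multiplicative accuracy down to a mass threshold of order $\epsilon/n$, so that the total contribution of ``spuriously included'' balls is at most $\epsilon$.

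Condition on the two events above. For each $x_i$, define the two reference radii
\[
\widetilde r_i = \sup\bigl\{r:\, q(B(x_i,r))\geq \lambda(1{+}\epsilon) p(B(x_i,r)),\ p(B(x_i,r))\leq \gamma/(1{+}\epsilon)\bigr\},
\]
\[
\bar r_i = \sup\bigl\{r:\, q(B(x_i,r))\geq \lambda(1{+}\epsilon)^{-1} p(B(x_i,r)),\ p(B(x_i,r))\leq \gamma(1{+}\epsilon)\bigr\}.
\]
Substituting the multiplicative bounds from Steps~1 and 2 directly into the conditions checked in lines~9--10 of the algorithm shows that $r_i^*$ satisfies $\widetilde r_i \leq r_i^* \leq \bar r_i$. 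The only subtlety is that the algorithm restricts its search to the finite set $\{\|z-x_i\|:z\in T\}\cup\{0\}$, but because $q_i(r)$ is piecewise constant and jumps exactly at these radii, while $p_i(r)$ is nondecreasing in $r$, the ratio $q_i(r)/p_i(r)$ is maximized on the left at each jump point, so restricting to these radii loses nothing. Consequently $\bigcup_i B(x_i,\widetilde r_i) \subseteq \bigcup_i B(x_i,r_i^*) \subseteq \bigcup_i B(x_i,\bar r_i)$, and by definition the $q$-masses of the two outer regions are exactly $\c_q^{-\epsilon}$ and $\c_q^{\epsilon}$ respectively (up to the $\epsilon/n$-floor slack from Step~2, which contributes at most $\epsilon$ in total across the $n$ balls).

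Finally, a Hoeffding bound on $|U|=20/\epsilon^2$ fresh samples from $q$ ensures that the returned fraction $|V|/|U|$ is within additive $\epsilon$ of $q\bigl(\bigcup_i B(x_i,r_i^*)\bigr)$ with probability $\geq 1-\delta/3$. Combining the three events via a union bound yields $\c_q^{-\epsilon}-\epsilon \leq D(S,q)\leq \c_q^{\epsilon}+\epsilon$. The main obstacle I expect is Step~3: simultaneously handling the multiplicative errors in $p$-estimates, the multiplicative-plus-floor errors in $q$-estimates, and the contamination from the $n$ balls whose true $q$-mass may be as small as $O(\epsilon/n)$. Carefully budgeting these three error sources is exactly what forces the $(1+\epsilon)$ slack in both $\lambda$ and $\gamma$ in the definition of $\c_q^{\pm\epsilon}$, rather than permitting an arbitrarily sharp estimate.
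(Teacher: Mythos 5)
Your proposal matches the paper's own proof in all essentials: Proposition \ref{prop:est_works} with confidence $\delta/3$ supplies the stated sample complexity, VC uniform convergence over $T\sim q^m$ gives multiplicative accuracy for $q$-masses above an $\Theta(\epsilon/n)$ floor, the algorithm's radii $r_i^*$ are sandwiched between the maximal radii realizing the $(\lambda(1{+}\epsilon),\gamma(1{+}\epsilon)^{-1})$- and $(\lambda(1{+}\epsilon)^{-1},\gamma(1{+}\epsilon))$-copying conditions, and a final Hoeffding bound on $U$ plus a union bound finishes. The only difference is cosmetic: where you say the sub-floor balls contribute ``at most $\epsilon$ in total,'' the paper formalizes this by conditioning its two sandwiching claims on $q(B(x_i,\cdot))\geq \epsilon/(2n)$ and crudely charging the exceptional index sets $\epsilon/2$ on each side, which is exactly the budgeting you anticipated.
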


Theorem \ref{alg:main} shows that our algorithm's sample complexity has standard relationships with the tolerance parameters, $\epsilon$ and $\delta$, along with the input space dimension $d$. However, it includes an additional factor of $\frac{1}{p_\epsilon}$, which is a distribution specific factor measuring the regularity of the probability distribution. Thus, our bound cannot be used to give a bound on the amount of data needed without having a bound on $p_\epsilon$. 

We consequently view our upper bound as more akin to a convergence result, as it implies that our algorithm is guaranteed to converge as the amount of data goes towards infinity.

\subsection{Applying Algorithm \ref{alg:main} to Halfmoons}\label{sec:experiments}

We now return to the example presented in Figure \ref{fig:halfmoons} and empirically investigate the following question: is our algorithm able to outperform the one given in \cite{MCD2020} over this example? 

To investigate this, we test both algorithms over a series of distributions by varying the parameter $\rho$, which is the proportion of points that are ``copied." Figure \ref{fig:halfmoons} demonstrates a case in which $\rho = 0.4$. Additionally, we include a parameter, $c$, for \cite{MCD2020}'s algorithm which represents the number of clusters the data is partitioned into (with $c$-means clustering) prior to running their test. Intuitively, a larger number of clusters means a better chance of detecting more localized data-copying.

The results are summarized in the following table where we indicate whether the algorithm determined a statistically significant amount of data-copying over the given generated distribution and corresponding training dataset. Full experimental details can be found in Sections \ref{sec:app_experiments} and \ref{sec:experiments_details} of the appendix. 
\begin{table}[h]
\caption{Statistical Significance of data-copying Rates over Halfmoons} \label{results_main}
\begin{center}
\begin{tabular}{ |c||c|c|c|c|c| } 
 \hline
 \textbf{Algo} & $\mathbf{q = p}$ & $\mathbf{\rho = 0.1}$ & $\mathbf{0.2}$ & $\mathbf{0.3}$ & $\mathbf{0.4}$ \\ 
 \hline
 \hline
 \textbf{Ours} & \color{blue}no & \color{red}yes & \color{red}yes & \color{red}yes & \color{red}yes \\ 
 \hline
 $\mathbf{c=1}$ & \color{blue}no & \color{blue}no & \color{blue}no & \color{blue}no & \color{blue}no \\ 
 \hline
 $\mathbf{c=5}$ & \color{blue}no & \color{blue}no & \color{blue}no & \color{blue}no & \color{red}yes \\ 
 \hline
 $\mathbf{c=10}$ & \color{blue}no & \color{blue}no & \color{blue}no & \color{blue}no & \color{red}yes \\ 
 \hline
 $\mathbf{c=20}$ & \color{blue}no & \color{blue}no& \color{blue}no & \color{red}yes & \color{red}yes\\ 
 \hline
\end{tabular}
\end{center}
\end{table}

As the table indicates, our algorithm is able to detect statistically significant data-copying rates in all cases it exists. By contrast, \cite{MCD2020}'s test is only capable of doing so when there is a large data-copy rate and when the number of clusters, $c$, is quite large.

\section{Is smoothness necessary for data copying detection?}\label{sec:lower_bound}

Algorithm \ref{alg:main}'s performance guarantee requires that the input distribution, $p$, be regular (Definition \ref{def:regular}). This condition is essential for the algorithm to successfully estimate the probability mass of arbitrarily small balls. Additionally, the parameter, $p_\epsilon$, plays a key role as it serves as a measure of how ``smooth" $p$ is with larger values implying a higher degree of smoothness. 

This motivates a natural question -- can data copying detection be done over unsmooth data distributions? Unfortunately, the answer turns out to be no. In the following result, we show that if the parameter, $p_\epsilon$ is allowed to be arbitrarily small, then this implies that for any data-copy detector, there exists $p$ for which the sample complexity is arbitrarily large.

\begin{theorem}\label{thm:lower_bound}
Let $B$ be a data-copying detector. Let $\epsilon = \delta = \frac{1}{3}$. Then, for all integers $\a > 0$, there exists a probability distribution $p$ such that $\frac{1}{9\a} \leq p_\alpaca \leq \frac{1}{\a}$, and $m_p(\epsilon, \delta) \geq \a$, implying that $$m_p(\epsilon, \delta) \geq \Omega\left(\frac{1}{p_\epsilon}\right).$$
\end{theorem}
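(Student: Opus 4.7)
The plan is to apply Le Cam's two-point method. I will construct two distributions $p_A, p_B$ with the prescribed regularity together with a single adversarial generative algorithm $A$ so that (i) the joint distribution of $(S,q)$ has total variation less than $1/3$ between the two scenarios, and (ii) the correct detector-output intervals under the two scenarios are disjoint. Then at least one of the two scenarios must force the detector to err with probability greater than $\delta=1/3$, and the corresponding distribution witnesses $m_p(\alpaca,\delta)\ge\a$.

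For the construction, take $M=3\a$ and fix cluster centers $y_j=0$ and $y_k=De_k$ for $k\ne j$ in $\R^{M-1}$ (so any pair of distinct centers is either $D$ or $D\sqrt{2}$ apart), and set $C_k=B(y_k,\eta)$ with $\eta\ll D$. Define $p_A=\tfrac{1}{M}\sum_{k=1}^{M}\mathrm{Unif}(C_k)$ (or a smoothed variant, to avoid boundary artefacts in the regularity definition), and let $p_B$ be identical except that the $j$-th cluster is moved to a location $y_{j'}$ so far from every other $C_k$ that it plays no role in what follows. Both distributions have their power-law scaling break only at the mass-plateau $1/M=1/(3\a)$, so any $p_\alpaca$ strictly below $1/(3\a)$ certifies regularity; in particular one can take $p_\alpaca\in[1/(9\a),1/\a]$ as required. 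Define the adversarial algorithm by $A(S)\equiv\mathrm{Unif}(C_j)$, independent of $S$.

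Under the max coupling of $p_A^{n}$ and $p_B^{n}$ with $n=\a-1$, the sample tuples agree with probability $(1-1/M)^{n}\ge 1-(\a-1)/(3\a)>2/3$, and on this event $S$ contains no point in $C_j\cup C_{j'}$. Because $A$ does not depend on $S$, the detector's observation $(S,q)$ coincides in both scenarios on the coupling event. To compute copy rates, note that the orthogonal placement guarantees that for every $X_i\in C_k$ the ball $B(X_i,D+2\eta)$ contains exactly $C_k\cup C_j$ and no other cluster, so $q(B)/p_A(B)=M/2$ while $q(B)/p_B(B)=M$. Choosing $\lambda$ in the gap $\bigl((1+\alpaca)M/2,\,M/(1+\alpaca)\bigr)=(2M/3,3M/4)$ (non-empty because $\alpaca=1/3$), and $\gamma$ slightly above $4/(3M)$, the $M/2$ ratio remains below $\lambda/(1+\alpaca)$, the $M$ ratio remains above $\lambda(1+\alpaca)$, and both $\gamma$-constraints survive the $(1+\alpaca)$-shift. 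Hence $\c^{\alpaca}_{q,p_A,S}=0$ while $\c^{-\alpaca}_{q,p_B,S}=1$, so the correct intervals $[-1/3,1/3]$ and $[2/3,4/3]$ are disjoint.

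Le Cam's inequality then yields $\Pr_{P_A}[D\in[-1/3,1/3]]+\Pr_{P_B}[D\in[2/3,4/3]]\le 1+\mathrm{TV}(P_A,P_B)<4/3$, so at least one of the two success probabilities is below $2/3$; on the corresponding scenario the detector fails with probability greater than $\delta$, and that $p\in\{p_A,p_B\}$ witnesses $m_p\ge\a$. The main obstacle will be engineering the cluster geometry so that $B(X_i,D+2\eta)$ contains exactly $C_k\cup C_j$ for every admissible $X_i$; the orthogonal-simplex placement in $\R^{M-1}$ handles this cleanly because its uniform minimum pairwise distance $D\sqrt{2}$ exceeds $D+2\eta$ as soon as $\eta/D$ is small. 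A secondary subtlety is checking that regularity of $p_A,p_B$ holds uniformly over $x$ in the supports, which is smoothest if the clusters are modelled as smooth bumps rather than hard indicators.
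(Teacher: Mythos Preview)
Your two-point Le Cam strategy is a genuine simplification of the paper's route. The paper instead builds a Bayesian mixture over $\binom{2\a}{\a}$ distributions $p_T$ (each supported on $2\a$ disjoint unit circles, with per-circle mass $1/(3\a)$ or $2/(3\a)$ according to whether the index lies in $T$), pairs each $p_T$ with two $S$-dependent randomized generators $A_T,A_T'$, and then proves that the induced laws of $(S,q)$ under the two mixture families coincide \emph{exactly} on a high-probability ``nice'' event via a delicate combinatorial symmetry. Their construction buys them a fixed $\lambda=13$ and fixed ambient dimension; yours needs $\lambda\asymp\a$ and ambient dimension $3\a-1$, but both are admissible under the theorem statement, and your coupling argument together with the copy-rate dichotomy ($M/2$ under $p_A$ versus $M$ under $p_B$, at radius $D+2\eta$) is correct.

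There is, however, a real gap in the regularity verification. You claim that for $p_A$ (uniform, or smoothed, on $M=3\a$ solid balls $C_k=B(y_k,\eta)\subset\R^{M-1}$) the power law ``breaks only at the mass plateau $1/M$'', so any $p_\epsilon<1/M$ suffices. This fails at cluster boundaries. For $x\in\partial C_k$ and small $r$ one has $p_A(B(x,r))=\tfrac12\,r^{d}/(M\eta^{d})\cdot(1+c_d\,r/\eta+\cdots)$ with $d=M-1$ and $c_d$ growing polynomially in $d$; forcing $p_A(B(x,r))/p_A(B(x,s))$ within $(1\pm\epsilon/3)(r/s)^d$ then requires $r\lesssim\eta/\mathrm{poly}(d)$, whence $p_A(B(x,r))\lesssim d^{-\Theta(d)}/M$, which is \emph{exponentially} small in $\a$ rather than $\Theta(1/\a)$. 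Smooth bumps do not rescue this: a smooth compactly supported density vanishes at its boundary, and near-boundary points then fail to obey any single power law uniformly. The clean fix is to replace your solid balls by low-dimensional \emph{closed} submanifolds centred at the $y_k$---for instance unit circles, exactly as the paper does. With circles you get $1$-regularity and $p_\epsilon\asymp 1/M$ uniformly over the support (the paper carries out this computation explicitly), while your orthogonal-simplex placement of the centres and all of your containment/ratio computations go through unchanged, since they only use the pairwise distances between the $y_k$ and the fact that each $C_k$ has diameter at most $2\eta$.
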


Although Theorem \ref{thm:lower_bound} is restricted to regular distributions, it nevertheless demonstrates that a bound on smoothness is essential for data copying detection. In particular, non-regular distributions (with no bound on smoothness) can be thought of as a degenerate case in which $p_\epsilon = 0$. 

Additionally, Theorem \ref{thm:lower_bound} provides a lower bound that complements the Algorithm \ref{alg:main}'s performance guarantee (Theorem \ref{thm:upper_bound}). Both bounds have the same dependence on $p_\alpaca$ implying that our algorithm is optimal at least in regards to $p_\alpaca$. However, our upper bound is significantly larger in its dependence on $d$, the ambient dimension, and $\alpaca$, the tolerance parameter itself. 

While closing this gap remains an interesting direction for future work, we note that the existence of a gap isn't too surprising for our algorithm, $\dc{}$. This is because $\dc{}$ essentially relies on manually finding the entire region in which data-copying occurs, and doing this requires precise estimates of $p$ at all points in the training sample.  

Conversely, detecting data-copying only requires an \textit{overall} estimate for the data-copying rate, and doesn't necessarily require finding all of the corresponding regions. It is plausible that more sophisticated techniques might able to estimate the data-copy rate \textit{without} directly finding these regions.

\section{Conclusion}

In conclusion, we provide a new modified definition of ``data-copying'' or generating memorized training samples for generative models that addresses some of the failure modes of previous definitions~\cite{MCD2020}. We provide an algorithm for detecting data-copying according to our definition, establish performance guarantees, and show that at least some smoothness conditions are needed on the data distribution for successful detection. 

With regards to future work, one important direction is in addressing the limitations discussed in section \ref{sec:limitations}. Our definition and algorithm are centered around the assumption that the goal of a generative model is to output $q$ that is close to $p$ in a mathematical sense. As a result, we are unable to handle cases where the generator tries to generate \textit{transformed} examples that lie outside the support of the training distribution. For example, a generator restricted to outputting black and white images (when trained on color images) would remain completely undetected by our algorithm regardless of the degree with which it copies its training data. To this end, we are very interested in finding generalizations of our framework that are able to capture such broader forms of data-copying. 

\section*{Acknowledgments} 

We thank NSF under CNS 1804829 for research support.

\bibliography{references}
\bibliographystyle{icml2023}

\newpage
\appendix
\onecolumn

\section{An Example over the Halfmoons dataset}\label{sec:app_experiments}

In this section, we give an overview of our experiments over the Halfmoons dataset. Further details can be found in sec

\begin{figure*}[ht]
	\subfloat[$\rho = 0.1$]{\includegraphics[width=.45\textwidth]{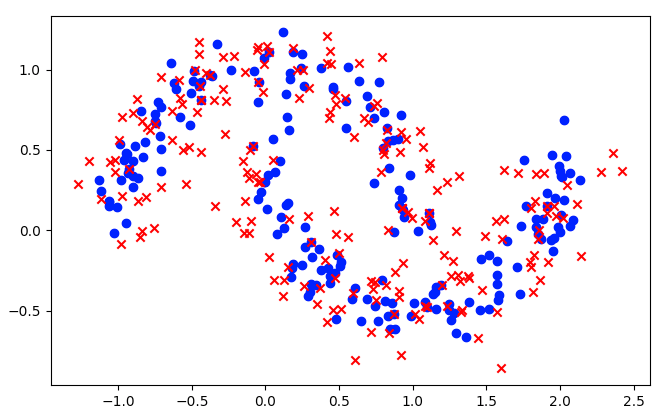}}\hfill
	\subfloat[$\rho = 0.4$]{\includegraphics[width=.45\textwidth]{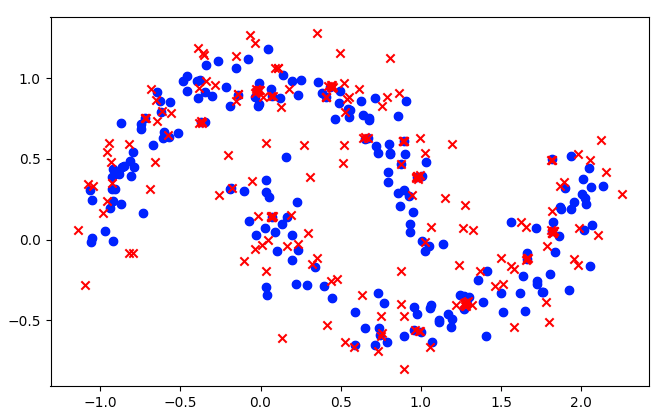}}
	\caption{In the two panels above, the blue points are a training sample from $p$, and the red points are generated examples from $q$. The parameter $\rho$ is the proportion of examples of $q$ that are generated by $q_{copy}$, with the rest of the examples being drawn from $q_{underfit}$. As $\rho$ increases, the rate of data-copying increases, which can be seen as the red points become increasingly clustered on top of a scattering of blue ones. However, due to $q_{underfit}$, there are still many red points that are relatively scattered from the blue points. At a global level, these effects average out making data-copying detection difficult for \cite{MCD2020}'s method.}
	
	\label{fig:halfmoons}
\end{figure*}

Our theoretical results show that given enough data, Algorithm \ref{alg:main} is guaranteed to detect data-copying. By contrast, the non-parametric test provided in \cite{MCD2020} can only guarantee detection in cases in which data-copying \textit{globally} occurs. For more local instances of data-copying, they rely on $k$-means clustering to partition the input space into localized regions, and then run their global test over each region separately.

Their approach clearly cannot detect all forms of data-copying -- a pathological generative distribution might copy in complex regions that are impossible to find using $k$-means clustering. However, for many practical examples considered in their paper, \cite{MCD2020} demonstrated considerable success with this approach. 

This motivates the following question: 
\begin{quote}
Do there exist natural data distributions over which Algorithm \ref{alg:main} offers a meaningful advantage?
\end{quote} 

We provide a partial answer to this question by experimentally comparing our approach with \cite{MCD2020}'s over a simple example on the half moons dataset.

\subsection{Experimental Setup}

\paragraph{Data Distribution:} Our data distribution, $p$, is the Halfmoon dataset with Gaussian noise ($\sigma = 0.1$).  

\paragraph{Generated Distribution:} Our generated distribution, $q$, is trained from an i.i.d sample of 2000 points from $p$, $S \sim p^{2000}$. Because our focus is on distinguishing \textit{data-copy detection} algorithms, we design $q$ to have a large amount of data-copying that is nevertheless subtle to detect. The key idea is to let $q$ be a mixture of two distributions, $q_{copy}$ and $q_{underfit}$. $q_{copy}$ will be an egregious data copier, and $q_{underfit}$ will be designed to average away the effects of $q_{copy}$. 

To construct $q_{copy}$, we first select a subset, $S' \subset S$, of $20$ training examples. Then, we define $q_{copy}$ to randomly output points from $S'$ combined with a small amount of spherical noise (with radius $0.02$). Thus, $q_{copy}$ can be sampled from by sampling a point, $x$, from $S'$ at uniform, and returning $x+\eta$ where $\eta$ is drawn at uniform from a disk of radius $0.02$.

To construct $q_{underfit}$, we combine our original data distribution, $p$, with a moderate amount of spherical noise (with radius $0.25$). Thus, $q_{underfit}$ can be sampled from by first sampling $x \sim p$, and returning $x+ \eta$ where $\eta$ is drawn at uniform from a disk of radius $0.25$. This distribution is meant to represent a fairly noisy and thus underfit version of $p$. 

Finally, we define $q$ as a mixture of $q_{copy}$ and $q_{underfit}$, with $q$ outputting a point from $q_{copy}$ with probability $\rho$. In total, we have $$q = \rho \cdot q_{copy} + (1-\rho) \cdot q_{good}.$$ We let, $\rho$, the weight of $q_{copy}$ within the mixture, be a varying parameter that gives rise to different generated distributions. Intuitively, the larger $\rho$ is, the higher the data-copying rate. This is illustrated in Figure \ref{fig:halfmoons}. In the both panels, we plot a sample of $200$ training points $p$ along with $200$ points from $q$. In the left panel, we let $\rho = 0.1$ in the right, we use $\rho = 0.4$. Although both cases show examples of data-copying, the right panel shows a visibly higher level of it. This is expected, as it is drawn from a distribution in which $q_{copy}$ is much more likely to be queried. 

\paragraph{Data-copying Detection Algorithms:} We run our algorithm, \dc{}, on $(S, q)$, We fix $\lambda = 20$ and $\gamma = 0.00025$ as constants for data-copy detection. $\lambda$ represents a healthy level of data-copying, and $\gamma = 0.00025$ ensures that our condition for 'copying' is quite stringent. Full details of our implementation (including our practical choices for parameters such as $b$ and $m$) are given in Appendix \ref{sec:experiments}.

For comparison, we also include an implementation of \cite{MCD2020}'s algorithm with varying amounts of clusters being used for the initial $k$-means clustering. To avoid confusion with the intrinsic dimension, $k$, we let $c$ denote the number of clusters, and consider $c \in \{1, 5, 10, 20\}$.

\subsection{Results}

The results are summarized in Table \ref{results}, with each column corresponding to a given choice of $p, q$ (determined by the parameter $\rho$), and each row corresponding to a separate data-copying detection algorithm. As a baseline, we include the case where $q = p$ (meaning we have a perfect generated distribution) in the first column.

We run our algorithm with parameters $\lambda$ and $\gamma$ fixed as $20$ and $0.00025$ in all cases. For \cite{MCD2020}'s algorithm, we consider their data-copy detection score over the most egregious cluster. 

Although our algorithm outputs real number estimates of the true data-copying rate, $\c_q$, \cite{MCD2020}'s algorithm outputs a score indicating the statistical significance of their metric under a null hypothesis of no data-copying occurring. To facilitate a simple comparison between our methods, for all algorithms, we simply output a simple yes or no to indicate whether our results were statistically significant up to the $p=0.05$ level. We include full results of our experiments along with several extensions (with varying parameters) in section \ref{sec:experiments_details}.

As expected, neither of our algorithms detect data-copying on the baseline, $q = p$. However, in all other cases, our algorithm successfully detects data-copying. On the other hand, for the smaller values of $\rho$, \cite{MCD2020}'s does not. Their algorithm is only able to achieve detection when the weight of $\rho = 0.4$, and even in this case they are unable to consistently do so.  

These results match the simple intuition of our algorithms. As seen in Figure \ref{fig:halfmoons}, the red data is sometimes very close to the blue data (when it comes from $q_{copy}$) but at other times fairly distant (when it comes from $q_{underfit}$). These effects have a strong canceling effect in \cite{MCD2020}'s test. However, our test is able to adjust for this by considering each training example separately.

\begin{table}[h]
\caption{Statistical Significance of data-copying Rates over Halfmoons} \label{results}
\begin{center}
\begin{tabular}{ |c||c|c|c|c|c| } 
 \hline
 \textbf{Algo} & $\mathbf{q = p}$ & $\mathbf{\rho = 0.1}$ & $\mathbf{0.2}$ & $\mathbf{0.3}$ & $\mathbf{0.4}$ \\ 
 \hline
 \hline
 \textbf{Ours} & \color{blue}no & \color{red}yes & \color{red}yes & \color{red}yes & \color{red}yes \\ 
 \hline
 $\mathbf{c=1}$ & \color{blue}no & \color{blue}no & \color{blue}no & \color{blue}no & \color{blue}no \\ 
 \hline
 $\mathbf{c=5}$ & \color{blue}no & \color{blue}no & \color{blue}no & \color{blue}no & \color{red}yes \\ 
 \hline
 $\mathbf{c=10}$ & \color{blue}no & \color{blue}no & \color{blue}no & \color{blue}no & \color{red}yes \\ 
 \hline
 $\mathbf{c=20}$ & \color{blue}no & \color{blue}no& \color{blue}no & \color{red}yes & \color{red}yes\\ 
 \hline
\end{tabular}
\end{center}
\end{table}

 \subsection{Further Experimental Details}\label{sec:experiments_details}

We begin by reviewing the definitions of $p$ and $q$. $p$ is the Halfmoons dataset with Gaussian noise $(\sigma = 0.1)$. To define $q$, we have a mixture of two distributions, $q_{copy}$ and $q_{underfit}$, which are defined as follows.

We draw $S \sim p^{2000}$ i.i.d, and then randomly select $S' \subset S$ with $|S'| = 20$. These points will form a basis for the support of $q_{copy}$. To sample $x \sim q_{copy}$, we take the following two steps.
\begin{enumerate}
	\item Sample $z \sim S'$ at uniform.
	\item Sample $\eta \sim U(B(0, 0.02))$, where $U(B(0, r))$ denotes the uniform distribution over the ball of radius $r$. 
	\item Output $x = z + \eta$.
\end{enumerate}
$q_{copy}$ can be thought of as an egregious data memorizer that injects a small amount of noise to give its inputs some (paltry) variety. 

By contrast, to sample $x \sim q_{underfit}$, we do the following:
\begin{enumerate}
	\item Sample $z \sim p$.
	\item Sample $\eta \sim U(B(0, 0.25))$.
	\item Output $x = z + \eta$.
\end{enumerate}
In this case, the larger amount of noise serves to induce \textit{underfitting}, in which $q_{copy}$ does not assign the support of $p$ enough probability mass. 

Finally, to sample from $q$, we do the following.
\begin{enumerate}
	\item With probability $\rho$, sample $x \sim q_{copy}$.
	\item With probability $1 - \rho$, sample $x \sim q_{underfit}$. 
\end{enumerate}

\paragraph{\cite{MCD2020}'s test:} Their test works as follows. Let $S$ denote the original training sample, $Q$ denote a sample of generated examples, with $Q \sim q^{n}$, and $P$ denote a fresh set of test examples, with $P \sim p^n$. They then check to see if $Q$ is systematically closer to $S$ than $P$, (thus suggesting data copying). To do so, they use a statistical test as follows:
\begin{enumerate}
	\item Let $S = \{x_1, x_2, \dots, x_n\}$, $P = \{y_1, y_2, \dots, y_n\}$, $Q = \{z_1, z_2, \dots, z_n\}$. 
	\item Let $\Delta$ denote the number of pairs $(i, j)$ for which $d(y_i, S) < d(z_j, S)$. A large value of $\Delta$ indicates that a \textit{small} amount of data copying, as it implies that $Q$ is further from $S$ than $P$. A small value of $\Delta$ indicates a \textbf{large} amount of data copying.
	\item Reflecting this, let $Z = \frac{\Delta - \frac{n^2}{2}}{\sqrt{\frac{n^2(2n+1)}{12}}}$. This gives a $Z$-score of $\Delta$. \cite{MCD2020} show that, $p = q$, then the probability of results as significant as $Z < -5$ would be at most the probability of getting a $-5\sigma$ event when sampling from a Gaussian. We use $Z < -3$ to indicate \textit{statistically significant results}, and output the corresponding $P$-values ($P =  0.0027$ being significant) in our results. 
\end{enumerate}

Finally, to account for data copying occurring within specific regions, \cite{MCD2020} perform a preprocessing step in which they cluster the training data, $S$ into $c$ regions using $k$-means clustering. They then run their test separately on each region by assigning points from $P$ and $Q$ into the regions containing them. We output the \textit{lowest} $Z$-score over any region, and vary the number of clusters with $c= 1, 5, 10, 20$. 

\paragraph{Our test:} We run Algorithm \ref{alg:main} with input $(S, q)$ with a few adjustments.
\begin{enumerate}
	\item We directly set $m = 200,000$. While the theoretical value of $m$ is significantly higher (growing $O(n^2)$), we note that this is primarily done for achieving theoretical guarantees. In practice, often a much lower amount of data is needed.
	\item For $Est(x, r, S)$, we set $b= 400$, which is a bit lower than the theoretically predicted value. As for $m$, we do this because for practical (and well-behaved) datasets, $Est(x, r, S)$ converges much more quickly than theory suggests. 
	\item We set $\lambda = 20$ and $\gamma = \frac{1}{4000}$, giving relatively stringent conditions on data copying. 
\end{enumerate}
Finally, our test outputs, $\hat{\c}_q$, which is an estimate of the data copy rate. Technically, any non-zero of $\hat{\c}_q$ indicates a degree of data copying. To facilitate a more direct comparison with \cite{MCD2020}, we convert our results into statistical tests by doing the following.
\begin{enumerate}
	\item We compute $\hat{\c}_p$, which is an estimate for the data copying rate when the generated distribution exactly equals $p$ over $1000$ different instances (each instance corresponding to a freshly drawn training set $S$).
	\item We then compute $\hat{\c}_q$ when $q$ is as above.
	\item We finally output the fraction of the time that $\hat{\c}_p > \hat{\c}_q$, thus giving us a P-value by giving us the rate at which the null-hypothesis gives results as significant as those that we observe. 
\end{enumerate} 

\paragraph{Results:} We give a more complete version of Table \ref{results}, with the $P$-values themselves being outputted in the table. For consistency, we output the median $P$-value obtained over 10 runs for each experiment. 

\begin{table}[h]
\caption{P-values of data-copying Rates over Halfmoons} \label{results:full}
\begin{center}
\begin{tabular}{ |c||c|c|c|c|c| } 
 \hline
 \textbf{Algo} & $\mathbf{q = p}$ & $\mathbf{\rho = 0.1}$ & $\mathbf{0.2}$ & $\mathbf{0.3}$ & $\mathbf{0.4}$ \\ 
 \hline
 \hline
 \textbf{Ours} & \color{blue}1.000 & \color{red}0.000 & \color{red}0.000 & \color{red}0.000 & \color{red}0.000 \\ 
 \hline
 $\mathbf{c=1}$ & \color{blue}0.5412 & \color{blue}1.000 & \color{blue}1.000 & \color{blue}0.858 & \color{blue}0.026 \\ 
 \hline
 $\mathbf{c=5}$ & \color{blue}0.113 & \color{blue}0.976 & \color{blue}0.780 & \color{blue}0.081 & \color{red}0.007 \\ 
 \hline
 $\mathbf{c=10}$ & \color{blue}0.090 & \color{blue}0.814 & \color{blue}0.294 & \color{blue}0.013 & \color{red}0.000 \\ 
 \hline
 $\mathbf{c=20}$ & \color{blue}0.035 & \color{blue}0.279& \color{blue}0.093 & \color{red}0.005 & \color{red}0.000\\ 
 \hline
\end{tabular}
\end{center}
\end{table}

\section{Estimating $k$}\label{sec:estimating_alpha}

The main idea of our method is to simply pick any point $x_i$ in the training sample, $S = \{x_1, x_2, \dots, x_n\}$, choose two small balls centered at $x_i$, and then measure the ratio of their probability masses as well as their radii. For sufficiently small balls, these ratios will be related by a power of $k$, and we can consequently just solve for an estimate of $k$, $\hat{k}$. Finally, since for our purposes it is extremely important that our estimate be \textit{exactly} correct, we round $\hat{k}$ to the nearest integer. While this clearly fails in cases that $k$ is not an integer, for most distributions $k$ precisely equals the dimension of the underlying data manifold (see for example Proposition \ref{prop:manifold_works}). These steps are enumerated in the following algorithm, $Estimate\_k(S)$. 

\begin{algorithm}
   \caption{$Estimate\_k(S)$}
   \label{alg:estimate_alpha}

   \DontPrintSemicolon

	$n \leftarrow |S|$\;
	
	Pick $x \in S$ arbitrarily.\;
	
   $b \leftarrow \frac{64(d+2)\ln\frac{16n}{\delta}}{\epsilon^2}.$\;
   
   $r_* = \min \{r: |S \cap B(x, r)| = 2b\}$.\;
   
  $s_* =\min \{s: |S \cap B(x, s)| = b\}$\;
  
  $\hat{k} = round \left(\frac{1}{\log_2 \frac{r_*}{s_*}} \right)$\;
  
  Return $\hat{k}$. 

\end{algorithm}

We now give sufficient conditions under which Algorithm \ref{alg:estimate_alpha} successfully recovers $k$. 

\begin{proposition}
Let $p$ be an $k$-regular distribution, and let $\delta > 0$ be arbitrary. Let $\phi = \frac{1}{2k}$. Then there exists a constant $C$ such that if $$n \geq C\frac{d\ln\frac{d}{\delta\phi p_\phi}}{\phi^2 p_\phi},$$ with probability at least $1 - \delta$ over $S \sim p^n$, $Estimate\_k(S) = k$. 
\end{proposition}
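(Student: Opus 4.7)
The plan is to combine two ingredients: a standard VC-type uniform convergence bound over the family of closed balls in $\R^d$ (VC dimension $d+1$), and the $k$-regularity condition. Uniform convergence will pin the ratio $p(B(x,r_*))/p(B(x,s_*))$ near $2$, since by construction of $r_*$ and $s_*$ the balls $B(x,r_*)$ and $B(x,s_*)$ contain exactly $2b$ and $b$ sample points; regularity will equate the same ratio to approximately $(r_*/s_*)^k$. Solving for $k$ via $1/\log_2(r_*/s_*)$ then produces a continuous estimate that, provided the multiplicative slop is small enough, rounds to $k$ exactly.

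First I would invoke a multiplicative Chernoff-style uniform convergence bound over closed balls: with the algorithm's choice $b = \Theta(d\ln(n/\delta)/\phi^2)$, on an event of probability at least $1-\delta$, every ball $B$ with $|S \cap B| \geq b$ satisfies $|S\cap B|/n \in (1\pm\phi)\,p(B)$. Applied to $B(x,r_*)$ and $B(x,s_*)$, this gives $p(B(x,r_*))/p(B(x,s_*)) \in [2(1-\phi)/(1+\phi),\; 2(1+\phi)/(1-\phi)]$. The hypothesized sample-size lower bound $n = \Omega(d\ln(d/(\delta\phi p_\phi))/(\phi^2 p_\phi))$ is calibrated exactly so that $2b/n \leq p_\phi$, which ensures $p(B(x,r_*)) \leq p_\phi$ and makes the regularity condition applicable at this scale. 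Applying Definition \ref{def:regular} to the pair $s_* < r_*$ then sandwiches the same ratio by $(1+\phi/3)^{\pm 1}(r_*/s_*)^k$, so $(r_*/s_*)^k = 2(1 + \eta)$ for some $|\eta| = O(\phi)$. Taking $\log_2$ and reciprocating yields $1/\log_2(r_*/s_*) = k/(1 + O(\phi)) = k + O(k\phi)$; substituting $\phi = 1/(2k)$ makes this additive deviation a universal constant, and choosing $C$ large enough drives it strictly below $1/2$, so rounding recovers $k$ exactly.

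The main obstacle is the constant-tracking through the $\log_2$-and-reciprocate step: a relative error $\eta = O(\phi)$ on $(r_*/s_*)^k$ is amplified by a factor of $k$ when extracted as an additive error on $k$, which is precisely why we must take $\phi = \Theta(1/k)$ rather than a constant. A minor bookkeeping point is that the $\ln n$ appearing inside $b$ should be self-consistently bounded by $O(\ln(d/(\delta\phi p_\phi)))$ once $n$ is plugged in at its hypothesized value; this is routine, as is verifying that a single point at the boundary (since $r_*$ and $s_*$ are defined by a minimum) affects the empirical mass only by $1/n$, which is absorbed into the $\phi \cdot b/n$ slack.
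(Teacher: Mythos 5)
Your proposal follows essentially the same route as the paper's proof: multiplicative VC uniform convergence over $\ell_2$ balls containing at least $b$ sample points, using the sample-size bound to guarantee $p(B(x,r_*)) \leq p_\phi$ so that $k$-regularity applies, sandwiching $(r_*/s_*)^k$ within a $(1+O(\phi))$ factor of $2$, and then taking $\log_2$, reciprocating, and rounding, with $\phi = \frac{1}{2k}$ chosen precisely so the amplified additive error on $k$ stays below $\frac{1}{2}$. The only small correction is that the sub-$\frac{1}{2}$ slack is governed by the constant in the algorithm's choice of $b$ (relative to $\beta_n^2 n/\phi^2$), not by enlarging $C$ — the role of $C$ is exactly the one you assign it earlier, namely forcing $p(B(x,r_*)) \leq p_\phi$; with that reading, your argument matches the paper's.
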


\begin{proof}
We begin by first applying standard uniform convergence over $\ell_2$ balls in $\R^d$ (which have a VC dimension of at most $d+2$). To this end, let $$\beta_n = \sqrt{\frac{4(d+2)\ln \frac{16n}{\delta}}{n}}.$$ Then by the standard result of Vapnik and Chervonenkis, with probability $1-\delta$ over $S \sim p^n$, for all $x \in \R^d$ and all $r > 0$, 
\begin{equation}\label{aeqn:VC}
\frac{|S \cap B(x,r)|}{n} - \beta_n\sqrt{\frac{|S \cap B(x,r)|}{n}} \leq p(B(x,r)) \leq \frac{|S \cap B(x,r)|}{n} + \beta_n^2 + \beta_n\sqrt{\frac{|S \cap B(x,r)|}{n}}.
\end{equation}

Next, assume that 
\begin{equation}\label{aeqn:bound_n}
n \geq \frac{1776(d+2)\ln \left(\frac{28416(d+2)}{\delta \phi^2 p_\phi} \right)}{\phi^2 p_\phi}.
\end{equation} 
It is clear that for an appropriate constant, we have $n = O \left(\frac{d\ln\frac{d}{\delta \phi p_\phi}}{\phi^2 p_\phi} \right)$. Thus, it suffices to show that if Equation \ref{aeqn:VC} holds, then $\hat{k} = k$ (as the former holds with probability $1-\delta$ over $S$). We now show the following claim.

\textbf{Claim:} Let $r > 0$ be any radius with $|S \cap B(x, r)| \geq b$. Then $$\left(1 + \frac{\phi}{9}\right)^{-1} \leq \frac{|S \cap B(x, r)|}{n p(B(x, r))} \leq \left(1 + \frac{\phi}{9} \right).$$ 

\begin{proof} From the definition of $b$, we have that \begin{equation}\label{aeqn:defining_b}\frac{b}{n} = \frac{400(d+2)\ln\frac{16n}{\delta}}{n\phi^2} = \frac{100\beta_n^2}{\phi^2}.\end{equation} Let $c = \sqrt{\frac{b'}{n\beta_n^2}}$. Then $b' \geq b$ implies that $c \geq \frac{10}{\phi}$. It follows that \begin{equation}\label{aeqn:c_stuff}\frac{c+1}{c^2} \leq \frac{1}{c-1} \leq \frac{\phi}{9}.\end{equation} Substituting Equations \ref{aeqn:defining_b} and \ref{aeqn:c_stuff} into  Equation \ref{aeqn:VC}, we have 
\begin{equation}\label{aeqn:epsilon_lower_bound}
\begin{split}
\frac{b'}{np(B(x, r))} &\geq \frac{\frac{b'}{n}}{\frac{b'}{n} + \beta_n^2 + \beta_n \sqrt{\frac{k'}{n}}} \\
&= \frac{c^2}{c^2 + 1 + c} \\
&= \left(1 + \frac{c+1}{c^2} \right)^{-1} \\
&\geq \left(1 + \frac{\phi}{9}\right)^{-1}
\end{split}
\end{equation}
and 
\begin{equation}\label{aeqn:epsilon_upper_bound}
\begin{split}
\begin{split}
\frac{b'}{np(B(x, r))} &\leq \frac{\frac{b'}{n}}{\frac{b'}{n} - \beta_n \sqrt{\frac{b'}{n}}} \\
&= \frac{c^2}{c^2 - c} \\
&= 1 + \frac{1}{c-1} \\
&\leq 1 + \frac{\phi}{9},
\end{split}
\end{split}
\end{equation}
Together, Equations \ref{aeqn:epsilon_lower_bound} and \ref{aeqn:epsilon_upper_bound} imply our claim.
\end{proof}

We now return to the proof of Proposition \ref{prop:est_works}. We now show that $p(B(x, s_*) \leq p(B(x, r_*)) \leq p_\phi$. To do so, we first bound $\beta_n^2$ as follows. We have, 
\begin{equation}\label{aeqn:bound_alpha_n}
\begin{split}
\beta_n^2 &= \frac{4(d+2)\ln(16n/\delta)}{n} \\
&= 4(d+2) \ln \left(\frac{28416(d+2)}{\delta \phi^2 p_\phi}\ln\left(\frac{28416(d+2)}{\delta \phi^2 p_\phi} \right)\right) \frac{\phi^2 p_\phi}{1776(d+2)\ln \left(\frac{28416(d+2)}{\delta \phi^2 p_\phi} \right)} \\
&\leq 8(d+2)\ln \left(\frac{28416(d+2)}{\delta \phi^2 p_\phi}\right) \frac{\phi^2 p_\phi}{1776(d+2)\ln \left(\frac{28416(d+2)}{\delta \phi^2 p_\phi} \right)} \\
&= \frac{p_\phi \phi^2}{222}.
\end{split}
\end{equation}
Next, by Equations \ref{aeqn:VC} and \ref{aeqn:bound_alpha_n} along with the fact that $b = \frac{100\beta_n^2}{\phi^2}$ (Equation \ref{aeqn:defining_b}) that 
\begin{equation*}
\begin{split}
p(B(x, r_*)) &\leq \frac{|S \cap B(x, r_*)|}{n} + \beta_n^2 + \beta_n \sqrt{\frac{|S \cap B(x, r_*)|}{n}} \\
&= \frac{2b}{n} + \beta_n^2 + \beta_n\sqrt{\frac{2b}{n}} \\
&= \beta_n^2\left(\frac{200}{\phi^2} +1 + \frac{20}{\phi}\right) \\
&\leq \frac{p_\phi \phi^2}{222}\frac{221}{\phi^2}  = p_\phi.
\end{split}
\end{equation*}
It follows from Definition \ref{def:regular} that 
\begin{equation}\label{aeqn:using_regular}
\left(1 + \frac{\phi}{3}\right)^{-1}\frac{p(B(x, r_*))}{p(B(x, s_*))} \leq \frac{r_*^k}{s_*^k} \leq \left(1 + \frac{\phi}{3}\right)\frac{p(B(x, r_*))}{p(B(x, s_*)}.\end{equation} 

However, $|S \cap B(x, s_*)| = b$ and $|S \cap B(x, r_*)| = 2b$, which means that we can safely apply our claim to both of these cases. By substituting Equations \ref{aeqn:epsilon_lower_bound} and \ref{aeqn:epsilon_upper_bound} (for both $r_*$, $s_*$) into Equation \ref{aeqn:using_regular}, along with the fact that $\left(1+\frac{\phi}{3}\right)\left(1 + \frac{\phi}{9}\right) \leq \left(1 + \frac{\phi}{2}\right)$, it follows that 
\begin{equation}\label{aeqn:bound_ratio}
\left(1 + \frac{\phi}{2}\right)^{-1} \leq \frac{r_*^k}{2s_*^k} \leq \left(1 + \frac{\phi}{2}\right)
\end{equation}

Finally, by taking logs of Equation \ref{aeqn:bound_ratio} and simplifying, we have that 
\begin{equation*}
\frac{k}{1 + \log_2 \left(1 + \frac{\phi}{2}\right)} \leq \frac{1}{\log_2 \frac{r_*}{s_*}} \leq \frac{k}{1 - \log_2 \left(1 + \frac{\phi}{2} \right)}
\end{equation*}
It consequently suffices to show that $k$ is the unique integer between $\frac{k}{1 + \log_2 \left(1 + 8\epsilon\right)}$ and $\frac{k}{1 - \log_2 \left(1 + 2\epsilon\right)}$. However, this is simply a result of the assumption that $\phi = \frac{1}{2k}$ and standard manipulations, which completes the proof. 
\end{proof}

\section{Proofs}

All proofs to theorems and propositions in the main body are in this section. For each result, we include a restatement for convenience. 

\subsection{Proof of Theorem \ref{thm:KDE}}

We prove a stronger version of Theorem \ref{thm:KDE}.

\begin{theorem}[Theorem \ref{thm:KDE}]
Let $1 < \lambda$ and $\gamma > 0$. Let $\sigma_n$ be a sequence of bandwidths and $K$ be any regular kernel function. For any $n > 0$ there exists a probability distribution $\pi$ with full support over $\R^d$ such for any $S \sim \pi^n$, a KDE trained with bandwidth $\sigma_n$ and kernel function $K$ has data-copy rate $\c_q \geq \frac{1}{2}$.
\end{theorem}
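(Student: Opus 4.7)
The plan is to pick $\pi$ so diffuse that wherever the training points land, the narrow KDE bumps dramatically overrepresent tiny balls around those points relative to the negligible mass $\pi$ assigns to them. Concretely, I will take $\pi$ to be an isotropic Gaussian with enormous variance; its density is uniformly bounded by $(2\pi\tau^2)^{-d/2}$, so every ball has mass at most a fixed multiple of $\tau^{-d}$, which can be driven below any threshold by choosing $\tau$ large.

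First, since $K$ is a probability density, I choose $C > 0$ large enough that $\int_{B(0,C)} K(z)\, dz \geq 1/2$. Next, set the threshold $\eta = \min\!\bigl(\gamma,\, \tfrac{1}{2\lambda n}\bigr)$ and choose $\tau$ large enough that for $\pi = \mathcal{N}(0,\tau^2 I_d)$ we have $\pi(B(x, C\sigma_n)) \leq \eta$ for every $x \in \R^d$; this is immediate from the uniform bound on the Gaussian density and the fact that $n,\sigma_n,\lambda,\gamma,C,d$ are all fixed. Note $\pi$ has full support on $\R^d$, as required.

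Now fix any realization $S = \{x_1,\dots,x_n\}$ and set $B_i = B(x_i, C\sigma_n)$. By the change-of-variable $z = (x-x_i)/\sigma_n$, the contribution of the $i$-th bump alone to $q(B_i)$ equals $\tfrac{1}{n}\int_{B(0,C)} K(z)\, dz \geq \tfrac{1}{2n}$, so $q(B_i) \geq \tfrac{1}{2n} \geq \lambda\, \pi(B_i)$ and simultaneously $\pi(B_i) \leq \gamma$. Hence $B_i$ witnesses that every point inside it is a $(\lambda,\gamma)$-copy of $x_i$.

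Finally, to lower-bound $c_q$, I use the standard mixture representation: sampling $x' \sim q$ amounts to drawing $j$ uniform in $\{1,\dots,n\}$ and then $x' = x_j + \sigma_n z$ with $z \sim K$. Conditional on index $j$, the event $x' \in B_j$ is exactly $\|z\| \leq C$, which has probability at least $1/2$ by the choice of $C$. Hence
\[
c_q \;\geq\; \Pr_{x'\sim q}\!\Bigl[x' \in \textstyle\bigcup_i B_i\Bigr] \;\geq\; \tfrac{1}{2},
\]
as claimed. The only real choice is the quantitative selection of $\tau$; there is no genuine obstacle, since the construction is insensitive to the specific shape of $K$ beyond it being a probability density, and the argument works for every sample $S$ rather than with positive probability.
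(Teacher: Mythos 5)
Your proof is correct and follows essentially the same route as the paper's: choose a radius capturing kernel mass $\tfrac{1}{2}$, make the base distribution so diffuse that every ball of that radius has mass at most $\min\left(\gamma, \tfrac{1}{2\lambda n}\right)$, lower-bound $q$ of each ball by the contribution of its own bump, and finish with the mixture representation of the KDE to get mass at least $\tfrac{1}{2}$ on the union. The only difference is cosmetic: you take a wide Gaussian where the paper takes the uniform distribution on a large cube $[0,D]^d$ (your choice has the small advantage of literally having full support over $\R^d$, as the statement requires).
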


We begin by giving necessary conditions for a kernel $K$ to be regular.

\begin{definition}\label{defn:regular_kernel}
A kernel function, $K: \R^d \to \R_{\geq 0}$ is regular if it satisfies the following conditions.
\begin{enumerate}
	\item $K$ is radially symmetric. That is, there exists $h: \R \to \R$ such that $K(x) = h(||x||)$.
	\item $K$ is regularized. That is, $\int_{\R^d} K(x)dx = 1$.
	\item $K$ decays to $0$. That is, $\lim_{t \to \infty} h(t) = \lim_{t \to -\infty}h(t) = 0$. 
\end{enumerate}
\end{definition}

It is well known that under suitable choices of $\sigma_n$ and several technical assumptions that a regular KDE converges towards the true data distribution in the large sample limit. We now prove Theorem \ref{thm:KDE}.

\begin{proof}
Fix any $n$, and for convenience let denote $\sigma_n$ by $\sigma$. Because $K$ is non-negative, by condition 2. of Definition \ref{defn:regular_kernel}, there exists $R > 0$ such that $\int_{||x|| \leq R} K(x)dx = \frac{1}{2}$. Let $$D = R\sigma \left(\max\left(2n\lambda, \frac{1}{\gamma}\right)\omega_d\right)^{1/d},$$ where $\omega_d$ denotes the volume of the unit ball in $d$ dimensions. We let $\pi$ denote the uniform distribution over $[0, D]^d$, and claim that this suffices. 

Let $S \sim \pi^n$ be a training sample, with $S = \{x_1, x_2, \dots, x_n\}$, and let $q$ be a KDE trained from $S$ with bandwidth $\sigma$ and kernel function $K$. Suppose $x \sim q$ satisfies that $x \in B(x_i, R\sigma)$. We claim that $q$ $(\lambda, \gamma)$-copies $x$.

To see this, it suffices to bound $\pi((B(x_i, R\sigma))$ and $q(B(x_i, R\sigma))$. The former quantity satisfies
\begin{equation*}
\begin{split}
\pi((B(x_i, R\sigma)) &\leq \frac{vol(B(x_i, R\sigma))}{D^d} \\
&= \frac{\omega_d(R\sigma)^d}{D^d} \\
&= \frac{1}{\max\left(2n\lambda, \frac{1}{\gamma}\right)} \\ 
&\leq \min\left(\gamma, \frac{1}{2n\lambda}\right),
\end{split}
\end{equation*}
which implies that the third condition of Definition \ref{defn:data_copy} is met. Meanwhile, $q((B(x_i, R\sigma))$ can be bounded as
\begin{equation*}
\begin{split}
q((B(x_i, R\sigma)) &= \int_{B(x_i, R\sigma)} \frac{1}{n\sigma}\sum_{j = 1}^n K \left(\frac{x - x_j}{\sigma} \right)dx \\
&\geq \int_{B(x_i, R\sigma)} \frac{1}{n\sigma}K \left(\frac{x - x_i}{\sigma} \right)dx \\
&= \int_{||u|| \leq R} \frac{1}{n}K(u)du \\
&\geq \frac{1}{2n},
\end{split}
\end{equation*}
which implies that $q((B(x_i, R\sigma)) \geq \lambda p(B(x_i, R\sigma))$ giving the second condition of Definition \ref{defn:data_copy}. Thus, it follows that $q$ $(\lambda,\gamma)$-copies all $x \in B(x_i, R\sigma)$. It consequently suffices to bound $q\left(\bigcup_{i= 1}^n B(x_i, R\sigma)\right)$. 

To do so, let $\eta$ denote the probability distribution over $\R^d$ with probability density function $\eta(x) = \frac{1}{\sigma}K(\frac{x}{\sigma})$, and let $\hat{q}$ denote the probability density function induced by the following random process:
\begin{enumerate}
	\item Select $1 \leq i \leq n$ at uniform.
	\item Select $x \sim \eta$.
	\item Output $x + x_i$.
\end{enumerate}
The key observation is that $\hat{q}$ has precisely the same density function as $q$ -- $q$s density function is clearly a convolution of selecting $x_i$ and then adding $x \sim \eta$. Applying this, we have
\begin{equation*}
\begin{split}
\Pr_{x \sim q}\left[x \in \bigcup_{i= 1}^n B(x_i, R\sigma)\right] &= \Pr_{x \sim \hat{q}}\left[x \in \bigcup_{j= 1}^n B(x_j, R\sigma)\right] \\
&= \frac{1}{n} \sum_{i=1}^n \Pr_{x \sim \tau}\left[x \in \left(\bigcup_{j= 1}^n B(x_j, R\sigma) - x_i\right)\right]\\
&\geq \frac{1}{n} \sum_{i=1}^n \Pr_{x \sim \tau} \left[x \in \left(B(x_i, R\sigma) - x_i\right)\right] \\
&= \int_{B(0, R\sigma)} \tau(x)dx \\
&= \int_{B(0, R\sigma)} \frac{1}{\sigma} K\left(\frac{x}{\sigma}\right)dx \\
&= \int_{B(0, R)} K(u)du \\
&\geq \frac{1}{2},
\end{split}
\end{equation*}
completing the proof.

\end{proof}

\subsection{Proof of Proposition \ref{prop:manifold_works}}

\begin{proposition}[Proposition \ref{prop:manifold_works}] Let $p$ be a probability distribution with support precisely equal to a smooth, compact, $k$-dimensional sub-manifold of $\R^d$, $M$. Additionally, suppose that $p$ has a continuous density function over $M$. Then it follows that $p$ is $k$-regular.
\end{proposition}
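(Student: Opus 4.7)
The plan is to establish, for each $\alpha > 0$, a uniform local estimate of the form
\begin{equation*}
p(B(x, r)) \;=\; c(x)\, f(x)\, r^k \bigl(1 + \eta(x, r)\bigr)
\end{equation*}
valid for all $x \in M$ and all $0 < r \leq r_0(\alpha)$, where $f$ is the continuous density of $p$ with respect to the Riemannian volume on $M$, $c(x) > 0$ is a bounded continuous function depending only on the local geometry of $M$ at $x$ (equal to $\omega_k$, the volume of the Euclidean unit $k$-ball, at interior points, and a positive multiple of $\omega_k$ at boundary points), and $|\eta(x,r)| \leq \alpha/10$. Once this is proved, regularity is immediate: for $0 < s < r \leq r_0(\alpha)$, the factor $c(x) f(x)$ cancels in the ratio
\begin{equation*}
\frac{p(B(x,r))}{p(B(x,s))} \;=\; \frac{r^k}{s^k} \cdot \frac{1 + \eta(x, r)}{1 + \eta(x, s)},
\end{equation*}
and the second factor is within $(1+\alpha/3)^{\pm 1}$ by the bound on $\eta$ (shrinking $r_0$ if necessary).

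The local estimate would be proved by parameterizing a neighborhood of $x$ in $M$ by the tangent space $T_xM$ via the exponential map (or by a local chart for a manifold with boundary at boundary points). Under such a parameterization the Riemannian volume element pulls back to $\bigl(1 + O(\rho^2)\bigr)$ times the Euclidean volume element, where $\rho$ is the geodesic distance from $x$ and the constant in the $O(\cdot)$ is controlled by the second fundamental form of $M$. The intersection $B(x, r) \cap M$ corresponds, under this parameterization, to a set differing from a Euclidean $k$-ball (or half-ball at a boundary point) of radius $r$ by relative measure $O(r^2)$, again with constants depending on the local curvature. Continuity of $f$ yields $|f(y) - f(x)| \leq \omega_f(r)$ on $B(x, r) \cap M$, where $\omega_f$ is the modulus of continuity of $f$. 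Integrating and combining these three ingredients gives the local estimate for each fixed $x$. Uniformity in $x$ is then a consequence of compactness of $M$: the second fundamental form is bounded uniformly, the reach of $M$ is bounded away from zero, and $f$ is uniformly continuous on $M$, so a single $r_0(\alpha)$ works for every $x \in M$ simultaneously.

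To choose $p_\alpha$, note that since $M$ is the support of $p$ and $f$ is continuous, $f$ is strictly positive on $M$, hence (by compactness) bounded below by some $f_{\min} > 0$; combined with a uniform lower bound $c_{\min} > 0$ on $c(x)$, the local estimate yields $p(B(x, r_0)) \geq \tfrac{1}{2} c_{\min} f_{\min} r_0^k$ for every $x \in M$. Setting $p_\alpha$ to be any positive value strictly less than this quantity forces $r \leq r_0$ whenever $p(B(x,r)) \leq p_\alpha$, at which point the ratio bound just established applies. The main obstacle is the differential-geometric volume estimate of $B(x, r) \cap M$: it is standard but notationally heavy, and requires particular care at boundary points, where the local model is a halfspace rather than all of $\R^k$ and the constant $c(x)$ depends on the boundary shape but is nonetheless continuous in $x$ and uniformly bounded below on the compact set $M$.
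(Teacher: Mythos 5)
Your treatment of boundary points is the genuine gap. The claimed uniform expansion $p(B(x,r)) = c(x)\,f(x)\,r^k\,(1+\eta(x,r))$ with $|\eta(x,r)|\le \alpha/10$ for \emph{all} $x\in M$ and all $r\le r_0(\alpha)$ cannot hold when $\partial M\neq\emptyset$: for a point $x$ at distance $\epsilon\ll r_0$ from $\partial M$, the quantity $p(B(x,r))/(f(x)r^k)$ is close to $\omega_k$ for $r\ll\epsilon$ but close to $\omega_k/2$ for $\epsilon\ll r\le r_0$, a factor-two swing that no single $c(x)$ and no $|\eta|\le\alpha/10$ can absorb; relatedly, the $c$ you describe (equal to $\omega_k$ at interior points and $\omega_k/2$ on $\partial M$) is not continuous, contrary to what you assert. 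This is not just a presentational issue: the conclusion itself is false for manifolds with boundary under the paper's Definition of $k$-regularity. Take $p$ uniform on $[0,1]$ ($k=1$): for any $p_\alpha>0$ choose $x=\epsilon$, and radii $s\ll\epsilon\ll r\le p_\alpha/2$; then $p(B(x,r))=r+\epsilon\le p_\alpha$ while $p(B(x,s))=2s$, so $p(B(x,r))/p(B(x,s))\approx\tfrac{1}{2}\cdot\tfrac{r}{s}$, which lies outside $\left[(1+\alpha/3)^{-1}\tfrac{r}{s},\,(1+\alpha/3)\tfrac{r}{s}\right]$ for small $\alpha$. So you must restrict to closed $M$ (empty boundary); the paper's own proof does this implicitly, since it reduces the proposition to a lemma whose second hypothesis -- continuity of $x\mapsto\lim_{r\to 0}p(B(x,r))/r^k$ on the support -- fails exactly at boundary points.

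For closed $M$ your argument is sound, and it differs from the paper's mainly in being quantitative. The paper proves a soft lemma: if $(x,r)\mapsto p(B(x,r))$ is continuous, $x\mapsto\lim_{r\to 0}p(B(x,r))/r^k$ is well defined, continuous and strictly positive, and the support is compact, then $p$ is $k$-regular; the proof extracts $p_\alpha$ non-constructively from a finite subcover, and the hypotheses are then verified by comparing Euclidean with geodesic balls. Your exponential-map estimate with bounded second fundamental form and positive reach buys an effective $r_0(\alpha)$ and $p_\alpha$, at the cost of heavier differential geometry. One further nit: strict positivity of $f$ does not follow from ``support equals $M$'' plus continuity -- a density vanishing at a single point still has full support, and at such a point $k$-regularity genuinely fails (e.g.\ $f(\theta)\propto\theta^2$ near one point of a circle gives $p(B(x,r))\asymp r^3$ there) -- so positivity should be taken as part of the hypothesis, as the paper itself does implicitly when it invokes a density that is ``continuous and non-zero everywhere.''
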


To prove this, we begin with the following lemma.

\begin{lemma}\label{lem:conditions_for_alpha_regularity}
Let $k > 0$ be a constant. Let $p$ be a probability distribution for which the following properties hold:

1. The map $supp(p) \times \R^+ \to R^+$ defined by $(x, r) \mapsto p(B(x, r))$ is continuous.

2. The map $supp(p) \to \R^+$ defined by $x \mapsto \lim_{r \to 0}\frac{p(B(x, r)}{r^k}$ is  well defined, continuous, and strictly positive over its domain.

3. $p$ has compact support. 

Then $p$ is $k$-regular. 
\end{lemma}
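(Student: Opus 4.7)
The strategy is to reduce $k$-regularity to the uniform convergence statement $\sup_{x \in \mathrm{supp}(p)} |g(x, r) - f(x)| \to 0$ as $r \to 0^+$, where I write $g(x, r) := p(B(x, r))/r^k$ and $f(x) := \lim_{r \to 0} g(x, r)$ as in condition~2. Once this uniform convergence is in hand, $k$-regularity follows by routine bookkeeping: given $\alpaca > 0$, I would choose $\eta = \alpaca/(6+\alpaca)$ (so that $a, b \in [(1-\eta)f(x), (1+\eta)f(x)]$ forces $a/b \in [(1+\alpaca/3)^{-1}, 1+\alpaca/3]$), use the uniform convergence to pick a radius $r_\alpaca$ such that $g(x, r) \in [(1-\eta)f(x), (1+\eta)f(x)]$ for every $x \in \mathrm{supp}(p)$ and every $r \leq r_\alpaca$ (compactness plus positivity of $f$ gives a uniform positive lower bound on $f$), and finally set $p_\alpaca := \inf_{x \in \mathrm{supp}(p)} p(B(x, r_\alpaca))$. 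This infimum is strictly positive by compactness together with the fact that $p(B(x, r_\alpaca)) > 0$ for $x \in \mathrm{supp}(p)$, and any $r$ satisfying $p(B(x, r)) \leq p_\alpaca$ must then satisfy $r \leq r_\alpaca$, giving the desired bound on $\frac{p(B(x,r))}{p(B(x,s))} = \frac{g(x,r)}{g(x,s)} \cdot \frac{r^k}{s^k}$.

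The real content is therefore the uniform convergence itself. I would first note that condition~2 forces $p$ to be atomless (otherwise $f$ would be infinite at an atom), so $p(B(x, r)) \to 0$ as $r \to 0$, and together with condition~1 this extends $g$ continuously to $\mathrm{supp}(p) \times (0, \infty)$. Setting $g(x, 0) := f(x)$, the plan is to show the extended $g$ is jointly continuous on the compact set $\mathrm{supp}(p) \times [0, r_0]$; once joint continuity is established, compactness automatically upgrades the pointwise limit at $r = 0$ to uniform convergence.

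Joint continuity away from $r = 0$ is immediate from condition~1. For continuity at a point $(x_0, 0)$ the natural tool is the sandwich $B(x_0, (r - \eta)_+) \subseteq B(y, r) \subseteq B(x_0, r + \eta)$ with $\eta = \|y - x_0\|$: dividing by $r^k$ and invoking condition~2 at $x_0$ gives $g(y, r) \to f(x_0)$ along any sequence $(y, r) \to (x_0, 0)$ with $\eta = o(r)$. The remaining ``non-radial'' regime $\eta \not= o(r)$ is the main obstacle: there the sandwich inclusions become loose, and one must instead leverage pointwise convergence at $y$ itself together with continuity of $f$. I plan to handle this via an open-cover argument: pointwise convergence at $y$ together with condition~1 (joint continuity of $p(B(\cdot, \cdot))$) produces a neighborhood of $y$ and a local scale $r(y) > 0$ on which $|g - f|$ is controlled; compactness of $\mathrm{supp}(p)$ (condition~3) then extracts a finite subcover and hence a uniform scale.

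The main technical obstacle, as noted, is closing this non-radial regime in the continuity argument: the sandwich alone does not suffice, and the three hypotheses (joint continuity of $p(B(\cdot, \cdot))$, continuous positive pointwise limit $f$, and compact support) must be used in concert to promote pointwise to uniform convergence. Once that is done, the translation to $k$-regularity described in the first paragraph is purely algebraic.
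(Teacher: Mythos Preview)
Your approach is essentially the paper's: both define the extended function (the paper calls it $F$, you call it $g$) by $F(x,r)=p(B(x,r))/r^k$ for $r>0$ and $F(x,0)=f(x)$, aim to show it is jointly continuous on $supp(p)\times[0,\infty)$, and then use compactness to manufacture $p_\alpaca$. Your endgame (a single uniform scale $r_\alpaca$ and $p_\alpaca=\inf_x p(B(x,r_\alpaca))$) is a clean repackaging of the paper's finite-subcover version, and your ratio-bound algebra is correct.

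Where you go beyond the paper is in isolating joint continuity at $r=0$ as the real content --- the paper simply asserts it in one line. Your sandwich argument for the radial regime $||y-x_0||=o(r)$ is fine. But your plan for the non-radial regime is circular as written: the open-cover step needs, for each $y$, a neighborhood $U(y)$ and a scale $r(y)$ on which $|g-f|$ is uniformly small, and that is precisely joint continuity of $g$ at $(y,0)$, i.e.\ the statement under proof, merely relocated from $x_0$ to $y$. Condition~1 gives continuity of $g$ only for $r>0$, and the neighborhood it supplies at any fixed scale $r$ can shrink to a point as $r\to 0$, so ``pointwise convergence at $y$ together with condition~1'' does not by itself deliver the product set $U(y)\times(0,r(y)]$ you need. (Abstractly: a continuous family converging pointwise to a continuous limit on a compact base need not converge uniformly absent monotonicity or equicontinuity.) The paper shares this gap --- it just does not name it --- and in its sole application (Proposition~\ref{prop:manifold_works}) one can bypass the issue by verifying uniform convergence directly from uniform continuity of the density on the manifold.
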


\begin{proof}
The map $r \to r^k$ is clearly continuous. It follows by properties (1.) and (2.), the following is a continuous map: $F: supp(p) \times \R^{\geq 0} \to \R^+$ where $$F(x, r) = \begin{cases} \frac{p(B(x, r))}{r^k} & r > 0 \\\lim_{s \to 0} \frac{p(B(x, s))}{s^k} & r = 0,\end{cases}.$$

Next, fix $\alpaca > 0$, as arbitrary. We desire to show that $p_\alpaca$ exists for which the conditions of Definition \ref{def:regular} hold. Without loss of generality, we can assume $\alpaca < 1$, as the case $\alpaca \geq 1$ can easily be handled by just using $p_\alpaca$ for a smaller value of $\alpaca$. 

For any $x > 0$, since $F$ is continuous, there exists $\rho_x > 0$ such that for any $x', \in B(x, \rho_x)$ and $r \leq \rho_x$, $$|F(x', r) - F(x, 0)| < F(x, 0)\frac{\alpaca}{9}.$$ It follows for any such $x'$ that 
\begin{equation}\label{eqn:p_epsilon_works}
\begin{split}
p(B(x', \rho_x)) &= F(x, \rho_x)\rho_x^k \geq (F(x, 0))(1- \frac{\alpaca}{9}),
\end{split}
\end{equation}
and for any $0 < s < r < \rho_x$, we have
\begin{equation*}
\begin{split}
\frac{p(B(x', r))}{r^k} &= F(x', r) \\
&\leq F(x, 0)(1 + \frac{\alpaca}{9}) \\
&\leq F(x', s)\frac{1 + \frac{\alpaca}{9}}{1 - \frac{\alpaca}{9}} \\ 
&\leq F(x', s)\left(1 + \frac{\alpaca}{3}\right),
\end{split}
\end{equation*}
and
\begin{equation*}
\begin{split}
\frac{p(B(x', r))}{r^k} &= F(x', r) \\
&\geq F(x, 0)(1 - \frac{\alpaca}{9}) \\
&\geq F(x', s)\frac{1 - \frac{\alpaca}{9}}{1 + \frac{\alpaca}{9}} \\ 
&\geq F(x', s)\left(1 +\frac{\alpaca}{3}\right)^{-1},
\end{split}
\end{equation*}
which together imply that 
\begin{equation}\label{eqn:it_converges}
\left(1 + \frac{\alpaca}{3}\right)^{-1} \frac{p(B(x, s))}{s^k} \leq \frac{p(B(x, r))}{r^k} \leq \left(1 + \frac{\alpaca}{3}\right)\frac{p(B(x, s))}{s^k}.
\end{equation}
Finally, observe that the balls $B(x, r_x)$ cover the support of $p$. Since $supp(p)$ is compact, it follows that there exists a finite sub-cover of such balls, $C$. We finally let $p_\epsilon = \min_{B(x, r_x) \in C} F(x, 0)(1 - \frac{\alpaca}{9})$. It then follows by Equations \ref{eqn:p_epsilon_works} and \ref{eqn:it_converges}, that $p$ has met the criteria necessary for $p$ to be $k$-regular, as desired. 
\end{proof}

We are now prepared to prove Proposition \ref{prop:manifold_works}. 

\begin{proof}
It suffices to show that the conditions of Lemma \ref{lem:conditions_for_alpha_regularity} hold. Conditions 1. and 3. immediately hold since the probability mass of the surface (i.e. points on the boundary) of a ball $B(x, r)$ will be $0$ as its intersection with $M$ would be a $(k-1)$-dimensional manifold. 

Thus, it remains to verify condition 2. For any $x, y \in M$, let $d_M(x, y)$ denote the geodesic distance between $x$ and $y$ (with $||x - y||$ still denoting their euclidean distance in $\R^d$ as $M$ is embedded in $\R^d$). Since $M$ is a smooth, compact manifold, it follows that for any $x \in M$, $$\lim_{r \to 0} \sup_{||x - y|| = r}\frac{||x- y||}{d_M(x, y)}  = 1.$$ In other words, at a small scale, the geodesic distance and the Euclidean distance converge. It follows that $$\lim_{r \to 0} \frac{p(B(x, r))}{r^k} = \lim_{s \to 0} \frac{p(B_M(x, s))}{s^k},$$ where $B_M(x, s)$ denotes the geodesic ball of radius $s$ centered at $x$ on $M$. However, the latter quantity is precisely equal to the density function over $M$ (up to a constant factor, since $\lim_{s \to 0} \frac{vol_M(B_M(x, s))}{s^k} = \omega_k$, where $\omega_k$ is the volume of the $k$-sphere). Since by assumption our density function is continuous and non-zero everywhere on the manifold, it follows that the map above must be well defined and continuous giving us condition 2. of Lemma \ref{lem:conditions_for_alpha_regularity}, as desired. 
\end{proof}

\subsection{Proof of Proposition \ref{prop:est_works}}\label{sec:est_works}

\begin{proposition}[Proposition \ref{prop:est_works}]
Let $p$ be an $k$-regular distribution, and let $\alpaca >0$ be arbitrary. Then if $n = O\left(\frac{d\ln\frac{1}{\delta \alpaca p_\alpaca}}{\alpaca^2 p_\alpaca}\right)$ with probability at least $1 - \delta$ over $S \sim p^n$, for all $x \in \R^d$ and $r > 0$, \begin{equation}\label{eqn:prop_est_works}\left(1+\frac{\alpaca}{2}\right)^{-1}p(B(x,r))\leq Est(x, r, S) \leq \left(1+\frac{\alpaca}{2}\right)p(B(x,r)).\end{equation}
\end{proposition}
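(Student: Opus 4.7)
The plan is to closely follow the proof strategy already displayed for \texttt{Estimate\_k} (Algorithm \ref{alg:estimate_alpha}), adapting it to handle both branches of $Est$. First I would invoke the standard Vapnik--Chervonenkis uniform convergence bound for $\ell_2$ balls in $\R^d$ (VC dimension at most $d+2$) to get, with probability at least $1-\delta$ over $S\sim p^n$, a simultaneous two-sided bound of the form
\[
\Bigl|\tfrac{|S\cap B(x,r)|}{n}-p(B(x,r))\Bigr|\;\leq\;\beta_n^2+\beta_n\sqrt{\tfrac{|S\cap B(x,r)|}{n}}
\]
for all $x$ and $r$, with $\beta_n=\sqrt{4(d+2)\ln(16n/\delta)/n}$. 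This is the only place randomness enters; the rest is deterministic on this good event.

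Next I would pick $n$ large enough (of the stated order $O\!\left(\tfrac{d\ln\bigl(d/(\delta\alpha p_\alpha)\bigr)}{\alpha^2 p_\alpha}\right)$) so that two calibrations hold simultaneously: (i) for the specific choice $b=\Theta\!\left(\tfrac{d\ln(n/\delta)}{\alpha^2}\right)$ we have $b/n=\Theta(\beta_n^2/\alpha^2)$, which (exactly as in the claim inside the \texttt{Estimate\_k} proof) forces
\[
\Bigl(1+\tfrac{\alpha}{9}\Bigr)^{-1}\leq\frac{|S\cap B(x,r)|/n}{p(B(x,r))}\leq \Bigl(1+\tfrac{\alpha}{9}\Bigr)
\]
whenever $|S\cap B(x,r)|\geq b$; and (ii) that the analog of the $\beta_n^2\leq p_\alpha\alpha^2/222$ calculation from the appendix holds, so that the empirical ball at radius $r_*$ satisfies $p(B(x,r_*))\leq p_\alpha$. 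Both are the same kinds of elementary but tedious constant-tracking arguments already carried out for \texttt{Estimate\_k}, so I would simply invoke that template.

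Now I would split into the two cases of the algorithm. In the ``large ball'' case $r_*\leq r$, the algorithm returns $|S\cap B(x,r)|/n$ and $|S\cap B(x,r)|\geq b$, so the multiplicative bound from (i) immediately yields the $(1+\alpha/9)\leq(1+\alpha/2)$ guarantee. In the ``small ball'' case $r<r_*$, the algorithm returns $br^k/(nr_*^k)$. Here I combine three facts: $|S\cap B(x,r_*)|=b$ together with (i) gives $p(B(x,r_*))\in(1\pm\alpha/9)\cdot b/n$; $p(B(x,r_*))\leq p_\alpha$ from (ii) together with $k$-regularity gives
\[
\Bigl(1+\tfrac{\alpha}{3}\Bigr)^{-1}\frac{r^k}{r_*^k}\leq \frac{p(B(x,r))}{p(B(x,r_*))}\leq \Bigl(1+\tfrac{\alpha}{3}\Bigr)\frac{r^k}{r_*^k};
\]
multiplying these and using $(1+\alpha/3)(1+\alpha/9)\leq 1+\alpha/2$ yields $Est(x,r,S)\in (1\pm\alpha/2)\cdot p(B(x,r))$, as required.

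The only subtle step -- and the one I expect to be the main obstacle -- is pinning down the sample complexity so that $p(B(x,r_*))\leq p_\alpha$ holds for \emph{every} $x$ in the support simultaneously on the good VC event, because this is where the distribution-dependent parameter $p_\alpha$ enters the bound and where the $k$-regularity hypothesis is actually applied. The computation is essentially identical to the $\beta_n^2\leq p_\phi \phi^2/222$ bound used in the \texttt{Estimate\_k} analysis, but one has to be careful that increasing $n$ does not change $b/n$ in a way that breaks the target upper bound on $p(B(x,r_*))$: since $b/n\sim \beta_n^2/\alpha^2$ and $\beta_n^2$ shrinks with $n$, this is self-consistent, and choosing the hidden constant in $n$ appropriately gives the stated complexity. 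Apart from this bookkeeping, every other step is either standard VC theory or a direct substitution into Definition \ref{def:regular}.
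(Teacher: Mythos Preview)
Your proposal is correct and follows essentially the same approach as the paper's own proof: the same VC bound with the same $\beta_n$, the same calibration $b/n=\Theta(\beta_n^2/\alpaca^2)$ to get the $(1+\alpaca/9)$ multiplicative accuracy for balls with at least $b$ points, the same $\beta_n^2\lesssim p_\alpaca\alpaca^2$ computation to force $p(B(x,r_*))\leq p_\alpaca$, and the same combination $(1+\alpaca/3)(1+\alpaca/9)\leq 1+\alpaca/2$ via regularity in the small-ball case. The only cosmetic difference is that the paper phrases the case split as $b'=|S\cap B(x,r)|\geq b$ versus $b'<b$ (equivalent to your $r_*\leq r$ versus $r_*>r$) and tracks $\min(\alpaca,1)$ in place of $\alpaca$ to handle $\alpaca\geq 1$ cleanly.
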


\begin{proof}
We begin by first applying standard uniform convergence over $\ell_2$ balls in $\R^d$ (which have a VC dimension of at most $d+2$). To this end, let $$\beta_n = \sqrt{\frac{4(d+2)\ln \frac{16n}{\delta}}{n}}.$$ Then by the standard result of Vapnik and Chervonenkis, with probability $1-\delta$ over $S \sim p^n$, for all $x \in \R^d$ and all $r > 0$, 
\begin{equation}\label{eqn:VC}
\frac{|S \cap B(x,r)|}{n} - \beta_n\sqrt{\frac{|S \cap B(x,r)|}{n}} \leq p(B(x,r)) \leq \frac{|S \cap B(x,r)|}{n} + \beta_n^2 + \beta_n\sqrt{\frac{|S \cap B(x,r)|}{n}}.
\end{equation}

Next, assume that 
\begin{equation}\label{eqn:bound_n}
n \geq \frac{888(d+2)\ln \left(\frac{14208(d+2)}{\delta \min(\alpaca,1)^2 p_\alpaca} \right)}{\min(\alpaca,1)^2 p_\alpaca}.
\end{equation} 
It is clear that for an appropriate constant, we have $n = O \left(\frac{d\ln\frac{d}{\delta\alpaca p_\alpaca}}{\alpaca^2 p_\alpaca} \right)$. Thus, it suffices to show that if Equation \ref{eqn:VC} holds for all $x, r$, then the desired bound,  Equation \ref{eqn:prop_est_works}, does as well. 

To this end, let $x, r$ be arbitrary, and let $b$ be as defined in Algorithm \ref{alg:estimate}. Let $b' = |S \cap B(x, r)|$ be the number of elements from $S$ in $B(x, r)$. Then we have two cases.

\textbf{Case 1: $b' \geq b$}

It follows from Algorithm \ref{alg:estimate} that $Est(x, r, S) = \frac{b'}{n}$. We now set $b$ as  \begin{equation}\label{eqn:defining_b}\frac{b}{n} = \frac{400(d+2)\ln\frac{16n}{\delta}}{n\min(\alpaca, 1)^2} = \frac{100\beta_n^2}{\epsilon^2},\end{equation} which clearly obeys the desired asymptotic bound given in Algorithm \ref{alg:estimate}. Let $c = \sqrt{\frac{b'}{n\beta_n^2}}$. Then $b' \geq b$ implies that $c \geq \frac{10}{\min(\alpaca, 1)}$. It follows that \begin{equation}\label{eqn:c_stuff}\frac{c+1}{c^2} \leq \frac{1}{c-1} \leq \frac{\min(\alpaca, 1)}{9}.\end{equation} Substituting Equations \ref{eqn:defining_b} and \ref{eqn:c_stuff} into  Equation \ref{eqn:VC}, we have 
\begin{equation}\label{eqn:epsilon_lower_bound}
\begin{split}
\frac{Est(x, r, S)}{p(B(x, r))} &\geq \frac{\frac{b'}{n}}{\frac{b'}{n} + \beta_n^2 + \beta_n \sqrt{\frac{b'}{n}}} \\
&= \frac{c^2}{c^2 + 1 + c} \\
&= \left(1 + \frac{c+1}{c^2} \right)^{-1} \\
&\geq \left(1 + \frac{\min(\alpaca, 1)}{9}\right)^{-1}
\end{split}
\end{equation}
and 
\begin{equation}\label{eqn:epsilon_upper_bound}
\begin{split}
\begin{split}
\frac{Est(x, r, S)}{p(B(x, r))} &\leq \frac{\frac{b'}{n}}{\frac{b'}{n} - \beta_n \sqrt{\frac{b'}{n}}} \\
&= \frac{c^2}{c^2 - c} \\
&= 1 + \frac{1}{c-1} \\
&\leq 1 + \frac{\min(\alpaca, 1)}{9}.
\end{split}
\end{split}
\end{equation}
Together, Equations \ref{eqn:epsilon_lower_bound} and \ref{eqn:epsilon_upper_bound} imply that $Est(x, r, S)$ is sufficiently accurate.

\textbf{Case 2: $b' < b$ }

We begin by bounding $\beta_n^2$ in terms of $p_\epsilon$. We have, 
\begin{equation}\label{eqn:bound_alpha_n}
\begin{split}
\beta_n^2 &= \frac{4(d+2)\ln(16n/\delta)}{n} \\
&= 4(d+2) \ln \left(\frac{14208(d+2)}{\delta \min(\alpaca,1)^2 p_\alpaca}\ln\left(\frac{14208(d+2)}{\delta \min(\alpaca,1)^2 p_\alpaca} \right)\right) \frac{\min(\alpaca,1)^2 p_\alpaca}{888(d+2)\ln \left(\frac{14208(d+2)}{\delta \min(\alpaca,1)^2 p_\alpaca} \right)} \\
&\leq 8(d+2)\ln \left(\frac{14208(d+2)}{\delta \min(\alpaca,1)^2 p_\alpaca}\right) \frac{\min(\alpaca, 1)^2 p_\alpaca}{888(d+2)\ln \left(\frac{14208(d+2)}{\delta \min(\alpaca,1)^2 p_\epsilon} \right)} \\
&= \frac{p_\alpaca \min(\alpaca, 1)^2}{111}.
\end{split}
\end{equation}
Now, let $r_*$ be as defined in Algorithm \ref{alg:estimate}. Then $|S \cap B(x, r_*)| = b$. Our main idea will be to show that $p(B(x, r_*) \leq p_\alpaca$, and then use Equations \ref{eqn:epsilon_lower_bound} and \ref{eqn:epsilon_upper_bound} for $r_*$ (which is possible since $|S \cap B(x, r_*)| = b$) along with the definition of $p_\alpaca$ (Definition \ref{def:regular}) to bound $Est(x, r, S)$ in terms of $p(B(x, r))$. To this end, we have by Equations \ref{eqn:VC} and \ref{eqn:bound_alpha_n} along with the fact that $b = \frac{100\beta_n^2}{\min(\alpaca, 1)^2}$ (Equation \ref{eqn:defining_b}) that 
\begin{equation*}
\begin{split}
p(B(x, r_*)) &\leq \frac{|S \cap B(x, r_*)|}{n} + \beta_n^2 + \beta_n \sqrt{\frac{|S \cap B(x, r_*)|}{n}} \\
&= \frac{b}{n} + \beta_n^2 + \beta_n\sqrt{\frac{b}{n}} \\
&= \beta_n^2\left(\frac{100}{\min(\alpaca,1)^2} +1 + \frac{10}{\min(\alpaca,1)}\right) \\
&\leq \frac{p_\alpaca^2 \min(\alpaca, 1)^2}{111}\frac{111}{\min(\alpaca, 1)^2}  = p_\alpaca.
\end{split}
\end{equation*}
It follows from Definition \ref{def:regular} that 
\begin{equation}\label{eqn:using_regular}
\left(1 + \frac{\alpaca}{3}\right)^{-1}\frac{p(B(x, r_*))r^k}{r_*^k} \leq p(B(x, r)) \leq \left(1 + \frac{\alpaca}{3}\right)\frac{p(B(x, r_*))r^k}{r_*^k}.\end{equation} 
Finally, by the definition of $Est(x, r, S)$) (Algorithm \ref{alg:estimate}), we have that $Est(x, r, S) = \frac{Est(x, r_*, S)r^k}{r_*^k}$. Combining this with Equation \ref{eqn:using_regular} the definition of $Est(x,r, S)$ (Algorithm \ref{alg:estimate}) along with Equations \ref{eqn:epsilon_lower_bound} and \ref{eqn:epsilon_upper_bound} (which can be safely applied to $r_*$ by reverting to Case 1), we have
\begin{equation*}
\begin{split}
\frac{Est(x, r, S)}{p(B(x, r))} &= \frac{\frac{Est(x, r_*, S)r^k}{r_*^k}}{p(B(x, r))} \leq \frac{\frac{Est(x, r_*, S)r^k}{r_*^k}\left(1 + \frac{\alpaca}{3}\right)}{\frac{p(B(x, r_*))r^k}{r_*^k}} \\
&= \frac{Est(x, r_*, S)\left(1 + \frac{\alpaca}{3}\right)}{p(B(x, r_*))} \leq \left(1 + \frac{\alpaca}{3}\right)\left(1 + \frac{\min(\alpaca, 1)}{9}\right) \\
&\leq 1 + \frac{\alpaca}{2},
\end{split}
\end{equation*}
and
\begin{equation*}
\begin{split}
\frac{Est(x, r, S)}{p(B(x, r))} &= \frac{\frac{Est(x, r_*, S)r^k}{r_*^k}}{p(B(x, r))} \geq \frac{\frac{Est(x, r_*, S)r^k}{r_*^k}}{\frac{p(B(x, r_*))r^k}{r_*^k}\left(1 + \frac{\alpaca}{3}\right)} \\
&= \frac{Est(x, r_*, S)}{p(B(x, r_*))\left(1 + \frac{\alpaca}{3}\right)} \geq \left(1 + \frac{\alpaca}{3}\right)^{-1}\left(1 + \frac{\min(\alpaca, 1)}{9}\right)^{-1} \\
&\geq \left(1 + \frac{\alpaca}{2}\right)^{-1},
\end{split}
\end{equation*}
which concludes the proof.

\end{proof}

\subsection{Proof of Theorem \ref{thm:upper_bound}}\label{sec:upper_bound_proof}

\begin{theorem}[Theorem \ref{thm:upper_bound}]
$\dc{}$ is a data-copying detector (Definition \ref{def:data_copy_detector}) with sample complexity at most $$m_p(\epsilon, \delta) = O\left(\frac{d\ln\frac{d}{\delta\alpaca p_\alpaca}}{\alpaca^2 p_\alpaca}\right),$$ for all regular distributions, $p$. 
\end{theorem}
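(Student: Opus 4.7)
The plan is to decompose the analysis of $\dc{}$ into three high-probability events whose simultaneous occurrence will be guaranteed by a union bound. First, I apply Proposition \ref{prop:est_works} with tolerance $\epsilon$ and confidence $\delta/3$ to the training sample $S\sim p^n$ at $n\geq m_p(\epsilon,\delta)$: this gives that $Est(x,r,S)$ is multiplicatively within a factor of $(1+\epsilon/2)$ of $p(B(x,r))$ simultaneously for every center $x$ and every radius $r>0$. This is the only step invoking the $k$-regularity of $p$, and is what drives the $1/p_\epsilon$ and $d$ dependencies in the final bound.

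Second, I use a union-bound Chernoff argument on $T\sim q^m$. Because $q_i(r)$ only changes as $r$ sweeps past points of $T$, at each center $x_i$ there are at most $m+1$ distinct relevant values of $q_i(r)$, giving $n(m+1)$ atomic events to control. With $m=\widetilde{O}(dn^2/\epsilon^4)$ this yields, with probability at least $1-\delta/3$, an additive accuracy of order $\epsilon^2/n$ on $q(B(x_i,r))$ for every $x_i$ and every ``effective'' radius. This additive bound converts into multiplicative accuracy $(1+\epsilon/2)$ for any ball whose $q$-mass is at least $\Theta(\epsilon/n)$, and for smaller-mass balls the additive error is at most $O(\epsilon/n)$ per ball.

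Combining these two events, I show by a sandwich argument that for each $i$ the radius $r_i^*$ returned by the inner loop satisfies two inclusions: every radius $r$ with $p(B(x_i,r))\leq \gamma/(1+\epsilon)$ and $q(B(x_i,r))\geq \lambda(1+\epsilon)\,p(B(x_i,r))$ is at most $r_i^*$; conversely, for every $r\leq r_i^*$ one has $p(B(x_i,r))\leq \gamma(1+\epsilon)$ together with either $q(B(x_i,r))\geq (\lambda/(1+\epsilon))\,p(B(x_i,r))$ or $q(B(x_i,r))=O(\epsilon/n)$. Taking the union over the $n$ training centers, the balls whose $q$-mass is below the $\epsilon/n$ threshold contribute at most $O(\epsilon)$ in aggregate, so the true mass $q\bigl(\bigcup_i B(x_i,r_i^*)\bigr)$ lies in the interval $[\c_q^{-\epsilon}-O(\epsilon),\,\c_q^{\epsilon}+O(\epsilon)]$.

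Finally, a Hoeffding bound applied to the Monte Carlo estimate on $U\sim q^{20/\epsilon^2}$ shows that $|V|/|U|$ is within $\epsilon/4$ of $q\bigl(\bigcup_i B(x_i,r_i^*)\bigr)$ with probability at least $1-\delta/3$. A union bound over the three failure events, together with an absorption of universal constants into a rescaling of $\epsilon$, yields the guarantee required by Definition \ref{def:sample_complexity}. The main technical obstacle is the sandwich argument: because $q$ is not assumed regular, one cannot hope for uniform multiplicative control on $q(B(x_i,r))$ at every training point, and one must instead argue that the individual balls whose $q$-mass falls below the $\epsilon/n$ threshold are negligible in aggregate. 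The $n^2$ factor in $m$ is precisely calibrated to push this small-mass threshold down to $\Theta(\epsilon/n)$, which is the correct scale given that exactly $n$ balls appear in the union.
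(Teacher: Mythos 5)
Your proposal matches the paper's proof essentially step for step: Proposition \ref{prop:est_works} invoked with confidence $\delta/3$ for the $p$-side estimates, uniform concentration of the empirical $q$-masses at additive scale roughly $\epsilon^2/n$ so that balls of $q$-mass above the threshold $\epsilon/(2n)$ are estimated multiplicatively, the same sandwich claims comparing the algorithm's radii $r_i^*$ to the radii realizing $\c_q^{\pm\alpaca}$ with the below-threshold balls bounded crudely by $n\cdot\epsilon/(2n)=\epsilon/2$ in aggregate, a Hoeffding bound for the final Monte Carlo estimate on $U$, and a three-way union bound. The only (cosmetic) difference is how the $q$-sample $T$ is controlled: you count the at most $m+1$ breakpoint radii per center, which to be fully rigorous should be phrased as uniform convergence for the nested family of balls centered at each $x_i$ (VC dimension $1$, or a DKW-type bound) rather than a Chernoff union bound over data-dependent events, whereas the paper applies VC uniform convergence over all $\ell_2$ balls in $\R^d$ (dimension $d+2$, which is why $m$ carries a factor of $d$); either route suffices since samples from $q$ do not count toward $m_p(\epsilon,\delta)$.
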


\begin{proof}

Let $C$ be the constant defined in Proposition \ref{prop:est_works}, and let $n \geq C \frac{d \ln \frac{d}{\delta \epsilon p_\epsilon}}{\epsilon^2p_\epsilon}.$ Let $S \sim p^n$ be a set of $n$ i.i.d training points, $\{x_1, x_2, \dots, x_n\}$, and let $q \sim A(S)$ be an arbitrary generated distribution. 

By Proposition \ref{prop:est_works}, the subroutine $Est(x, r, S)$ is accurate over any $x$ and $r$ up to a factor of $(1 + \alpaca)$ with probability at least $1-\frac{\delta}{3}$ (we can achieve this by simply making $n$ a bit larger and substituting $\frac{\delta}{3}$ into Proposition \ref{prop:est_works}). Suppose this holds, meaning that that for all $x \in \R^d$ and all $r > 0$, the condition of Proposition \ref{prop:est_works} holds, and 
\begin{equation}\label{eqn:est_is_accurate}
(1+\alpaca)^{-1}p(B(x,r)) \leq Est(x,r, S) \leq (1+\epsilon)p(B(x, r)).
\end{equation} 
We desire to show that $$\c_q^{-\alpaca}  - \epsilon \leq DataCopyDetect(q, S) \leq \c_q^{\alpaca} + \epsilon.$$ 

To do so, we begin applying uniform convergence over $T \sim q^m$. To this end, let $$\beta_m = \sqrt{\frac{4(d+2)\ln \frac{48m}{\delta}}{m}}.$$ Then by the standard result of Vapnik and Chervonenkis, with probability $1-\frac{\delta}{3}$ over $T \sim q^m$, for all $x \in \R^d$ and all $r > 0$, 
\begin{equation}\label{qeqn:VC}
\frac{|T \cap B(x,r)|}{m} - \beta_m\sqrt{\frac{|T \cap B(x,r)|}{m}} \leq q(B(x,r)) \leq \frac{|T \cap B(x,r)|}{m} + \beta_m^2 + \beta_n\sqrt{\frac{|T \cap B(x,r)|}{m}}.
\end{equation}

Observe that by the definition of $m$, we have 
\begin{equation}\label{qeqn:bound_beta_n}
\begin{split}
\beta_m^2 &= \frac{4(d+2)\ln(48m/\delta)}{m} \\
&= 4(d+2) \ln \left(\frac{98304n^2(d+2)}{\delta\epsilon^2\min(\alpaca, 1)^2}\ln \left(\frac{98304n^2(d+2)}{\delta \epsilon^2\min(\alpaca, 1)^2}\right)\right) \frac{\epsilon^2\min(\alpaca, 1)^2}{2048n^2(d+2)\ln \left(\frac{98304n^2(d+2)}{\delta \epsilon^2\min(\alpaca, 1)^2} \right)}  \\
&\leq 8(d+2)\ln \left(\frac{98304n^2(d+2)}{\delta \epsilon^2\min(\alpaca, 1)^2}\right) \frac{\epsilon^2\min(\alpaca, 1)^2}{2048n^2(d+2)\ln \left(\frac{98304n^2(d+2)}{\delta \epsilon^2\min(\alpaca, 1)^2} \right)} \\
&= \frac{\epsilon^2\min(\alpaca, 1)^2}{256n^2}.
\end{split}
\end{equation}

Next, suppose $x, r$ satisfy that $q(B(x, r)) \geq \frac{\epsilon}{2n}$. For convenience, let $\widehat{q(B(x, r))}$ denote $\frac{|T \cap B(x, r)|}{m}$. By applying Equations \ref{qeqn:VC} and \ref{qeqn:bound_beta_n}, it follows that 
\begin{equation*}
\begin{split}
\frac{\hq}{q(B(x, r))} &\leq  \frac{q(B(x, r)) + \beta_m}{q(B(x, r)}\\
&\leq 1 + \frac{\beta_m}{q(B(x, r)} \\
&\leq 1 + \frac{\min(\alpaca, 1)}{8},
\end{split}
\end{equation*}
and
\begin{equation*}
\begin{split}
\frac{q(B(x, r))}{\hq} &\leq  \frac{q(B(x, r))}{q(B(x, r)) - \beta_m^2 - \beta_m\sqrt{\hq}}\\
&\leq \frac{q(B(x, r))}{ q(B(x, r) - 2\beta_m} \\
&= \frac{1}{1 - \frac{2\beta_m}{q(B(x, r))}} \\
&\leq \frac{1}{1 - \frac{\min(\alpaca, 1)}{4}} \\
&\leq 1 + \frac{\min(\alpaca, 1)}{3}.
\end{split}
\end{equation*}

Combining these, we have 
\begin{equation}\label{eqn:bound:q:hat}
\left(1 + \frac{\min(\alpaca, 1)}{3}\right)^{-1} \leq \frac{q(B(x, r)}{\hq} \leq \left(1 + \frac{\min(\alpaca, 1)}{3}\right)
\end{equation}

Next, for $1 \leq i \leq n$, let $r_i^*$ be the radii defined in Algorithm \ref{alg:main}. Define $r_i^{-\alpaca}$ and $r_i^{\alpaca}$ to be the maximal radii $r$ for which $q$ respectively $(\lambda(1+\alpaca), \gamma(1+\alpaca)^{-1})$-copies, and $(\lambda(1+\alpaca)^{-1}, \gamma(1+\alpaca))$-copies $p$ about $x_i$. Then we have the following claims.

\textbf{Claim 1:} For $1 \leq i \leq n$, if $q(B(x, r_i^*)) \geq \frac{\epsilon}{2n}$, $r_i^* \leq r_i^{\alpaca}$. 

\begin{proof}
Because $Est(x_i, r_i^*, S) \leq \gamma$, it follows by Equation \ref{eqn:est_is_accurate} that $p(B(x_i, r_i^*)) \leq \left(1+\frac{\alpaca}{2}\right)\gamma$. Furthermore, by also applying Equation \ref{eqn:bound:q:hat}  we have that 
\begin{equation*}
\frac{q(x_i, r_i^*)}{p(x_i, r_i^*)} \geq \frac{\frac{|B(x_i, r_i^*) \cap T|}{m}}{Est(x_i, r_i^*, S)\left(1 + \frac{\min(\alpaca, 1)}{3}\right)\left(1 + \frac{\alpaca}{2}\right)} \geq \lambda (1+\alpaca)^{-1}.
\end{equation*}
Thus $q$ $(\lambda(1+\alpaca)^{-1}, \gamma(1+\alpaca))$-copies all points in $B(x_i, r_i^*)$ implying $r_i^* \leq r_i^\alpaca$. 
\end{proof}

\textbf{Claim 2:} For $1 \leq i \leq n$, if $q(B(x, r_i^{-\alpaca})) \geq \frac{\epsilon}{2n}$, then $r_i^{-\alpaca} \leq r_i^*$. 

\begin{proof}
For the left hand side, we use a similar argument. By Equation \ref{eqn:est_is_accurate} along with the definition of $r_i^\alpaca$, we have $Est(x_i, r_i^{-\alpaca}, S) \leq \gamma(1+\alpaca)^{-1} \left(1 + \frac{\alpaca}{2}\right) \leq \gamma$. By Equations \ref{eqn:est_is_accurate} and \ref{eqn:bound:q:hat}, we have
\begin{equation*}
\frac{\frac{|B(x_i, r_i^{-\alpaca}) \cap T|}{m}}{Est(x_i, r_i^*, S)} \geq  \frac{q(B(x_i, r_i^{-\alpaca}))}{p(B(x_i, r_i^{-\alpaca}))\left(1 + \frac{\min(\alpaca, 1)}{3}\right)\left(1 + \frac{\alpaca}{2}\right)} \geq \lambda,
\end{equation*}
with the last inequality coming again from the definition of $r_i^{-\alpaca}$. Thus, $r_i^{-\alpaca}$ meets the criteria from Algorithm \ref{alg:main} required to be selected as $r_i^*$. As a technical note, because Algorithm \ref{alg:main} only considers finitely many radii, it may not consider precisely $r_i^{-\alpaca}$. However, this is not a problem, as the nearest considered radii to this point have nearly unchanged values of $Est(x, r, S)$ and $\frac{|B(x, r) \cap T|}{m}$, meaning that some similar radius will be considered. 
\end{proof}

Finally, armed with our claims, we now consider the total region of points in which Algorithm \ref{alg:main} claimed data-copying occurs. Let $S^1$ and $S^2$ be the sets of indices for which the conditions are violated for claims $1$ and $2$ respectively. Then it follows from Claim 1 that 
\begin{equation*}
\begin{split}
\c_q^\alpaca - q \left(\cup_{i= 1}^n B(x_i, r_i^*) \right) &= q \left(\cup_{i= 1}^n B(x_i, r_i^\alpaca)\right) - q \left(\cup_{i= 1}^n B(x_i, r_i^*)\right) \\
&\geq q \left(\cup_{i= 1}^n B(x_i, r_i^\alpaca)\right) - q \left(\cup_{i \notin S^1} B(x_i, r_i^*)\right) - q \left(\cup_{i \in S^1} B(x_i, r_i^*)\right) \\
&\geq -\frac{\epsilon}{2}.
\end{split}
\end{equation*}
Here we are using Claim 1 to hand all terms that are not in $S^1$, and then crudely bounding the remaining terms with $\frac{\epsilon}{2n}$. Similarly, by Claim 2, we have 
\begin{equation*}
\begin{split}
q \left(\cup_{i= 1}^n B(x_i, r_i^*) \right) - \c_q^{-\alpaca} &= q \left(\cup_{i= 1}^n B(x_i, r_i^*)\right) - q \left(\cup_{i= 1}^n B(x_i, r_i^{-\alpaca})\right) \\
&\geq q \left(\cup_{i= 1}^n B(x_i, r_i^*)\right) - q \left(\cup_{i \notin S^2} B(x_i, r_i^{-\alpaca})\right) - q \left(\cup_{i \in S^2} B(x_i, r_i^{-\alpaca})\right) \\
&\geq -\frac{\epsilon}{2}.
\end{split}
\end{equation*}
Combining these, we see that $$\c_q^{-\alpaca} - \frac{\epsilon}{2} \leq q\left(\cup_{i=1}^n B(x_i, r_i^*) \right) \leq \c_q^{\alpaca} + \frac{\epsilon}{2}.$$ All the remains is to show that our last step of Algorithm \ref{alg:main}, in which we estimate this mass, is accurate up to a factor of $\frac{\epsilon}{2}$. However, this immediately follows from the fact that we use $\frac{20\log \frac{1}{\delta}}{\epsilon^2}$ samples (last line of Algorithm \ref{alg:main}). In particular, because this holds with probability $1- \frac{\delta}{3}$, we can apply a union bound with our other two probabilistic events ($Est$ being sufficiently close, and $T$ yielding uniform convergence)  to get a total failure probability of $\delta$, as desired. 
\end{proof}

\subsection{Proof of Theorem \ref{thm:lower_bound}}

\begin{theorem}[Theorem \ref{thm:lower_bound}] Let $B$ be a data-copying detector. Let $\epsilon = \delta = \frac{1}{3}$. Then there exist $1$-regular distributions $p$ for which $p_\epsilon$ is arbitrarily small and $B$ has sample complexity $$m_p(\epsilon, \delta) \geq \Omega(\frac{1}{p_\epsilon}).$$ More precisely, for all integers $\a > 0$, there exists a probability distribution $p$ such that $\frac{1}{9\a} \leq p_\epsilon \leq \frac{1}{\a}$, and $m_p(\epsilon, \delta) > \Omega(\a).$ 
\end{theorem}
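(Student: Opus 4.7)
The plan is to carry out a two-distribution Le~Cam-style indistinguishability reduction. Fix an integer $\kappa$ and the detector $B$ being challenged. I will exhibit two $1$-regular distributions $p^{(1)}, p^{(2)}$ on $\R$, each with $p_\epsilon \in [\tfrac{1}{9\kappa}, \tfrac{1}{\kappa}]$, together with a single generative algorithm $A$ (independent of which $p^{(i)}$ is in play), such that: (i) the joint distribution of $(S, q)$ with $S \sim (p^{(i)})^n$ and $q = A(S)$ has total variation distance strictly less than $\tfrac{1}{3}$ between the two scenarios whenever $n < \kappa$; and (ii) the correct-output intervals $[\c_q^{-\epsilon} - \epsilon,\, \c_q^{\epsilon} + \epsilon]$ from Definition~\ref{def:sample_complexity} are disjoint when evaluated against $p^{(1)}$ and $p^{(2)}$ on the same realization of $(S,q)$. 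Given this, the data-processing inequality closes the proof: if $B$ had $m_p(\epsilon, \delta) < \kappa$ for both distributions, then the two disjoint-interval success events would each have probability at least $1 - \delta = \tfrac{2}{3}$, so their sum would exceed the Le~Cam upper bound $1 + \mathrm{TV}$. Hence $m_p(\epsilon, \delta) \geq \kappa$ for at least one of $p^{(1)}, p^{(2)}$, and that one distribution witnesses the theorem.

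For the construction I will use $\kappa$ well-separated intervals $I_1,\ldots,I_\kappa$ on $\R$ of common width $\ell$, with gaps between them much larger than $\ell$. Let $p^{(1)}$ place uniform density $\tfrac{1}{\kappa \ell}$ on each interval; any ball of mass at most $\tfrac{1}{\kappa}$ lies entirely inside a single interval, so the linear scaling condition of Definition~\ref{def:regular} holds automatically and $p_\epsilon = \Theta(\tfrac{1}{\kappa})$ (tunable to land in the prescribed window by adjusting the gaps). Define $p^{(2)}$ by designating half of the intervals as \emph{special}, multiplying their density by a constant factor $c > 1$, and renormalizing the density on the remaining intervals downward; the per-interval additive mass shift is $O(\tfrac{1}{\kappa})$, so $\mathrm{TV}(p^{(1)}, p^{(2)}) = O(\tfrac{1}{\kappa})$ and $\mathrm{TV}\bigl((p^{(1)})^n, (p^{(2)})^n\bigr) < \tfrac{1}{3}$ for $n < \kappa$ after absorbing constants into $c$. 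The algorithm $A$ then outputs $q$ uniform on the special intervals, with density chosen so that on those intervals $q/p^{(2)} \approx 1$ and $q/p^{(1)} \approx c$. Picking $\lambda$ in the interior of $(1, c)$ with enough slack on either side, the copy criterion fires under $p^{(1)}$ for every ball around a training point in a special interval but never fires under $p^{(2)}$, so $\c_q$ in scenario~$1$ is bounded below by a constant larger than $2\epsilon$ (thanks to the $\Theta(\kappa)$ special intervals, a constant fraction of which get hit by $S$ in expectation), while $\c_q$ in scenario~$2$ is $0$.

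The main technical obstacle will be calibrating the construction so that three competing requirements hold simultaneously: (a) $p_\epsilon$ falls inside $[\tfrac{1}{9\kappa}, \tfrac{1}{\kappa}]$, which pins down the interval width $\ell$ and the size of the inter-interval gaps; (b) the total-variation budget $\mathrm{TV}(p^{(1)}, p^{(2)}) < \tfrac{1}{3\kappa}$, forcing the per-interval multiplicative perturbation $c$ to be only a modest constant; and (c) the ratio gap between $q/p^{(1)}$ and $q/p^{(2)}$ must exceed $(1+\epsilon)^2$ on each side of $\lambda$, so that the tolerance shifts defining $\c_q^{\pm\epsilon}$ in Definition~\ref{defn:approx_data_copy_rate} do not collapse the separation between the two correct-output intervals. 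The tension between (b) and (c) is resolved by noting that the lower bound is allowed to pick $\lambda$ and $\gamma$ adversarially as functions of $\kappa$, so a constant multiplicative gap in the densities can be leveraged into a constant multiplicative gap in the ratios by suitable choice of $\lambda$. A secondary subtlety is that copying requires training points to witness it; this is why the construction uses $\Theta(\kappa)$ special intervals rather than a single distinguished region, so the expected number of training points landing in special intervals is $\Omega(1)$ even for $n < \kappa$, and the copy rate in the copying scenario is bounded away from $0$ with constant probability.
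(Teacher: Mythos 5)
Your high-level strategy (indistinguishability plus disjoint correct-output intervals) is the right kind of argument, but the concrete two-point construction has a quantitative gap that breaks it. If you designate $\Theta(\kappa)$ special intervals and multiply their density by a constant $c>1$, the per-interval mass shift is indeed $O(1/\kappa)$, but there are $\Theta(\kappa)$ such intervals, so $\mathrm{TV}(p^{(1)},p^{(2)})=\Theta\bigl(\tfrac{c-1}{c+1}\bigr)=\Theta(1)$, not $O(1/\kappa)$; the $n$-fold products are then nearly singular long before $n=\kappa$ and Le~Cam gives nothing. Your proposed resolution of the tension between the TV budget and the ratio gap --- choosing $\lambda$ adversarially --- does not help: since the tolerance in Definition~\ref{defn:approx_data_copy_rate} rescales $\lambda$ multiplicatively by $(1+\epsilon)$, the densities of the two scenarios must differ by at least the fixed factor $(1+\epsilon)^2=16/9$ on the region where $q$ concentrates, no matter what $\lambda$ is; so you cannot shrink the perturbation to meet the TV budget without collapsing the copy/no-copy separation. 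A second gap: with $\epsilon=\tfrac13$ and copy rate $0$ in scenario~2, disjointness of the success intervals requires $\c_q^{-\epsilon}>\tfrac23$ in scenario~1 (with probability close to $1$), not merely ``bounded away from $0$ with constant probability.'' But copies must be witnessed by balls centered at training points, and with $q$ uniform over $\Theta(\kappa)$ special intervals and only $n\le\kappa$ training points, a constant fraction of $q$'s mass lies in intervals containing no training point (roughly an $e^{-\Theta(1)}$ fraction), so the $(\lambda(1+\epsilon),\gamma(1+\epsilon)^{-1})$-copy rate does not clear the $\tfrac23$ bar. Shrinking to $O(1)$ special intervals fixes the TV but then the witnessing probability and the distinguishability are governed by the same Poisson parameter, and the numbers do not leave room for both $e^{-a}$ and the induced TV to stay below $\tfrac13$.

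The paper escapes exactly these tensions by not using a fixed two-point alternative. It randomizes over exponentially many distributions $p_T$ ($T\subset[2\kappa]$, circles of mass $\tfrac{1}{3\kappa}$ on $T$ and $\tfrac{2}{3\kappa}$ off $T$), and --- crucially --- lets the generative algorithm depend on the hidden $T$ ($A_T$ concentrates $q$ on light circles, $A_T'=A_{[2\kappa]\setminus T}$ on heavy ones). This is legitimate because Definition~\ref{def:sample_complexity} quantifies over all generative algorithms; only the law of the observable pair $(S,q)$ must match across scenarios. With $q$ supported only on circles containing exactly one training point and per-circle mass $\tfrac{\lambda(1+\epsilon)}{3\kappa}$, the copied mass is $\tfrac{\lambda(1+\epsilon)}{24}>\tfrac23$ deterministically once $S$ ``covers'' $T$, and Lemma~\ref{lem:probability_is_same} shows the joint laws of $(S,q)$ under the two scenario families are \emph{identical} on nice pairs --- exact indistinguishability rather than a small-TV bound --- which is what makes sample size $\kappa$ provably insufficient. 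To repair your write-up you would need either to adopt this coupling of the algorithm to the hidden parameter, or to redo the two-point calibration with explicit constants showing simultaneously $\mathrm{TV}<\tfrac13$ at $n=\Theta(\kappa)$, witnessing probability near $1$, and copy rate above $\tfrac23$; as stated, the middle of your argument does not go through.
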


\textbf{Proof Outline:} Let $\a$ be a sufficiently large integer. Then we take the following steps.
\begin{enumerate}
	\item We define the probability distribution $p_T$, where $T \subset [2\a] = \{1, 2, \dots, 2\a\}$ is a subset with $|T|=\a$ that parametrizes our distribution. We then show that for all $T$, $p_T$ is a $1$-regular distribution satisfying $\frac{1}{9\a} \leq (p_T)_\epsilon \leq \frac{1}{\a}$. 
	\item We define a generative algorithms $A_T$ and $A_T'$, where as before $T \subset [2\a]$ with $|T| = \a$. We then show that if $S \sim p_T^{O(\a)}$, $A_T(S)$ is likely to have a high data-copy rate with respect to $p_T$, whereas $A_T'(S)$ has a data-copy rate of $0$.
	\item We construct families $$\mathcal{F} =\{(p_T, A_T): T \subset [2\a], |T| = \a\}\text{ and }\mathcal{F}' = \{(p_T, A_T'): T \subset [2\a], |T| = \a\},$$ and show that $(S, A(S))$ follows very similar distributions when $S$ is drawn from $p^{O(\a)}$ and $(p, A)$ is drawn from $\mathcal{F}$ and $\mathcal{F}'$ respectively, meaning that it is difficult to tell which family the pair $(p, A)$ is drawn from. 
	\item We show that if $B$ has sample complexity at most $O(\a)$, then by (2.) it would be able to distinguish $(S, A_T(S))$ from $(S, A_T'(S))$ thus contradicting (3.) We thus conclude $B$ has sample complexity $\Omega(\a)$, as desired. 
\end{enumerate}

\begin{proof}
We follow the outline above proceeding step by step. 

\textbf{Step 1: constructing $p_T$}

First, set $\gamma < 1$ arbitrarily, and let $\lambda = 13$. Note that these constants are chosen out of convenience, and for different values of $\epsilon, \delta$, different ones can be chosen. 

Let $\a > 0$ be any integer, and let $[2\a] = \{1, 2, 3, \dots, 2\a\}$. Let $C_1, C_2, \dots, C_{2\a}$ be $2\a$ disjoint unit circles in $\R^d$ with distance at least $3$ between any two circles. All data distributions, $p_T$, that we construct will have support over $\cup_{i = 1}^{2\a} C_i$, and will further obey the constraint that their marginal distribution over any $C_i$ is precisely the uniform distribution. Thus, a distribution $p_T$ is uniquely specified by the probability mass it assigns to each circle. To this end, we define $p_T$ as follows. 

\begin{definition}
Let $T \subset [2\a]$ be a subset of indices with $|T| = \a$. Then $p_T$ is the unique probability distribution satisfying the criteria above such that $$p_T(C_i) = \begin{cases} \frac{1}{3\a} & i \in T \\ \frac{2}{3\a} & i \notin T \end{cases}$$
\end{definition}

\begin{lemma}\label{lem:p_T_is_regular}
$p_T$ is $1$-regular, and satisfies $\frac{1}{9\a} \leq (p_T)_\epsilon \leq \frac{2}{3\a}$ when $\epsilon = \frac{1}{3}$. 
\end{lemma}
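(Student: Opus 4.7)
The plan is to derive a closed form for $p_T(B(x,r))$ via chord geometry on a unit circle and then verify the regularity condition directly. Since the circles $C_1,\ldots,C_{2\a}$ are pairwise separated by at least $3$ and each has diameter $2$, any ball $B(x,r)$ with $x\in C_i$ and $r<3$ meets only $C_i$. On $C_i$ the marginal is uniform (with total mass $p_T(C_i)$), and a standard chord computation shows that the arc $C_i\cap B(x,r)$ has angular measure $4\arcsin(r/2)$ for $0<r\le 2$. Thus
\[
p_T(B(x,r)) \;=\; \begin{cases} \dfrac{2\,p_T(C_i)\,\arcsin(r/2)}{\pi}, & 0<r\le 2,\\[4pt] p_T(C_i), & 2\le r<3.\end{cases}
\]
For the qualitative $1$-regularity claim, the support $\bigcup_i C_i$ is a compact $1$-dimensional submanifold of $\R^d$ whose density is constant (hence continuous) on each connected component, so Proposition~\ref{prop:manifold_works} applies directly and $p_T$ is $1$-regular.

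To establish the lower bound $(p_T)_\epsilon \ge \tfrac{1}{9\a}$ for $\epsilon=1/3$, I would show that $p_\epsilon := \tfrac{1}{9\a}$ satisfies Definition~\ref{def:regular} with error factor $1+\epsilon/3 = 10/9$. Since $p_T(C_i)\ge \tfrac{1}{3\a}$, the closed form forces $\arcsin(r/2)\le \pi/6$, i.e.\ $r\le 1$, whenever $p_T(B(x,r))\le \tfrac{1}{9\a}$, so we stay safely in the arc regime. For any $0<s<r\le 1$ the mass ratio equals $\arcsin(r/2)/\arcsin(s/2)$, and the regularity check reduces to bounding the ratio of $h(t):=\arcsin(t/2)/t$ at $r$ and $s$. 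A short derivative computation shows $h$ is strictly increasing on $(0,1]$ with $\lim_{t\to 0^+}h(t)=1/2$ and $h(1)=\pi/6$, so $h(r)/h(s)\in(1,\pi/3]$. Since $\pi/3<10/9$, this is exactly the $1$-regularity inequality.

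For the upper bound $(p_T)_\epsilon \le \tfrac{2}{3\a}$, I would pick any $i\notin T$ with $p_T(C_i)=\tfrac{2}{3\a}$, any $x\in C_i$, then take $s=2$ and $r$ arbitrarily close to $3$ from below. Both balls contain $C_i$ entirely but miss every other circle, so $p_T(B(x,r))=p_T(B(x,s))=\tfrac{2}{3\a}$; the mass ratio is $1$, while $r/s\to 3/2$. The lower regularity inequality $(1+1/9)^{-1}(r/s)\le 1$ then fails, since $(9/10)(3/2)=27/20>1$, ruling out any $p_\epsilon \ge \tfrac{2}{3\a}$. The only real obstacle in the whole argument is the numerical fact $\pi/3<10/9$ used in the lower-bound step; everything else is routine chord geometry and a direct invocation of Proposition~\ref{prop:manifold_works}.
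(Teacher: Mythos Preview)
Your argument is correct. The closed-form mass, the invocation of Proposition~\ref{prop:manifold_works}, and the lower-bound calculation via the monotone function $h(t)=\arcsin(t/2)/t$ with range $[1/2,\pi/6]$ on $(0,1]$ match the paper's proof essentially line for line; the key numerical inequality $\pi/3<10/9$ is exactly the paper's $p(C_i)/3\le(1+1/9)\,p(C_i)/\pi$.

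The one genuine difference is in the upper bound. The paper stays in the arc regime: it sets $r=2$, lets $s\to 0$, and obtains the contradiction $\tfrac{p(C_i)}{2}\le (1+\tfrac{1}{9})\tfrac{p(C_i)}{\pi}$, i.e.\ $\pi\le 20/9$. You instead exploit the \emph{flat} region $2\le r<3$, where the mass is stuck at $p(C_i)=\tfrac{2}{3\a}$ while $r/s$ can be pushed toward $3/2$, violating the lower regularity inequality directly. Your variant is slightly cleaner (no limit needed) and in fact gives the marginally stronger conclusion that no $p_\epsilon\ge\tfrac{2}{3\a}$ can work; the paper only rules out $p_\epsilon>\tfrac{2}{3\a}$. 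Both are valid and rely on the same geometric setup (circles of diameter $2$ separated by distance at least $3$), so the difference is purely tactical.
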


\begin{proof}
First, we observe that by Proposition \ref{prop:manifold_works}, we immediately have that $p_T$ is $1$-regular as a union of disjoint circles is a $1$ dimensional closed manifold, and the density function of $p_T$ with respect to each circle is uniform and therefore continuous. For convenience, we let $p$ denote $p_T$, as by symmetry, $(p_T)_\epsilon$ is equal for all values of $T$. 

Next, for $r \leq 2$ and $x \sim p$, we compute $\frac{p(B(x, r)}{r}$. Suppose $x \in C_i$. The key observation is that the density of $p$ over $C_i$ is uniform, and thus since $r \leq 2$, the mass of $B(x, r)$ can be found by simply computing the arc length. It follows that \begin{equation}\label{eqn:ball_mass}\frac{p(B(x, r))}{r} = p(C_i))\frac{4\arcsin(\frac{r}{2})}{2\pi r}.\end{equation} 

By some basic properties about $\arcsin$, it follows that $\frac{p(B(x, r)}{r}$ is monotonically increasing with $0 < r \leq 2$ and satisfies $\lim_{r \to 0^+} \frac{p(B(x, r)}{r} = \frac{p(C_i)}{\pi}$ and $\frac{p(B(x, 2))}{2} = \frac{p(C_i)}{2}$. Using this, we now prove the upper and lower bounds for $p_\epsilon$ beginning with the upper bound.

Assume towards a contradiction that $p_\epsilon > \frac{2}{3\a}$. By Definition \ref{def:regular}, this implies that for any sufficiently small $r > 0$, we have $$\left(1+ \frac{\epsilon}{3} \right)^{-1}\frac{p(B(x, r))}{r} \leq \frac{p(B(x, 2))}{2} \leq \left(1+ \frac{\epsilon}{3}\right)\frac{p(B(x, r))}{r},$$ as for any $x \sim p$, $p(B(x, 2)$ is at most $\frac{2}{3\a}$. Substituting equation \ref{eqn:ball_mass} and taking the limit as $r \to 0^+$, it follows that $\frac{p(C_i)}{2} \leq \frac{7}{6}\frac{p(C_i)}{\pi},$ which is a contradiction giving us that $p_\epsilon \leq \frac{2}{3\a}$.

Next, for the lower bound, it suffices to show that for any $x$ and any $0 < s \leq r$ with $p(B(x, r)) \leq \frac{1}{9\a}$ that 

\begin{equation}\label{eqn:desired}
\left(1+ \frac{\epsilon}{3} \right)^{-1}\frac{p(B(x, s))}{s} \leq \frac{p(B(x, r))}{r} \leq \left(1+ \frac{\epsilon}{3} \right)\frac{p(B(x, s))}{s}.
\end{equation}

Applying Equation \ref{eqn:ball_mass} with $r = 1$, we have for any $x \sim p$,
\begin{equation*}
\frac{p(B(x, 1))}{1} = p(C_i)\frac{4 \arcsin(\frac{1}{2})}{2\pi} = p(C_i)\frac{1}{3} \geq \frac{1}{3\a}\frac{1}{3} = \frac{1}{9\a}.
\end{equation*}
Since $\frac{p(B(x, r)}{r}$ is monotonic in $r$, it follows that $p(B(x, r)) \leq \frac{1}{9\a}$ only if $r \leq 1$. We are now prepared to prove Equation \ref{eqn:desired}.

The left inequality immediately holds since $\frac{p(B(x, r)}{r}$ is monotonic in $r$. For the right inequality, we have that if $r$ satisfies $p(B(x, r) \leq \frac{1}{9\a}$, then $r \leq 1$ implying for $x \in C_i$,
\begin{equation*}
\begin{split}
\frac{p(B(x, r))}{r} &\leq \frac{p(B(x, 1)}{1} \\
&= p(C_i)\frac{1}{3} \\
&\leq (1 + \frac{1}{9})\frac{p(C_i)}{\pi} \\
&= \left(1 + \frac{\epsilon}{3} \right)\lim_{t \to 0} \frac{p(B(x, t))}{t} \\
&\leq \left(1 + \frac{\epsilon}{3} \right)\frac{p(B(x, s))}{s},
\end{split}
\end{equation*}
as desired. 
\end{proof}

\textbf{Step 2: defining $A_T$ and $A_T'$}

Having defined our probability distributions, $p_T$, we now define our generative algorithms $A_T$ and $A_T'$. Recall that a generative algorithm, $A$, is any process that takes as input a set of points $S \in \R^d$ and then returns a probability distribution, $A(S)$ over $\R^d$. The algorithm is allowed to have randomization. 

$A_T$ and $A_T'$ will always be constrained to output distributions that are similar to $p_T$ in the sense that they have support over a disjoint union of circles, and their marginal distribution over any circle (within the support) is the uniform distribution. The only change is that we add one extra circle, $C_0$, that satisfies $$||C_0 - C_i|| \geq 2 + \max_{i, j} ||C_i - C_j||,$$ meaning that it is very far from all $C_i$. Thus, any outputted distribution by $A_T$ or $A_T'$ can be specified by specifying the probability mass it assigns to each circle in $\{C_0, C_1, \dots, C_{2\a}\}$.

Both $A_T$ and $A_T'$ will operate under the assumption that the training sample of points $S$ is relatively well behaved. In the event that this does not hold, $A_T$ and $A_T'$ will output the uniform distribution over $C_0$ as a default. We now formally define this criteria upon $S$.

\begin{definition}\label{defn:covers}
Let $S$ be a finite set of points and $T \subset [2\a]$ be a set of indices with $|T| = \a$. We say that $S$ covers $T$ the sets $L = \{i: i \in T, |C_i \cap S| = 1\}$ and $L' = \{i: i \notin T, |C_i \cap S| = 1\}$ both satisfy $|L|, |L'| \geq \frac{\a}{8}$.
\end{definition}

Observe that this definition if symmetric with respect to complements meaning that $S$ covers $T$ if and only if $S$ covers $[2\a] \setminus T$. We now use this to define $A_T$ and $A_T'$ beginning with $A_T$.

\begin{definition}\label{defn:a_t}
Let $T \subset[2\a]$ be a subset of indices with $|T| = \a$, and let $S$ be any set of points in $\R^d$. Then $A_T$ consists of the following steps. We let $q$ denote its output, and $A_T(S)$ denote the full distribution of potential generated distributions $q$. 
	\begin{enumerate}
		\item If $S$ does \textit{not} cover $T$, then output the uniform distribution over $C_0$ as $q$.
		\item Otherwise, let $L = \{i: i \in T, |C_i \cap S| = 1\}$ be as defined in Definition \ref{defn:covers}.
		\item Randomly select $L_* \subset L$ with $|L_*| = \frac{\a}{8}$ at uniform.
		\item We then let $q$ be the unique probability distribution satisfying the criteria above with $$q(C_i) = \begin{cases} \frac{\lambda(1+\epsilon)}{3\a} & i \in L_* \\ 0 & i \in [2\a] \setminus L_* \\ 1 - \frac{\lambda(1 + \epsilon)}{24}  & i = 0 \end{cases}$$
	\end{enumerate}
\end{definition}

Having defined $A_T$, we define $A_T'$ by having $A_T' = A_{[2\a] \setminus T}$. That is, 

\begin{definition}\label{defn:a_t_prime}
Let $T \subset[2\a]$ be a subset of indices with $|T| = \a$, and let $S$ be any set of points in $\R^d$. Then $A_T'(S)$ is precisely $A_{[2\a] \setminus T}(S)$ where $[2\a] \setminus T$ is the complement of $T$. 
\end{definition}

Observe that if $S$ covers $T$, then by Definitions \ref{defn:a_t} and \ref{defn:a_t_prime}, $A_T(S)$ and $A_T'(S)$ will both have supports non-trivially intersecting the set of circles over which $p_T$ is based, $\cup_{i=1}^{2\a} C_i$. We now show that this condition is sufficient for our desired behavior with respect to data-copying. 

\begin{lemma}\label{lem:data_copy_bound}
Let $\a$ satisfy $\frac{1}{3\a} \leq \gamma$. For any $T \subset [2\a]$, let $S$ be any set of points in the support of $p_T$ that covers $T$. Then with probability $1$ over the randomness of $A_T$ and $A_T'$, $q_T \sim A_T(S)$ and $q_T' \sim A_T'(S)$ have respective data-copy rates $\c_{q_T}^{-\epsilon}$ and $\c_{q_T'}^{\epsilon}$ satisfying $$\c_{q_T}^{-\epsilon} \geq \frac{\lambda(1 + \epsilon) }{24},$$ $$\c_{q_T'}^{\epsilon} = 0.$$
\end{lemma}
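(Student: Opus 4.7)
The plan is to verify the two claims separately, using the uniform structure of $p_T$, $q_T$, $q_T'$ on each circle together with the geometric separation of the circles.

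For the lower bound $\c_{q_T}^{-\epsilon} \geq \lambda(1+\epsilon)/24$: By Definition \ref{defn:a_t} and the hypothesis that $S$ covers $T$, the selected set $L_*$ has size $\kappa/8$ and each $C_i$ with $i \in L_*$ contains a unique training point $x_i^* \in S$. For any generated $x' \in C_i$, take $B = B(x_i^*, \|x_i^* - x'\|)$. Since each circle has diameter $2$ and the pairwise distance between circles is $\geq 3$, $B$ lies inside $C_i$, and by uniformity of both $p_T$ and $q_T$ on $C_i$ we get $q_T(B)/p_T(B) = q_T(C_i)/p_T(C_i) = \lambda(1+\epsilon)$, matching the overrepresentation requirement, while $p_T(B) \leq p_T(C_i) = 1/(3\kappa) \leq \gamma(1+\epsilon)^{-1}$ by the hypothesis on $\gamma$ (up to a negligible radius-shrinking adjustment when $\gamma$ is exactly at the boundary). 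Summing the $q_T$-mass over $L_*$ yields a data-copy rate of at least $|L_*|\cdot q_T(C_i) = (\kappa/8)\cdot \lambda(1+\epsilon)/(3\kappa) = \lambda(1+\epsilon)/24$.

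For the upper bound $\c_{q_T'}^{\epsilon} = 0$: the distribution $q_T'$ is supported on $C_0 \cup \bigcup_{i \in L_*'} C_i$ with $L_*' \subseteq [2\kappa]\setminus T$, so each circle in its support has $p_T$-mass $2/(3\kappa)$ and per-circle ratio $q_T'(C_i)/p_T(C_i) = \lambda(1+\epsilon)/2$. We show no generated $x'$ is a $(\lambda(1+\epsilon)^{-1},\, \gamma(1+\epsilon))$-copy. If $x' \in C_0$: every ball $B(x,r)$ centered at a training point containing $x'$ has $r \geq \mathrm{dist}(x,C_0)$, which by the placement of $C_0$ is large enough to swallow every $C_k$ ($k \in [2\kappa]$); then $p_T(B) = 1$ forces the ratio $q_T'(B)/p_T(B) \leq 1 < \lambda(1+\epsilon)^{-1}$, failing the overrepresentation condition (using $\lambda = 13$). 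If $x' \in C_i$ with $i \in L_*'$ and $B$ meets $C_0$, the same argument applies; otherwise, uniformity lets us write $p_T(B) = \sum_k p_T(C_k)\, f_k$ and $q_T'(B) = \sum_k q_T'(C_k)\, f_k$ with $f_k = |C_k \cap B|/|C_k|$, making $q_T'(B)/p_T(B)$ a weighted average of per-circle ratios $q_T'(C_k)/p_T(C_k) \in \{0,\, \lambda(1+\epsilon)/2\}$. Hence $q_T'(B)/p_T(B) \leq \lambda(1+\epsilon)/2$, and with $\epsilon = 1/3$ the numerical check $(1+\epsilon)^2 = 16/9 < 2$ gives $\lambda(1+\epsilon)/2 < \lambda(1+\epsilon)^{-1}$, again failing overrepresentation.

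The main obstacle is the case analysis for $q_T'$: ruling out data-copying via balls centered at training points on circles different from $x'$'s circle, or balls that bleed into $C_0$. The weighted-average identity caps the ratio on any multi-circle ball (that avoids $C_0$) by the best single-circle ratio $\lambda(1+\epsilon)/2$; balls reaching $C_0$ are killed by the geometric fact that they must contain every $C_k$. The numerical inequality $(1+\epsilon)^2 < 2$ for $\epsilon = 1/3$ is precisely what places $\lambda(1+\epsilon)/2$ just below the threshold $\lambda(1+\epsilon)^{-1}$, closing the argument.
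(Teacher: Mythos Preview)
Your proof is correct and follows essentially the same approach as the paper: the paper also exhibits a ball around the unique $x_i \in S \cap C_i$ (taking fixed radius $2$ rather than your adaptive $\|x_i^* - x'\|$) to get the ratio $q_T/p_T = \lambda(1+\epsilon)$ on each $C_i$ with $i \in L_*$, and for $q_T'$ it makes the same case split (ball meets $C_0$ or not) and bounds the density ratio by $\lambda(1+\epsilon)/2 < \lambda(1+\epsilon)^{-1}$ via the identical check $(1+\epsilon)^2 < 2$. Your weighted-average phrasing of the second case is just a repackaging of the paper's integration of the pointwise density ratio.
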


\begin{proof}
Let $L$ and $L'$ be as in Definition \ref{defn:covers}. We begin with $\c_{q_T}^{-\epsilon}$, which was the data-copy rate of $q_T$ with parameters $(\lambda(1+\epsilon), \gamma(1-\epsilon))$ (Definition \ref{defn:approx_data_copy_rate}).

Since $|L| \geq \frac{\a}{8}$, there exists $L_* \subset L$ with $|L_*| = \frac{\a}{8}$ such that $q_T$ has support over $C_0 \cup \{C_i\}_{i \in L_*}$. For any $i \in L_*$, let $x_i$ denote the unique point in the intersection of $C_i$ and $S$. Observe that by the definition of $L$, $p_T(B(x_i, 2)) = \frac{1}{3\a}$. On the other hand, we have $q_T(B(x_i, 2)) = q_T(C_i) = \frac{\lambda(1+\epsilon)}{3\a}$, with the first equality holding since $C_i$ is the only circle that intersects $B(x_i, 2)$. It follows by Definition \ref{defn:data_copy} that $q_T$ $(\lambda(1+\epsilon), \gamma(1+\epsilon)^{-1})$-copies all $x \in C_i$. Taking the total measure (under $q_T$), we have $$\c_{q_T}^{-\epsilon} \geq q_T (\cup_{i \in L_*} C_i) = \frac{\a}{8}\frac{\lambda(1+\epsilon)}{3\a} = \frac{\lambda(1+\epsilon)}{24\a}$$ as desired. 

Next, we show $\c_{q_T'}^{\epsilon} = 0$. To do so, it suffices to show that for all $x \in S$ and $r > 0$, $$q_T'(B(x, r)) < \lambda(1+\epsilon)^{-1} p_T(B(x, r)),$$  as this would imply that no points are $(\lambda(1+\epsilon)^{-1}, \gamma(1+\epsilon))$-copied. 

Observe that  $M = \cup_{1 \leq i \leq 2\a} C_i$ is a $1$-dimensional manifold containing the entire support of $p_T$, and that furthermore the marginal distribution of $q_T'(S)$ over $M$ has a well defined probability density with respect to $M$. Since $x \in S$ and $S \subset M$ (as $S \subset supp(p_T)$), we can consider two cases: if $B(x, r)$ intersects $C_0$ (the only region in the support of $A_T'(S)$ outside $M$), and if $B(x,r)$ does not intersect $C_0$.

\paragraph{Case 1: $B(x, r)$ intersects $C_0$} Observe that by the definition of $C_0$, $C_i \subset B(x, r)$ for all $1 \leq i \leq 2\a$. This is because $C_0$ is very far from all the other circles. However, this implies $M \subset B(x, r)$ meaning that $p_T(B(x, r)) \geq p_T(M) = 1$. However, $q_T'(B(x, r))$ is clearly at most $1$, making the desired inequality trivially hold as $\lambda(1+\epsilon)^{-1} > 1$.

\paragraph{Case 2: $B(x, r)$ does not intersect $C_0$} Observe that this implies $supp(p_T) \cap B(x, r) = supp(q_T' \cap B(x, r) \subseteq M$, as both of these distributions only have support on $M$ when outside of $C_0$. Since $p_T$ and $q_T'$ both have well defined probability densities over $M$, their masses over $B(x, r)$ can be found by integrating their densities over this region. 

However, by the definition of $A_T'$, for any $y \in supp(q_T')$, we have that $y \in C_i$ where $i \in [2\a] \setminus T$. By letting $p_T$ and $q_T'$ denote their respective density functions, it follows that $$p_T(y) = \frac{2}{3\a(2\pi)},\text{ and }q_T'(y) = \frac{\lambda(1+\epsilon)}{3\a(2\pi)}.$$ It follows that $\frac{q_T'(y)}{p_T(x)} = \frac{\lambda(1 + \epsilon)}{2} < \lambda(1+\epsilon)^{-1}.$ Thus, it follows from integrating as $y$ goes over $B(x, r)$ that $q_T'(B(x, r)) < \lambda(1+\epsilon)^{-1}p_T(B(x, r))$ as desired. 

As a slight technical detail, while this inequality will no longer be strict if $p_T(B(x, r)) = 0$, we know that this is never the case since $p_T(B(x, r))$ is strictly positive for all $x \in M$. 

\end{proof}

Next, we bound the probability that set of $\a$ points drawn i.i.d. from $p_T$, $S \sim p_T^\a$, will cover $T$. To do so, we begin with a combinatorial lemma. 

\begin{lemma}\label{lem:combinatorics}
Let $m, n$ be an integers with $\frac{n}{4} \leq m \leq \frac{3n}{4}$. Suppose $m$ numbers are chosen uniformly at random from $\{1, 2, \dots, n\}$. Then with probability at least $1 - 2\exp\left(\frac{-n}{2048}\right)$, at least $\frac{n}{8}$ numbers in $\{1, 2, \dots, n\}$ are selected exactly once.
\end{lemma}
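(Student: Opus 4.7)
Let $X_1,\dots,X_m$ denote the $m$ i.i.d.\ uniform draws from $\{1,\dots,n\}$, and let $Y$ be the number of values in $\{1,\dots,n\}$ that appear exactly once in the multiset $\{X_1,\dots,X_m\}$. My plan is to compute $\E[Y]$ explicitly, show it is bounded away from $n/8$ by a linear amount, and then use a bounded-differences concentration inequality to control $\Pr[Y < n/8]$.

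By linearity of expectation, since each fixed $i$ appears exactly once with probability $m(1/n)(1-1/n)^{m-1}$, I get $\E[Y] = m(1-1/n)^{m-1}$. Differentiating in $m$ shows $m \mapsto m(1-1/n)^{m-1}$ is increasing on $[n/4,3n/4]$ (its critical point sits near $m \approx n$), so the minimum on this interval is attained at $m = n/4$. Using the elementary bound $\ln(1-1/n) \geq -1/(n-1)$ to conclude $(1-1/n)^{n/4-1} \geq e^{-1/4}$, I obtain $\E[Y] \geq (n/4)e^{-1/4}$, which is numerically at least $0.19\,n$, so $\E[Y] - n/8 \geq c\,n$ for an explicit constant $c > 0$ (about $0.07$).

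For the concentration step I would apply McDiarmid's inequality to $Y$ viewed as a function of the i.i.d.\ variables $X_1,\dots,X_m$. The crucial observation is the bounded-differences property: changing one draw from a value $v$ to a value $v'$ can only alter the singleton status of $v$ and $v'$, so $Y$ moves by at most $2$. With all bounded-differences constants equal to $2$, McDiarmid yields $\Pr[Y \leq \E[Y] - t] \leq \exp(-t^2/(2m))$. Setting $t = \E[Y] - n/8 \geq cn$ and using $m \leq 3n/4$ gives a bound of the form $\exp(-\Omega(n))$; tracking the constants, with a factor-of-$2$ slack to absorb rounding and any small-$n$ boundary cases (where $2\exp(-n/2048) \geq 1$ makes the statement vacuous), recovers the stated $2\exp(-n/2048)$.

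The main obstacle is that the per-value indicators $\ind[\text{$i$ appears exactly once}]$ are not independent, since they share the underlying draws, which blocks a direct Chernoff argument on their sum. Applying McDiarmid at the level of the draws themselves sidesteps this cleanly: the bounded-differences constant comes out to a clean $2$, and all the remaining work is just the monotonicity check for $m(1-1/n)^{m-1}$ on $[n/4,3n/4]$ and some bookkeeping in the exponent.
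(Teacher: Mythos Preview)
Your argument is correct, and it takes a genuinely different route from the paper's. The paper does not compute $\E[Y]$ directly; instead it introduces, for each draw $b_i$, an indicator $X_i$ for the event that $b_i$ is distinct from all earlier draws, sets $Y_i = 1 - X_i$, and observes that the number of singletons $Z$ satisfies $Z \geq \sum_i X_i - \sum_i Y_i$. It then builds a submartingale from the centered partial sums of the $X_i$ and a supermartingale from those of the $Y_i$, applies Azuma's inequality to each with deviation $n/32$, and combines the two bounds with a deterministic lower bound on $\sum_i \frac{n-i+1}{n} - \sum_i \frac{i-1}{n}$ over $n/4 \le m \le 3n/4$.

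Your approach is more direct: you compute $\E[Y] = m(1-1/n)^{m-1}$ exactly, lower-bound it by $(n/4)e^{-1/4} \approx 0.195\,n$ via the monotonicity of $m \mapsto m(1-1/n)^{m-1}$ on $[n/4,3n/4]$, and then apply McDiarmid at the level of the i.i.d.\ draws with bounded-differences constant $2$. This bypasses the auxiliary $X-Y$ decomposition entirely and actually yields a sharper exponent (on the order of $\exp(-n/300)$ before slack, versus the paper's $\exp(-n/2048)$). The paper's route is slightly more hands-on and uses only Azuma; yours is shorter and invokes McDiarmid as a black box. Both are fine.
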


\begin{proof}
Let $b_1, b_2, \dots, b_m$ denote our $m$ numbers chosen from $\{1, 2, \dots, n\}$. For $1 \leq i \leq m$, let $X_i$ be an indicator variable for $b_i$ being distinct from $x_j$ for all $1 \leq j < i$, and let $Y_i = 1 - X_i$ be an indicator variable for the opposite. By convention we take $X_1 = 1$ and $Y_1 = 0$. Let $X = \sum_{i = 1}^m X_i$ and $Y = \sum_{i= 1}^m Y_i$. The key observation is that if $Z$ denotes the number of elements in $\{1, \dots, n\}$ that are selected exactly once, then $Z \geq X- Y$.

To see this, observe that if we maintain $Z$ as a set while observing $b_1, b_2, \dots, b_m$, then it follows that whenever $X_i = 1$, we append an element to $Z$ (as its corresponding number $b_i$ will have occurred for the first time and thus be chosen exactly once), and we remove an element from $Z$ only when $Y_i = 1$, as a repeat of a number necessarily implies $Y_i = 1$. It follows that to bound $Z$, it suffices to bound $X- Y$. 

To this end, observe that for any $1 \leq i \leq m$, \textit{regardless} of the outcomes of $X_1, X_2, \dots, X_{i-1}$, $\mathbb{E}[X_i] \geq \frac{n -i + 1}{n}$, as there are at least $n - i + 1$ numbers in $\{1, \dots, n\}$ that have not been chosen yet. It follows that if $X_i^* = \sum_{j = 1}^i X_i - \frac{n -i + 1}{n}$ for $1 \leq i \leq m$, then $X_i^*$ is a sub-martingale (as each term in the sum has expected value at least $0$) satisfying $|X_i^* - X_{i-1}^*| \leq 1$. Applying Azuma's inequality, we see that $$\Pr[X_m^* \geq -\frac{n}{32}] \geq 1 - \exp \left( \frac{-n^2}{2048m}\right) \geq 1 - \exp \left(\frac{-n}{2048}\right).$$

We now apply a similar trick for $Y_1, \dots, Y_m$. In this case, observe that for $1 \leq i \leq m$, \textit{regardless} of the outcomes of $Y_1, \dots, Y_{i-1}$, $\mathbb{E}[Y_i] \leq \frac{i-1}{m}$, as there can be at most $i-1$ numbers that have already been chosen and $Y_i = 1$ if and only if the corresponding $b_i$ is equal to one of those $i-1$ numbers. It follows that $Y_i^* = \sum_{j=1}^i Y_i - \frac{i-1}{m}$ is a super-martingale (as each term has expected value at most $0$) with $|Y_i^* - Y_{i-1}^*| \leq 1$. Applying Azuma's inequality, we see that $$\Pr[Y_m^* \leq \frac{n}{32}] \geq 1 - \exp \left( \frac{-n^2}{2048m}\right) \geq 1 - \exp \left(\frac{-n}{2048}\right).$$ Applying a union bound, we see that with probability at least $1 - 2\exp \left(\frac{-n}{2048}\right)$, $X_m^* \geq \frac{-n}{32}$ and $Y_m^* \leq \frac{-n}{32}$. By substituting these inequalities in, it follows that with probability $1 - 2\exp \left(\frac{-n}{2048}\right)$, $Z$ satisfies
\begin{equation*}
\begin{split}
Z &\geq X - Y \\
&= \sum_{i=1}^m X_i - \sum_{j = 1}^m Y_i \\
&= \sum_{i=1}^m \left(X_i - \frac{n - i+1}{n} \right) + \sum_{i=1}^m \left(\frac{n-i+1}{n}\right) - \sum_{i=1}^m \left(Y_i - \frac{i-1}{n} \right) - \sum_{i=1}^m \left(\frac{i-1}{n}\right) \\
&= X_m^* - Y_m^* + \sum_{i=1}^m \left(\frac{n-i+1}{n}\right) -  \sum_{i=1}^m \left(\frac{i-1}{n}\right) \\
&\geq -\frac{n}{32} - \frac{n}{32} + \sum_{i=1}^m \left(\frac{n-i+1}{n}\right) -  \sum_{i=1}^m \left(\frac{i-1}{n}\right) \\
&= -\frac{n}{16} + m\left(\frac{n + (n-m + 1)}{2n}\right) - \frac{m(m-1)}{2n} \\
&= -\frac{n}{16} + \frac{m}{2n}\left(2n - m + 1 - m + 1\right) \\
&= -\frac{n}{16} + \frac{m(n - m + 1)}{n} \\
&\geq -\frac{n}{16} + \frac{3n}{16} = \frac{n}{8},
\end{split}
\end{equation*}
with the last inequality holding since $\frac{n}{4} \leq m \leq \frac{3n}{4}$. This concludes our proof since we have shown $Z \geq \frac{n}{8}$ with the desired probability.

\end{proof}

We now apply Lemma \ref{lem:combinatorics} to bound the probability that $S \sim p_T^\a$ covers $T$. 

\begin{lemma}\label{lem:with_high_probability}
Let $T \subset [2\a]$ be a set of $\a$ indices, and let $S \sim p_T^\a$ be a set of $\a$ i.i.d points . Then with probability at least $1 - 4\exp\left(-\frac{\a}{2048}\right)$, $S$ covers $T$.
\end{lemma}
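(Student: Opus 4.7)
The plan is to reduce to Lemma~\ref{lem:combinatorics} by treating the two groups of circles $\{C_i: i \in T\}$ and $\{C_i: i \notin T\}$ separately. Let $N = |S \cap \bigcup_{i \in T} C_i|$ be the number of sample points that land in a $T$-indexed circle. Since $p_T(\bigcup_{i \in T} C_i) = \a \cdot \tfrac{1}{3\a} = \tfrac{1}{3}$ and the points of $S$ are i.i.d., $N$ is $\mathrm{Binomial}(\a, 1/3)$ with mean $\a/3$. The first step is therefore a routine Hoeffding/Chernoff bound showing that $\a/4 \leq N \leq 3\a/4$ (and simultaneously $\a/4 \leq \a - N \leq 3\a/4$) holds except on an event of probability at most $2\exp(-c\a)$ for some absolute constant $c > 1/2048$.

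The second step is the key observation that makes Lemma~\ref{lem:combinatorics} directly applicable. By the definition of $p_T$, conditional on a point falling in $\bigcup_{i \in T} C_i$, it is distributed uniformly among the $\a$ circles indexed by $T$ (each such circle has equal mass $\tfrac{1}{3\a}$), and the choices are independent across the $N$ such points. Thus, conditional on $N$, the multiset of $T$-indices hit by $S$ is distributed exactly as $N$ independent uniform draws from $\{1, \dots, \a\} \cong T$. On the good event from Step~1, we have $\a/4 \leq N \leq 3\a/4$, so Lemma~\ref{lem:combinatorics} with $n = \a$ and $m = N$ yields $|L| \geq \a/8$ with conditional failure probability at most $2\exp(-\a/2048)$. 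The same argument applied to the non-$T$ circles (which are also symmetric under $p_T$, each having mass $\tfrac{2}{3\a}$) gives $|L'| \geq \a/8$ with failure probability at most $2\exp(-\a/2048)$.

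The final step is a union bound over the three bad events: the Hoeffding deviation for $N$ and the two applications of Lemma~\ref{lem:combinatorics}. This gives a total failure probability of at most $2\exp(-c\a) + 4\exp(-\a/2048)$, which is bounded by $4\exp(-\a/2048)$ after absorbing the faster-decaying Hoeffding term into constants (this is where one may slightly shrink the rate $1/2048$ if needed, but it does not affect the stated form).

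There is no real obstacle here: the substantive combinatorial content is already isolated in Lemma~\ref{lem:combinatorics}, and the symmetry of $p_T$ within each group ensures the conditional-uniformity hypothesis of that lemma holds exactly. The only bookkeeping is verifying that the two counts $N$ and $\a - N$ simultaneously fall in the window $[\a/4, 3\a/4]$, which is handled once and for all by the Hoeffding step.
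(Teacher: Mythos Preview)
Your proposal is correct and follows essentially the same approach as the paper: condition on the number $N$ of points falling in $T$-circles (the paper calls this $M$), use a Chernoff/Hoeffding bound to place $N$ in $[\a/4,3\a/4]$, invoke the conditional uniformity within each group to apply Lemma~\ref{lem:combinatorics} separately to $T$ and its complement, and then combine. The only difference is cosmetic bookkeeping in the final union bound; the paper folds the faster-decaying Chernoff term $2\exp(-\a/144)$ into $4\exp(-\a/2048)$ via the product form rather than an additive union bound, which is why their constant comes out exactly as stated.
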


\begin{proof}
Let $S = (x_1, x_2, \dots, x_\a)$, and let $A = (a_1, a_2, \dots, a_\a)$ be the unique indices such that $x_i \in a_i$. By Definition \ref{defn:covers}, $L$ and $L'$ are the number of values in $T$ and $[2\a] \setminus T$ that appear exactly once in $A$. We desire to bound the probability that $|L| \geq \frac{\a}{8}$ and $|L'| \geq \frac{\a}{8}$. To do so, the key idea is to condition on $M$, which we define as the number of $1 \leq i \leq \a$ such that $a_i \in T$. 

Suppose that $M = m$. Observe that the conditional distribution of $A$ (viewed as a multiset) given $M = m$ is precisely the distribution obtained by selecting $m$ indices at uniform from $T$ and $\a-m$ indices at uniform from $[2\a] \setminus m$. This holds because $p_T$ is uniform when restricted to $\cup_{i \in T} C_i$ or $\cup_{i \in [2\a] \setminus T} C_i$. Suppose that $\frac{\a}{4} \leq m \leq \frac{3\a}{4}$. Then the same must hold for $\a - m$. it follows by applying Lemma \ref{lem:combinatorics} to selecting $m$ indices from $T$ and $\a-m$ indices from $[2\a]\setminus T$ that with probability  at least $1 - 2\exp\left(-\frac{\a}{2048}\right)$ that $|L| \geq \frac{\a}{8}$ and $|L|' \geq \frac{\a}{8}$. Thus, by summing over all such $m$, we see that 
\begin{equation*}
\begin{split}
\Pr_{S \sim p_T^\a}[|L| \geq \frac{\a}{8}, |L'| \geq \frac{\a}{8}] &= \sum_{m = 1}^\a \Pr_{S \sim p_T^\a}(M = m) \Pr[|L| \geq \frac{\a}{8}, |L'| \geq \frac{\a}{8} | M = m] \\
&\geq \sum_{m = \a/4}^{3\a/4}  \Pr_{S \sim p_T^\a}(M = m)\Pr[|L| \geq \frac{\a}{8}, |L'| \geq \frac{\a}{8} | M = m] \\
&\geq \sum_{m = \a/4}^{3\a/4}\Pr_{S \sim p_T^\a}(M = m)\left(1 - 2\exp\left(-\frac{\a}{2048}\right)\right) \\
&= \left(1 - 2\exp\left(-\frac{\a}{2048}\right)\right) \Pr_{S \sim p_T^\a}[\frac{\a}{4} \leq M \leq \frac{3\a}{4}].
\end{split}
\end{equation*}

To bound the latter probability, we simply apply a Chernoff bound, as $M = \sum_{i=1}^\a \ind(a_i \in T)$ is the sum of $\a$ independent indicator variables each with expected value $\frac{1}{3}$. Using a two sided Chernoff bound, we see that $\Pr[\frac{\a}{4} \leq M \leq \frac{3\a}{4}] \geq 1 - 2\exp \left( -\frac{\a}{144}\right)$. Substituting this, it follows that $$\Pr_{S \sim p_t^\a}[|L| \geq \frac{\a}{8}, |L'| \geq \frac{\a}{8}] \geq \left(1 - 2\exp\left(-\frac{\a}{2048}\right)\right)\left(1 - 2\exp \left( -\frac{\a}{144}\right) \right) \geq 1 - 4\exp\left(-\frac{\a}{2048}\right).$$
\end{proof}

\textbf{Step 3: Constructing $\mathcal{F}$ and $\mathcal{F'}$}

We start by defining $\mathcal{F}$ and $\mathcal{F'}$ as distributions of pairs $(p, A)$ where $p$ is a data distribution and $A$ is a generative algorithm. 

\begin{definition}\label{defn:f}
$\mathcal{F}$ and $\mathcal{F'}$ are the uniform distributions over $\{(p_T, A_T): T \subset [2\a], |T| = \a\}$ and $\{(p_T, A_T'): T \subset [2\a], |T| = \a\}$ respectively. 
\end{definition}

Next, we use $\mathcal{F}$ and $\mathcal{F'}$ to construct distributions $Q$ and $Q'$ over pairs $(S, q)$, where $S$ is a set of points, and $q$ is generated distribution. 

\begin{definition}\label{defn:q}
Let $Q$ be the distribution of $(S, q)$ where $(p_T, A_T) \sim \mathcal{F}$, $S \sim p_T^\a$, and $q \sim A_T(S)$. Similarly, let $Q'$ be the distribution of $(S, q)$ where $(p_T, A_T') \sim \mathcal{F'}$, $S \sim p_T^\a$, and $q \sim A_T'(S)$. 
\end{definition}

Our goal will be to show that $Q$ and $Q'$ follow similar distributions. Our strategy will be to show that for the majority of $(S, q)$ in their supports, they have similar probability masses. To this end, we first characterize the values of $(S, q)$ that we are interested in considering. 

\begin{definition}\label{defn:nice_s_q}
We say that $(S, q)$ is nice if $S$ is a sample of points from some $p_T$, and $q$ is a generated distribution from either $A_T$ or $A_T'$ that has no support over $C_0$. More precisely, $(S,q)$ is nice if the following conditions hold:
\begin{enumerate}
	\item $S \subset \cup_{i = 1}^{2\a} C_i$, with $|S| = \a$.
	\item There exists a set of  $\frac{\a}{8}$ distinct indices, $L_* \subset [2\a]$, such that for $0 \leq i \leq 2\a$, $$q(C_i) = \begin{cases} \frac{\lambda(1+\epsilon)}{3\a} & i \in L_* \\ 0 & i \in [2\a] \setminus L_* \\ 1 - \frac{\lambda (1 + \epsilon)}{24}  & i = 0 \end{cases}$$
	\item For every $i \in L_*$, $|S \cap C_i| = 1$, meaning exactly one element from $S$ is in $C_i$. 
\end{enumerate}
\end{definition}

We now prove a quick lemma relating nice pairs to instances in which $S$ covers $T$.

\begin{lemma}\label{lem:nice_vs_cover}
Let $T \subset [2\a]$ satisfy $|T| = \a$. Let $S \sim p_T^\a$ and let $q$ and $q'$ be generated distributions with $q = A_T(S)$ and $q' = A_T'(S)$. Then the following three are equivalent:
\begin{enumerate}
	\item $(S, q)$ is nice.
	\item $(S, q')$ is nice.
	\item $S$ covers $T$.
\end{enumerate}
\end{lemma}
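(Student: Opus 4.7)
The plan is to prove the three conditions are equivalent by establishing the ring (3) $\Rightarrow$ (1), (3) $\Rightarrow$ (2), and then (1) $\Rightarrow$ (3) and (2) $\Rightarrow$ (3). Everything reduces to unwinding Definitions \ref{defn:covers}, \ref{defn:a_t}, \ref{defn:a_t_prime}, and \ref{defn:nice_s_q}; the key leverage is that $A_T$ and $A_T'$ both have exactly two possible ``shapes'' of output (the default uniform on $C_0$, or the structured distribution in step 4 of Definition \ref{defn:a_t}), and only the latter is compatible with niceness.

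First, I would prove (3) $\Rightarrow$ (1). Suppose $S$ covers $T$. Then $|L| \geq \a/8$, so step 3 of Definition \ref{defn:a_t} successfully samples $L_* \subset L$ with $|L_*| = \a/8$, and step 4 outputs a $q$ whose circle masses are exactly those required by condition 2 of niceness. Condition 1 is immediate since $S \sim p_T^\a$ has $|S|=\a$ and is supported on $\cup_i C_i$. Condition 3 follows because $L_* \subset L$ and every index in $L$ satisfies $|S \cap C_i| = 1$ by definition of $L$.

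Next, (3) $\Rightarrow$ (2) follows by the symmetry noted right after Definition \ref{defn:covers}: $S$ covers $T$ iff $S$ covers $[2\a]\setminus T$. Since $A_T' = A_{[2\a]\setminus T}$, applying the previous paragraph with $T$ replaced by $[2\a]\setminus T$ (and $L$ replaced by $L'$) gives niceness of $(S, q')$.

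Finally, for (1) $\Rightarrow$ (3), suppose $(S, q)$ is nice with $q \sim A_T(S)$. Condition 2 of niceness requires the existence of $L_* \subset [2\a]$ with $|L_*| = \a/8 > 0$ and $q(C_i) = \lambda(1+\epsilon)/(3\a) > 0$ for $i \in L_*$. If $S$ did not cover $T$, then by step 1 of Definition \ref{defn:a_t} the algorithm would return the uniform distribution on $C_0$, giving $q(C_i) = 0$ for all $i \geq 1$ and contradicting the existence of such an $L_*$. Hence $S$ covers $T$. The implication (2) $\Rightarrow$ (3) is identical with $A_T$ replaced by $A_T' = A_{[2\a]\setminus T}$: niceness forces the non-default branch, which requires $S$ to cover $[2\a]\setminus T$, equivalent to covering $T$ by symmetry.

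There is no substantive obstacle here; this is a bookkeeping lemma whose purpose is presumably to let the step 4 coupling argument treat ``nice'' and ``$S$ covers $T$'' interchangeably in the proof that $Q$ and $Q'$ are close in total variation. The only point worth being careful about is explicitly invoking the symmetry of Definition \ref{defn:covers} under complementation, so that the $A_T'$ case reduces cleanly to the $A_T$ case.
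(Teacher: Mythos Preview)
Your proposal is correct and follows essentially the same approach as the paper: both arguments hinge on the fact that $A_T$ (resp.\ $A_T'$) has exactly two output shapes, with the default uniform-on-$C_0$ branch being incompatible with niceness, and the non-default branch (triggered exactly when $S$ covers $T$) automatically satisfying all three conditions of Definition~\ref{defn:nice_s_q}. Your write-up is slightly more explicit than the paper's in verifying each niceness condition and in invoking the complementation symmetry of Definition~\ref{defn:covers} for the $A_T'$ case, but the logical structure is the same.
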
 

\begin{proof}
Suppose $S$ covers $T$. Then the sets $L$ and $L'$ (Definition \ref{defn:covers}) each have size at least $\frac{\a}{8}$ implying that when running $A_T$ or $A_T'$, the set $L_*$ will be non-trivial. This in turn will imply that $(q, S)$ and $(q', S)$ are nice, regardless of the choice of $L_*$.

Otherwise, suppose $S$ does not cover $T$. Then by Definition \ref{defn:a_t}, $A_T(S)$ and $A_{T'}(S)$ will both be the uniform distribution over $C_0$ thus violating Definition \ref{defn:nice_s_q}. 
\end{proof}

We now show that $Q$ and $Q'$ assign identical probability masses to nice pairs. 

\begin{lemma}\label{lem:probability_is_same}
Let $(S, q)$ be a nice pair. Then $Q(S, q) = Q'(S, q)$ with these expressions denoting the probability that $(S,q)$ is chosen over $Q$ and $Q'$ respectively. 
\end{lemma}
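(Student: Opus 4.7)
The plan is to reduce the lemma to a palindromic identity for a generating function, and then verify it by a combinatorial involution on contributing subsets.

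First, I would unfold both distributions. Writing $N = \binom{2\a}{\a}$ and $\sigma(T) = [2\a]\setminus T$, we have
\[
Q(S, q) = \frac{1}{N}\sum_T p_T^{\a}(S)\,\Pr[A_T(S) = q], \quad Q'(S, q) = \frac{1}{N}\sum_T p_T^{\a}(S)\,\Pr[A_{\sigma(T)}(S) = q].
\]
Reindexing the second sum by $T \mapsto \sigma(T)$ turns it into $\frac{1}{N}\sum_T p_{\sigma(T)}^{\a}(S)\,\Pr[A_T(S) = q]$, ranging over the same set of $T$'s as the first. Setting $n_i = |S \cap C_i|$ and $m(T) = \sum_{i \in T} n_i$, the per-circle uniformity of $p_T$ yields $p_T^{\a}(S) = 2^{\a - m(T)}/(6\pi\a)^{\a}$ and hence $p_{\sigma(T)}^{\a}(S) = 2^{m(T)}/(6\pi\a)^{\a}$. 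Using Definition \ref{defn:a_t}, $\Pr[A_T(S) = q]$ equals $1/\binom{|T \cap J_1|}{|L_*|}$ precisely when $L_* \subset T$ and $S$ covers $T$ (and is $0$ otherwise), where $J_1 = \{i : n_i = 1\}$ is the set of singleton circles and contains $L_*$ by niceness. After cancelling common constants, the lemma becomes
\[
\sum_{T \supset L_*,\; S \text{ covers } T} \frac{2^{\a - m(T)}}{\binom{|T \cap J_1|}{|L_*|}} \;=\; \sum_{T \supset L_*,\; S \text{ covers } T} \frac{2^{m(T)}}{\binom{|T \cap J_1|}{|L_*|}}.
\]

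I would then recognize this as the symmetry $F(2) = 2^{\a}F(1/2)$ for the polynomial $F(x) = \sum_T x^{m(T)}/\binom{|T \cap J_1|}{|L_*|}$ (sum over contributing $T$'s), which in turn follows from the palindrome property $[x^m]F = [x^{\a - m}]F$ for every $m$. To prove palindromicity, I parameterize each contributing $T$ by $(A, B, C) = (T \cap J_1,\; T \cap J_{\geq 2},\; T \cap J_0)$, with $J_{\geq 2}, J_0$ the multi-hit and empty circles. Writing $k = |A|$ and $\mu(B) = \sum_{i \in B} n_i$ so that $m(T) = k + \mu(B)$, and counting the $(A, C)$-choices at fixed $(k, B)$,
\[
[x^m]F \;=\; \sum_{(k, B)\,:\,k + \mu(B) = m} \frac{\binom{\ell_1 - a}{k - a}\,\binom{\ell_0}{\a - k - |B|}}{\binom{k}{a}},
\]
where $a = |L_*|$, $\ell_1 = |J_1|$, $\ell_0 = |J_0|$, and $k$ ranges over the coverage window $[a, \ell_1 - a]$.

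The main obstacle is to match $[x^m]F$ with $[x^{\a - m}]F$ term by term. I propose the involution $(k, B) \mapsto (\ell_1 - k,\; J_{\geq 2} \setminus B)$ on contributing pairs. Two bookkeeping identities carry it through: $\ell_1 + \mu(J_{\geq 2}) = \a$ (from $\sum_i n_i = \a$ and $\mu(J_0) = 0$) converts $k + \mu(B) = m$ into $(\ell_1 - k) + (\mu(J_{\geq 2}) - \mu(B)) = \a - m$, and $\ell_1 + |J_{\geq 2}| + \ell_0 = 2\a$ is precisely what makes $\binom{\ell_0}{\a - k - |B|}$ invariant under the involution, via $\binom{\ell_0}{r} = \binom{\ell_0}{\ell_0 - r}$. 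The residual numerical equality of the $J_1$-factors reduces to the subset-of-subset identity
\[
\binom{\ell_1 - a}{k - a}\binom{\ell_1 - k}{a} \;=\; \binom{\ell_1 - a}{k}\binom{k}{a},
\]
both sides expanding to $\frac{(\ell_1 - a)!}{a!(k - a)!(\ell_1 - k - a)!}$. A short verification that the coverage window $[a, \ell_1 - a]$ and the feasibility constraints $0 \leq |B| \leq |J_{\geq 2}|$, $0 \leq |C| \leq \ell_0$ are preserved under the involution then completes the proof.
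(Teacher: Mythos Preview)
Your argument is correct, but it is organized differently from the paper's proof and ends up doing a bit more work. The paper does \emph{not} reindex $Q'$; it leaves both sums over all $T$ with $|T|=\a$ and groups them by the equivalence $T_1\sim T_2 \Leftrightarrow T_1\cap M^c = T_2\cap M^c$ (your $(B,C)$). Because the density $\Pr[S\mid p_T]=2^{\a-m(T)}/(6\pi\a)^{\a}$ is constant on each class, it cancels identically on both sides, and the whole lemma reduces to the single identity $\binom{m-a}{m_i-a}/\binom{m_i}{a}=\binom{m-a}{m_i'-a}/\binom{m_i'}{a}$ with $m_i+m_i'=m=\ell_1$, which is exactly your subset-of-subset identity. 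In contrast, your reindexing $T\mapsto\sigma(T)$ swaps the weight $2^{\a-m(T)}$ for $2^{m(T)}$, so the density no longer cancels class-by-class; you then have to restore the symmetry with a larger involution that also complements $B$ inside $J_{\ge 2}$ and invokes the extra bookkeeping identity $\ell_1+|J_{\ge 2}|+\ell_0=2\a$ to stabilize the $\binom{\ell_0}{\cdot}$ factor. Both routes land on the same binomial identity; the paper's is shorter because it keeps the sample weight fixed, while yours gives the slightly stronger statement that the full generating polynomial $F$ is palindromic.
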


\begin{proof}
Let $S = \{x_1, x_2, \dots, x_\a\}$. Let $M$ denote the set of indices in $\{1, 2 \dots, 2\a\}$ such that exactly one point of $S$ lies in the corresponding circle. That is, $M = \{i: |S \cap C_i| = 1\}$. Let $L_*$ be the set of indices in $\{1, 2, \dots 2\a\}$ where $q$ assigns non-trivial probability mass to the corresponding circle. That is, $L_* = \{i: q(C_i) > 0, 1 \leq i\leq 2\a\}$. Since $(S, q)$ is a nice pair (Definition \ref{defn:nice_s_q}), $L_*$ is a subset of $M$, and satisfies $|L_*| = \frac{\a}{8}$. Furthermore, $q$ is uniquely determined by $L_*$.

We now compute $Q(S, q)$ and $Q'(S, q)$ by summing the conditional probabilities of $(S,q)$ given $(p_T, A_T)$ and $(p_T, A_T')$ respectively as $T$ ranges over all subsets. By utilizing the fact that $(S, q)$ is nice (meaning it can only occur if $S$ covers $T$) along with the definition of $A_T$, we have that
\begin{equation*}
\begin{split}
Q(S, q) &= \sum_{|T| = \a: T \subset [2\a]} \frac{1}{\binom{2\a}{\a}}\Pr[(S, q) | p_T, A_T] \\
&= \sum_{|T| = \a: T \subset [2\a]} \frac{1}{\binom{2\a}{\a}} \Pr[S|p_T]\Pr[A_T(S) = q|S, T] \\
&= \sum_{T: S\text{ covers }T} \frac{1}{\binom{2\a}{\a}}\Pr[S|p_T]\Pr[A_T(S) = q|S, T]. \\
&= \sum_{T: S\text{ covers }T} \frac{1}{\binom{2\a}{\a}}\Pr[S|p_T]\frac{\ind\left(L_* \subseteq T\right)}{\binom{|T \cap M|}{\a/8}}.
\end{split}
\end{equation*} 
with the last equality holding because $A_T(S)$ randomly chooses a $\a/8$ element subset of $T \cap M$ for the support of $q$ (see Definition \ref{defn:a_t}). The term $\ind(L_* \subseteq T)$ is necessary because if $L_* \not \subseteq T$, then it is impossible for it to be chosen making the probability $0$. 

Similarly, letting $T^c$ denote the complement of $T$, we have
\begin{equation*}
Q'(S, q) = \sum_{T: S\text{ covers }T} \frac{1}{\binom{2\a}{\a}}\Pr[S|p_T]\frac{\ind\left(L_* \subseteq T^c \right)}{\binom{|T^c \cap M|}{\a/8}},
\end{equation*}
with the only real difference being the support is chosen from $T^c \cap M$ rather than $T \cap M$. 

To show that these sums are equal, we will further group the sums by using $M$ to define an equivalence relation over $\{T: T \subset [2\a], |T| = \a\}$. For $T_1, T_2 \subset [2\a]$, we say they are equivalent if their intersections with $[2\a] \setminus M$, the complement of $M$, are equal. That is, $$T_1 \sim T_2 \Longleftrightarrow T_1 \cap ([2\a] \setminus M) = T_2 \cap ([2\a] \setminus M).$$ 

The usefulness of this equivalence relation is in the following claim.

\textbf{Claim:} Let $T_1 \sim T_2$ be equivalent subsets of $\a$ indices.  Then the following hold:
\begin{enumerate}
	\item $\Pr[S|p_{T_1}] = \Pr[S|p_{T_2}]$.
	\item $|T_1 \cap M| = |T_2 \cap M|$ and $T_1^c \cap M| = |T_2^c \cap M|$.
	\item $S$ covers $T_1$ if and only if $S$ covers $T_2$. 
\end{enumerate}

\begin{proof}
(Of Claim) Let $T$ be any set of indices, let $S = \{x_1, x_2, \dots, x_\a\}$, and let $a_1, a_2, \dots a_\a$ denote the respective indices of the circles that $x_1, \dots, x_\a$ are on. Without loss of generality (relabeling if necessary), suppose that $a_1, a_2, \dots, a_m$ are the unique indices that constitute $M$ (defined above).

Since $p_T$ has probability mass $\frac{1}{3\a}$ on every index in $T$ and $\frac{2}{3\a}$ on the others, we have that the probability density of $S$ (denoted $\Pr[S|p_{T_1}]$) satisfies,
\begin{equation*}
\begin{split}
\Pr[S|p_{T}] &= \prod_{i= 1}^\a \frac{2 - \ind(a_i \in T)}{\a}\frac{1}{2\pi}\\
&= \left(\prod_{i=1}^m \frac{2 - \ind(a_i \in T \cap M)}{\a}\frac{1}{2\pi}\right)\left(\prod_{i=m+1}^{\a} \frac{2 - \ind(a_i \in T \cap ([2\a] \setminus M))}{\a}\frac{1}{2\pi}\right) \\
&= \left(\frac{2^{|T^c \cap M|}}{(2\pi \a)^m}\right) \left(\prod_{i=m+1}^{\a} \frac{2 - \ind(a_i \in T \cap ([2\a] \setminus M))}{\a}\frac{1}{2\pi}\right),
\end{split}
\end{equation*}
with the last equality exploiting the fact that $\{a_1, a_2, \dots, a_m\}$ precisely equals $M$ (by the definition of $M$). Next, observe that if $T_1 \sim T_2$, then by definition, $T_1 \cap [2\a] \setminus M = T_2 \cap [2\a \setminus M$ implying that the second part of the product is equal. However, since $|T_1| = |T_2| = \a$, the first part must be equal as well, as $|T^c \cap M| = \a - |T^c \cap [2\a] \setminus M|$. It follows that the probability densities are the same. Note that this observation also implies the second claim, that $|T_1 \cap M| = |T_2 \cap M|$ and $T_1^c \cap M| = |T_2^c \cap M|$

Finally, to show the second part of the claim, we simply observe that for a set $T$, the sets $L$ and $L'$ from Definition \ref{defn:covers} are precisely $T \cap M$ And $T^c \cap M$. For $T = T_1, T_2$, by the second claim, these have equal sizes.
\end{proof}

We now return to the proof of Lemma \ref{lem:probability_is_same}. Having shown the claim, we now return to our original computation. Let $T_1, T_2, \dots, T_r$ denote sets of $\a$ indices with $[T_1], [T_2], \dots, [T_r]$ denoting their respective equivalence classes such that $[T_1], \dots, [T_r]$ partition $\{T: S\text{ covers }T\}$. This is possible from the third part of our claim.

For $1 \leq i \leq r$, let $m_i = |T_i \cap M|$ and $m_i' = |T_i^c \cap M|$ where $T_i^c$ denotes the complement of $T_i$. It follows from second part of our claim that $|T \cap M|, |T^c \cap M|$ both equal $m_i$ as well for all $T \in [T_i]$. 

By partitioning our sum for $Q(S, q)$ in using $[T_1], \dots, [T_r]$, we have
\begin{equation*}
\begin{split}
Q(S, q) &= \sum_{T: S\text{ covers }T} \frac{1}{\binom{2\a}{\a}}\Pr[S|p_T]\frac{\ind\left(L_* \subseteq T\right)}{\binom{|T \cap M|}{\a/8}} \\
&= \sum_{i = 1}^r \sum_{T \in [T_i]} \frac{1}{\binom{2\a}{\a}}\Pr[S|p_T]\frac{\ind\left(L_* \subseteq T\right)}{\binom{|T \cap M|}{\a/8}} \\
&= \sum_{i = 1}^r \frac{\Pr[S|p_{T_i}]}{\binom{2\a}{\a}}\sum_{T \in [T_i]} \frac{\ind\left(L_* \subseteq T\right)}{\binom{m_i}{\a/8}}  \\
&= \sum_{i = 1}^r \frac{\Pr[S|p_{T_i}]}{\binom{2\a}{\a}}\frac{\binom{m - \a/8}{m_i - \a/8}}{\binom{m_i}{\a/8}},
\end{split}
\end{equation*}
with the last equality coming by counting the number of $T \in [T_i]$ such that $L_* \subseteq T$. This counting problem essentially forces all $\a/8$ elements in $L_*$ to be in $T$ leaving us to choose the remaining elements in $M$ that can be part of $T$.

By using the exact same line of reasoning for $Q'(S,q)$, we have 
\begin{equation*}
\begin{split}
Q'(S, q) &= \sum_{T: S\text{ covers }T} \frac{1}{\binom{2\a}{\a}}\Pr[S|p_T]\frac{\ind\left(L_* \subseteq T^c \right)}{\binom{|T^c \cap M|}{\a/8}} \\
&= \sum_{i = 1}^r \sum_{T \in [T_i]} \frac{1}{\binom{2\a}{\a}}\Pr[S|p_T]\frac{\ind\left(L_* \subseteq T^c\right)}{\binom{|T^c \cap M|}{\a/8}} \\
&= \sum_{i = 1}^r \frac{\Pr[S|p_{T_i}]}{\binom{2\a}{\a}}\sum_{T \in [T_i]} \frac{\ind\left(L_* \subseteq T^c\right)}{\binom{m_i'}{\a/8}}  \\
&= \sum_{i = 1}^r \frac{\Pr[S|p_{T_i}]}{\binom{2\a}{\a}}\frac{\binom{m - \a/8}{m_i' - \a/8}}{\binom{m_i'}{\a/8}},
\end{split}
\end{equation*}
Here the only difference ends up being that we use $m_i'$ instead of $m_i$ since we have effectively replaced $T$ with $T^c$. However, this replacement only takes place for $q$, the component of the probability that deals with $S$ is identical for both $Q$ and $Q'$. 

Finally, based on these equations, it suffices to show that $\frac{\binom{m - \a/8}{m_i' - \a/8}}{\binom{m_i'}{\a/8}} = \frac{\binom{m - \a/8}{m_i - \a/8}}{\binom{m_i}{\a/8}}$. To do so, since $m_i = |T_i \cap M|$ and $m_i' = |T_i^c \cap M|$, it follows that $m_i + m_i' = m$. Using this, we have that
\begin{equation*}
\begin{split}
\frac{\binom{m - \a/8}{m_i - \a/8}}{\binom{m_i}{\a/8}} &= \frac{\left(m - \a/8 \right)! \left(\a/8\right)!(m_i - \a/8)!}{(m_i  - \a/8)!(m - m_i)!m_i!} \\
&= \frac{(m - \a/8)!(\a/8)!}{m_i'!m_i!}.
\end{split}
\end{equation*}
Applying the same manipulation to $\frac{\binom{m - \a/8}{m_i' - \a/8}}{\binom{m_i'}{\a/8}}$ completes the proof.

\end{proof}

\textbf{Step 4: finishing the overall proof.}

Let $\a$ be a sufficiently large integer. It suffices to show that there exists a probability distribution $p$ with $\frac{1}{9\a} \leq p_\epsilon \leq \frac{2}{3\a}$ such that $m_p(\epsilon, \delta) > \a$. Assume towards a contradiction that no such $p$ exists, meaning that $m_p(\epsilon, \delta) \leq \a$ for all $p$ satisfying the above.  

Let $T \subset [2\a]$ satisfy $T = [2\a]$. By Lemma \ref{lem:p_T_is_regular}, $\frac{1}{9\a} \leq (p_T)_\epsilon \leq \frac{2}{3\a}$. It follows that with probability at least $1- \delta$ over $S \sim p_T^\a$ and  $q \sim A_T(S)$ along with the randomness of $B$, $$\c_q^{-\epsilon} - \epsilon \leq B(S, q) \leq \c_q^{\epsilon} + \epsilon,$$ with $\c_q^{-\epsilon}, \c_q^{\epsilon}$ denoting the appropriate data-copying rates for $q$ with respect to $p$.  

By Lemma \ref{lem:data_copy_bound}, if $S$ covers $T$, then $\c_q^{-\epsilon} \geq \frac{\lambda(1+\epsilon)}{24} = \frac{13\frac{4}{3}}{24} > \frac{2}{3}.$ By Lemma \ref{lem:with_high_probability}, $S$ covers $T$ with probability at least $1 - 4\exp\left(-\frac{\a}{2048}\right).$ Substituting this, we have 
\begin{equation*}
\begin{split}
1 - \delta &\leq \E_{S \sim p_T^\a} \E_{q \sim A_T(S)} \E_B \ind\left(B(S, q) \geq \c_q^{-\epsilon}- \epsilon \right)\\
&= \E_{S \sim p_T^\a} \ind\left(S\text{ does not cover }T\right) + \E_{S \sim p_T^\a}\ind\left(S\text{ covers }T\right)\E_{q \sim A_T(S)}\E_B \ind\left(B(S, q) > \frac{1}{3} \right) \\
&\leq 4\exp\left(-\frac{\a}{2048}\right) + \E_{S \sim p_T^\a}\ind\left(S\text{ covers }T\right)\E_{q \sim A_T(S)}\E_B \ind\left(B(S, q) > \frac{1}{3} \right),
\end{split}
\end{equation*}

with the substitutions for $\c_q^{-\epsilon} - \epsilon$ utilizing that $\epsilon = \frac{1}{3}$. 

Applying this over the distribution, $\mathcal{F}$ (Definition \ref{defn:f}), which comprises of all $(p_T, A_T)$ with $T$ chosen at uniform over all subsets of size $\a$, and then substituting the definition of $Q$ (Definition \ref{defn:q}), we have 
\begin{equation}\label{eqn:final_good}
\begin{split}
1 - \delta -  4\exp\left(-\frac{\a}{2048}\right) &\leq \mathbb{E}_{(p_T, A_T) \sim \mathcal{F}} \mathbb{E}_{S \sim p_T^\a}\mathbb{E}_{q \sim A_T(S)} \ind\left(S\text{ covers }T\right)\mathbb{E}_{B}\ind\left(B(S, q) > \frac{1}{3}\right)\\
&= \mathbb{E}_{(S, q) \sim Q}\ind\left((S, q)\text{ is nice}\right)\mathbb{E}_{B}\ind\left(B(S, q) > \frac{1}{3}\right) \\
&= \frac{1}{1 - \Pr_{(S, q) \sim Q}[(S, q)\text{ is not nice}]}\mathbb{E}_{(S, q) \sim Q_*}\mathbb{E}_{B}\ind\left(B(S, q) > \frac{1}{3}\right) \\
&\leq \mathbb{E}_{(S, q) \sim Q_*}\mathbb{E}_{B}\ind\left(B(S, q) > \frac{1}{3}\right),
\end{split}
\end{equation}
where $Q_*$ denotes the marginal distribution of $Q$ over all nice (Definition \ref{defn:nice_s_q}) pairs $(S, q)$. Note that the manipulation above holds because of Lemma \ref{lem:nice_vs_cover}, which implies that $(S, q)$ is nice if and only if $S$ covers $T$. 

Next, we apply the same exact reasoning to the pair $(p_T, A_T')$. To this end, we have that with probability at least $1 - \delta$ over $S \sim p_T^\a$, $q \sim A_T'(S)$, along with the randomness of $B$, $$\c_q^{-\epsilon} - \epsilon \leq D(S, q) \leq \c_q^{\epsilon} + \epsilon.$$ 

By Lemma \ref{lem:data_copy_bound}, if $S$ covers $T$, then $\c_q^\epsilon = 0$. Applying the same argument as above using Lemma \ref{lem:with_high_probability}, we have that 
\begin{equation*}
\begin{split}
1 - \delta -  4\exp\left(-\frac{\a}{2048}\right)\leq \E_{S \sim p_T^\a}\ind\left(S\text{ covers }T\right)\E_{q \sim A_T'(S)}\E_B \ind\left(B(S, q) \leq \frac{1}{3} \right).
\end{split}
\end{equation*}
Applying this over the distribution $\mathcal{F}'$ (Definition \ref{defn:f}) and using a similar set of manipulations as we did with $\mathcal{F}$ and $Q$, we have that
\begin{equation}\label{eqn:final_prime}
\begin{split}
1 - \delta -  4\exp\left(-\frac{\a}{2048}\right) &\leq \mathbb{E}_{(p_T, A_T') \sim \mathcal{F'}} \mathbb{E}_{S \sim p_T^\a}\mathbb{E}_{q \sim A_T'(S)} \ind\left(S\text{ covers }T\right)\mathbb{E}_{B}\ind\left(B(S, q) \leq \frac{1}{3}\right)\\
&\leq \mathbb{E}_{(S, q) \sim Q_*'}\mathbb{E}_{B}\ind\left(B(S, q) \leq \frac{1}{3}\right),
\end{split}
\end{equation}
where $Q_*'$ denotes the marginal distribution of $Q'$ over nice pairs $(S, q)$. 

Finally, by Lemma \ref{lem:probability_is_same}, $Q_*'$ and $Q_*$ follow the exact same distribution. This means that summing equations \ref{eqn:final_good} and \ref{eqn:final_prime}, we can combine the summands \textit{inside} the expectation giving us that $$2 - 2\delta - 8\exp\left(-\frac{\a}{2048}\right) \leq \E_{(S, q)\ sim Q_*} \E_B \left(\ind\left(B(S, q) > \frac{1}{3}\right)+\ind\left(B(S, q) \leq \frac{1}{3}\right) \right) = 1.$$ This gives a contradiction as this equation is clearly false when $\a$ is sufficiently large (as $\delta = \frac{1}{3}$).

\end{proof}

\end{document}